\documentclass{article}


\usepackage[preprint]{neurips_2024}




\usepackage[utf8]{inputenc} 
\usepackage[T1]{fontenc}    
\usepackage{hyperref}       
\usepackage{url}            
\usepackage{booktabs}       
\usepackage{amsfonts}       
\usepackage{nicefrac}       
\usepackage{microtype}      
\usepackage{makecell}
\usepackage{multirow}

\usepackage{amsmath}

\usepackage{wrapfig}
\usepackage[ruled,vlined,linesnumbered]{algorithm2e}
\usepackage{amsmath}
\usepackage{hyperref}
\usepackage{amsthm}
\usepackage{mathtools}
\usepackage{bm}
\usepackage{graphicx}
\usepackage{makecell}
\usepackage{caption}
\usepackage{subcaption}
\usepackage{float}
\usepackage{comment}

\usepackage{enumitem}
\usepackage{array}
\usepackage{dsfont}
\usepackage{makecell}
\usepackage{multirow}
\usepackage{tabularx}
\usepackage{flafter}

\usepackage{listings}
\usepackage{pbox}
\usepackage{chngpage}
\usepackage{minitoc}
\usepackage[toc,page,header]{appendix}
\usepackage[flushleft]{threeparttable}

\newtheorem*{rep@theorem}{\rep@title}
\newcommand{\newreptheorem}[2]{%
\newenvironment{rep#1}[1]{%
 \def\rep@title{#2 \ref{##1}}%
 \begin{rep@theorem}}%
 {\end{rep@theorem}}}
\makeatother

\newreptheorem{definition}{Definition}
\newtheorem{thm}{Theorem}

\newreptheorem{thm}{Theorem}
\newreptheorem{lemma}{Lemma}

\usepackage[dvipsnames]{xcolor}

\usepackage{pifont}

\definecolor{cycle2}{RGB}{106, 191, 0}
\definecolor{cycle3}{RGB}{191, 0, 0}

\definecolor{amber}{rgb}{1.0, 0.75, 0.0}
\definecolor{awesome}{rgb}{1.0, 0.13, 0.32}
\definecolor{ao(english)}{rgb}{0.0, 0.5, 0.0}

\newcommand{\cmark}{\textcolor{cycle2}{\ding{52}}}
\newcommand{\xmark}{\textcolor{cycle3}{\ding{56}}}

\usepackage{xcolor}
\definecolor{thedarkblue}{RGB}{0,0,120} 
\definecolor{mydarkblue}{rgb}{0,0.08,0.45} 
\definecolor{darkblue}{rgb}{0,0.08,180}
\colorlet{TufteRed}{red!80!black}
\definecolor{theblue}{RGB}{0,0,180}
\colorlet{thered}{TufteRed}
      
\usepackage{microtype}
\usepackage{balance}

\usepackage{booktabs}
\usepackage{tabularx}

\usepackage{amsmath,amssymb,amsthm}

\newcommand{\eat}[1]{\ignorespaces}
\usepackage{comment}

\newcommand{\journal}[1]{} 

\usepackage{tikz}
\usepackage{verbatim}
\usetikzlibrary{arrows}
\usetikzlibrary{shapes,snakes}
\usetikzlibrary{decorations.pathmorphing} 
\usetikzlibrary{fit}					
\usetikzlibrary{backgrounds}	

\usepackage{ragged2e}
\usepackage{multirow}
\usepackage{microtype}
\usepackage{balance}
\usepackage{setspace}

\graphicspath{{./}{./graphics/}}
\newcolumntype{H}{>{\setbox0=\hbox\bgroup}c<{\egroup}@{}}

\newcolumntype{R}[1]{>{\RaggedLeft\arraybackslash}} 
\newcolumntype{L}[1]{>{\RaggedRight\arraybackslash}} 

\AtBeginEnvironment{pmatrix}{\setlength{\arraycolsep}{2pt}}

\DeclareMathOperator*{\argmin}{argmin}

\DeclareMathOperator{\hugeE}{\mbox{\huge\raise-0.3ex\hbox{E}}}
\DeclareMathOperator{\p}{\mathbb{P}}
\DeclareMathOperator{\hugep}{\mbox{\huge\raise-0.3ex\hbox{$\p$}}}






\title{Large Graph Generative Models\vspace{-1ex}}

\author{
Yu Wang$^1$, Ryan A. Rossi$^3$, Namyong Park, Huiyuan Chen, Nesreen K. Ahmed$^4$, \\ \textbf{Puja Trivedi$^2$, Franck Dernoncourt$^3$, Danai Koutra$^2$, Tyler Derr$^1$}
\\
$^1$Vanderbilt University ~~~ $^2$University of Michigan ~~~ $^3$Adobe Research ~~~  $^4$ Intel Labs.\\
\footnotesize\texttt{\{yu.wang.1,tyler.derr\}@vanderbilt.edu}, \footnotesize\texttt{\{ryrossi,dernonco\}@adobe.com}, \\
\footnotesize\texttt{\{pujat,dkoutra\}@umich.edu},\\
\footnotesize\texttt{park.namyong@gmail.com, nesreen.k.ahmed@intel.com, hxc501@case.edu},
}

\begin{document}
\maketitle
\doparttoc
\faketableofcontents

\maketitle

\vspace{-3ex}
\begin{abstract}
Large Generative Models (LGMs) such as GPT, Stable Diffusion, Sora, and Suno are trained on a huge amount of language corpus, images, videos, and audio that are extremely diverse from numerous domains. This training paradigm over diverse well-curated data lies at the heart of generating creative and sensible content. However, all previous graph generative models (e.g., GraphRNN, MDVAE, MoFlow, GDSS, and DiGress) have been trained only on one dataset each time, which cannot replicate the revolutionary success achieved by LGMs in other fields. To remedy this crucial gap, we propose a new class of graph generative model called \textsc{Large Graph Generative Model} (LGGM) that is trained on a large corpus of graphs (over 5000 graphs) from 13 different domains. We empirically demonstrate that the pre-trained LGGM has superior zero-shot generative capability to existing graph generative models. Furthermore, our pre-trained LGGM can be easily fine-tuned with graphs from target domains and demonstrate even better performance than those directly trained from scratch, behaving as a solid starting point for real-world customization. Inspired by Stable Diffusion, we further equip LGGM with the capability to generate graphs given text prompts (Text-to-Graph), such as the description of the network name and domain (i.e., "The power-1138-bus graph represents a network of buses in a power distribution system."), and network statistics (i.e., "The graph has a low average degree, suitable for modeling social media interactions."). This Text-to-Graph capability integrates the extensive world knowledge in the underlying language model, offering users fine-grained control of the generated graphs. We release the code, the model checkpoint, and the datasets at \href{https://lggm-lg.github.io/}{https://lggm-lg.github.io/}.
\end{abstract}

\section{Introduction}\label{sec-intro}
Recently, Large Generative Models (LGMs) such as GPT, Stable Diffusion, Sora, and Suno~\cite{achiam2023gpt, videoworldsimulators2024, rombach2022high, touvron2023llama} have achieved revolutionary success in generating creative and sensible content, which significantly increases the productivity of real-world applications~\cite{beard2009firefly, somepalli2023diffusion, zhang2024artbank}. Unlike previous models such as Bert/Bart~\cite{devlin2018bert, lewis2019bart} in Natural Language Processing (NLP) and Unet~\cite{ronneberger2015u} in Image Segmentation that are trained only on small-scale datasets from specific domains over narrow tasks, the key to the success of these LGMs lies in their large training paradigm over the well-curated training data from a wide variety of domains~\cite{bubeck2023sparks, devlin2018bert, radford2021learning, touvron2023llama}. Graph, as a data modality distinct from image, text, and audio, is ubiquitous across numerous fields and presents a new frontier for applications of generative models such as drug discovery~\cite{hoogeboom2022equivariant, igashov2024equivariant}, material design~\cite{liu2024data, menon2022generative} and cyber-security~\cite{gavrilev2023anomaly, liu2024graph}. Given the unprecedented success achieved by LGMs in other domains and the promising practical usage of graph generative models, we naturally ask: 

\begin{center}
\textbf{\textit{Can we propose large generative models for graph-structured data?}}
\end{center}

\begin{figure*}[htbp!]
     \centering
     \includegraphics[width=1.0\textwidth]{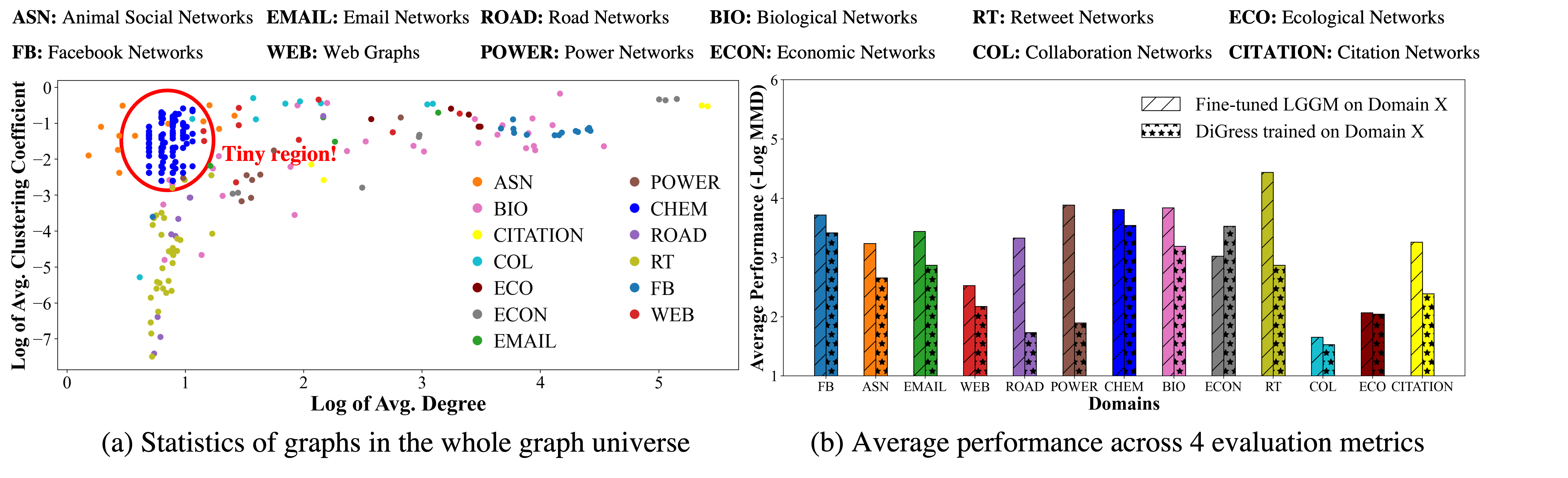}
     \vspace{-3ex}
     \caption{(a): Average degree and clustering coefficient of graphs from 13 domains. The graph universe consists of graphs from distinct domains (e.g., the tiny region of Chemical Graphs), yet there are some common transferrable patterns. (b): Our pre-trained LGGM after fine-tuning on each domain achieves better generative performance than DiGress trained on that same domain.}
    \vspace{-3ex}
     \label{fig-motivation}
\end{figure*}

Although graph generative models have been the long-standing focus of generative-based research~\cite{faez2021deep, guo2022systematic, zhu2022survey}, previous ones have been trained on graphs from only one domain each time. For example, both the representative auto-regressive-based GraphRNN~\cite{you2018graphrnn}, VAE-based GraphVAE~\cite{simonovsky2018graphvae} and diffusion-based DiGress~\cite{vignac2022DiGress} have been trained only on synthetic (Ego, Community, Grid) or chemistry graphs (Enzymes, QM9), the statistics of which only counts a tiny region of the whole graph universe and is different from graphs in other domains, as shown in Figure~\ref{fig-motivation}(a). \textcolor{DarkOrchid}{\textbf{Road Networks}} possess lower average clustering coefficients than \textcolor{RoyalBlue}{\textbf{Facebook Networks (FB)}}. This is because \textcolor{DarkOrchid}{\textbf{Road Networks}}, by design, have square intersections, whereas social relationships in \textcolor{RoyalBlue}{\textbf{Facebook Networks (FB)}} naturally form triangular connections~\cite{rossi2019complex}. Moreover, Tortoise \textcolor{Orange}{\textbf{Animal Social Networks (ASN)}\footnote{Nodes represent tortoises and an edge is set up between two tortoises if they share the same burrow~\cite{sah2019multi}.}} have a lower average degree than \textcolor{Brown}{\textbf{Power Networks}} because tortoises, as solitary creatures, would not share the same burrow~\cite{sah2016inferring}. As a result, graph generative models trained on one domain are hardly generalizable to unseen graphs, as shown by the worse zero-shot generation performance of DiGress in Table~\ref{tab-pretrain-uniform}. More critically, without training on numerous graphs covering the whole graph universe, these small models can never replicate the revolutionary success achieved by LGMs in other fields.

Recognizing the significant gap in developing LGMs for graph-structured data and their potential revolutionary impact similar to LGMs in other fields, we develop the very first \textsc{Large Graph Generative Model} (LGGM) that is pre-trained over 5000 graphs from 13 domains sourcing from the \href{https://networkrepository.com/}{Network Repository} - the interactive data repository collecting graphs from 30 domains~\cite{rossi2015network, rossi2016interactive}. After this pre-training, our LGGM learns some fundamental structural patterns that are transferrable across different domains~\cite{mao2024graph} and henceforth demonstrates significantly better zero-shot generative capability on graphs from unseen domains shown in Table~\ref{tab-pretrain-uniform}. Moreover, the pre-trained LGGM are highly adaptable for fine-tuning on a specific domain, achieving an overall performance increase of 29.59\% compared to the smaller DiGress model trained on the same domain, as depicted in Figure~\ref{fig-motivation}(b). This improvement is even more significant when only limited graphs are available shown by Figure~\ref{fig-ft-varyingnumber}, proving particular advantages in semi-supervised generative settings~\cite{dai2023advdiff, liu2024data, livernoche2023diffusion}. More importantly, our LGGMs support Text-to-Graph generation, which allows finer-level control of the generated graphs (e.g., their domains/names in Table~\ref{tab-text2graph-uniform} and clustering coefficient/average degree in Figure~\ref{fig-text2graph-property}). \textbf{Our contributions are as follows:} 

\begin{itemize}[leftmargin=*]
\item \textbf{Large Graph Generative Model:} We propose a pioneering Large Graph Generative Model (LGGM), trained on thousands of graphs arising from 13 distinct domains. To the best of our knowledge, this work is the very first one exploring the potential of LGMs on graph-structured data. We hope others expand this collection and leverage our work to develop future LGGMs that could eventually replicate the success of Stable Diffusion~\cite{rombach2022high} but in the graph modality.

\item \textbf{Superior Zero-shot and Fine-tuning Generative Capability:} Our pre-trained LGGM delivers exceptional zero-shot generative performance on unseen graphs in Table~\ref{tab-pretrain-uniform} of Section~\ref{sec-pretrain-eval} and shows great adaptability for fine-tuning in Figure~\ref{fig-ft-uniform} of Section~\ref{sec-ft-eval}. Remarkably, the fine-tuned LGGM outperforms DiGress trained from scratch on the same graphs especially under limited data scenarios, behaving as a better starting point for real-world development.

\item \textbf{Text-to-Graph Generation:} We equip the LGGM with the capability to generate graphs given user-specified text prompts, allowing finer-level control of the generated graphs in terms of their domains/names and network statistics.





\end{itemize}

\newpage
\section{Related work}\label{sec-relate-lgm}
\vspace{-1ex}
\subsection{Large Generative Models (LGMs)}
Recent years have witnessed unprecedented success achieved by LGMs in generating creative and sensible content for a variety of downstream tasks across multiple modalities~\cite{achiam2023gpt, videoworldsimulators2024, cao2024survey, touvron2023llama, zhou2023comprehensive}. For instance, in Natural Language Processing (NLP), large language models trained on next-token prediction can effectively produce human-readable texts for completing question-answering, translation, and more tasks~\cite{qin2023chatgpt, tian2024graph,  wang2024knowledge}. Furthermore, the advancement of multi-modal generative models now supports cross-modality generation, such as converting text into images with Stable Diffusion or vice versa with GIT~\cite{rombach2022high, zhang2023llavar, wang2022git}. The key to their success lies in their ability to effectively utilize the world knowledge obtained during the pre-training stage over a large amount of well-curated data. This world knowledge has been demonstrated to be positively transferrable across numerous domains, delivering promising efficacy with few-shot task demonstrations. Compared with the recent large generative models in NLP/CV such as LLaMA3, Falcon, Stable Diffusion, LLaVA~\cite{liu2024visual,  penedo2023refinedweb, rombach2022high, touvron2023llama}, we alternatively focus on developing large generative models for graphs with the expectation to realize a similar set of advantages achieved by LGMs in other fields, including enhanced zero-shot generalibility, improved fine-tuning performance and cross-modality generation.

\vspace{-1ex}
\subsection{Graph Generative Models}\label{sec-relate-ggm}
\vspace{-1ex}
\begin{wraptable}{r}{7.5cm}
\vspace{-5ex}
\tiny
\setlength{\extrarowheight}{.1pt}
\setlength\tabcolsep{2.5pt}
\centering
\caption{Our LGGM is trained across 13 domains on thousands of graphs with the support for Text-to-Graph (T2G) generation, controlling the domain/property of the generated graphs.}
\begin{tabular}{llcccc}
\hline
\multirow{2}{*}{\textbf{Type}} & \multirow{2}{*}{\textbf{Model}} & \multirow{2}{*}{\textbf{\# Domains}} & \multirow{2}{*}{\textbf{\makecell{Multi-Domain \\Training}}} & \multirow{2}{*}{\textbf{\makecell{T2G \\Domain}}} & \multirow{2}{*}{\textbf{\makecell{T2G \\Property}}} \\
 &   &  &  &  \\
\hline
\multirow{3}{*}{\textbf{\makecell[l]{Auto- \\Regressive}}} & GraphRNN~\cite{you2018graphrnn} & 2 & \xmark & \xmark & \xmark \\
 & EdgeRNN~\cite{bacciu2020edge} & 3 & \xmark & \xmark & \xmark\\
 & MolRNN~\cite{popova2019molecularrnn} & 2 & \xmark & \xmark & \xmark\\
\hline
\multirow{4}{*}{\textbf{VAE}} & MDVAE~\cite{du2022interpretable} & 1 & \xmark & \xmark  & \xmark\\
 & PCVAE~\cite{du2021deep, guo2020property} & 3 & \xmark & \xmark & \xmark\\
 &  (DE)CO-VAE~\cite{guo2021generating} & 1 & \xmark & \xmark & \xmark\\
 &  GraphVAE~\cite{simonovsky2018graphvae} & 1 & \xmark & \xmark & \xmark\\
\hline
\multirow{2}{*}{\textbf{GAN}} & Mol-CycleGAN~\cite{maziarka2020mol} & 1 & \xmark & \xmark  & \xmark\\
 & LGGAN~\cite{fan2019conditional} & 2 & \xmark & \xmark  & \xmark\\
 \hline
\multirow{3}{*}{\textbf{Flow}} & GraphNVP~\cite{madhawa2019graphnvp} & 1 & \xmark & \xmark  & \xmark\\
 & MoFlow~\cite{zang2020moflow} & 1 & \xmark & \xmark  & \xmark\\
 & GraphDF~\cite{luo2021graphdf} & 2 & \xmark & \xmark  & \xmark\\
 \hline
\multirow{3}{*}{\textbf{Diffusion}} & GDSS~\cite{jo2022score} & 3 & \xmark & \xmark  & \xmark\\
 & DiGress~\cite{vignac2022DiGress} & 2 & \xmark & \xmark  & \xmark\\
 & GraphEBM~\cite{liu2021graphebm} & 1 & \xmark & \xmark  & \xmark\\
\hline
\multicolumn{2} {c} {\textbf{LGGM - Ours}} & \textbf{13} & \cmark & \cmark & \cmark\\
\hline
\end{tabular}
\vspace{-2ex}
\label{tab-summarize}

\end{wraptable}

Given the ubiquity of graphs in modeling relational information of real-world objects across many domains~\cite{liu2024review, rossi2015network, rossi2019complex, wang2024knowledge}, graph generative models have been developed to generate realistic graphs for advancing numerous applications~\cite{hoogeboom2022equivariant, kang2024diffattack, liu2024data, livernoche2023diffusion}, such as generating molecular graphs with high drug-likeness and designing imperceptible adversarial attacks. Graph generative models can generally be divided into two categories: statistic-based ones~\cite{goldenberg2010survey, kolaczyk2014statistical} and deep learning-based ones~\cite{guo2022systematic, zhu2022survey}. Statistic-based generative models such as Stochastic Block Models~\cite{lee2019review} and Small World Models~\cite{newman2000models} assume that the real-world graph formation adheres to specific statistical rules, and define various sampling strategies to simulate networks with prescribed properties. However, this approach oversimplifies the complex distribution of real-world graphs and struggles to generalize to those deviating from established norms. This limitation has spurred recent research into deep-learning-based generative models that automatically capture intricate statistics by learning to recover graphs~\cite{simonovsky2018graphvae, vignac2022DiGress, you2018graphrnn, zang2020moflow}. Despite their effectiveness, they all focus on a narrow range of domains and are trained solely on a single domain each time. Next, we briefly review representative deep graph generative models in Table~\ref{tab-summarize}.


GraphRNN~\cite{you2018graphrnn}, a pioneering model in autoregressive generation, employs breadth-first search to establish node ordering and sequentially generates nodes and edges. In addition, variational autoencoders~\cite{du2020interpretable, du2022interpretable, du2021deep, guo2020property, simonovsky2018graphvae} were adopted to enable flexible graph generation tailored to specific properties by regularizing the latent variables. Following that, normalizing flows were used to learn invertible mappings between molecular graphs and latent representations (e.g., GraphNVP~\cite{madhawa2019graphnvp} and MoFlow~\cite{zang2020moflow}). More recently following the success of diffusion-based models in images, graph diffusion-based models like GDSS~\cite{jo2022score} and DiGress \cite{vignac2022DiGress}  have emerged, allowing gradually generating graphs from the noise either in the continuous or the discrete space. However, these deep graph generative models have primarily focused on limited domains such as chemistry, social networks, and synthetic graphs, neglecting a vast array of unexplored graphs in other fields. More critically, these models have been historically trained on a single domain each time, mirroring earlier approaches like Unet~\cite{ronneberger2015u} and Bert/Bart~\cite{devlin2018bert, lewis2019bart} in CV and NLP, thus limiting their ability to learn fundamental knowledge transferrable across different domains. In contrast, our work focuses on learning a large generative graph model that is pre-trained on thousands of graphs from 13 domains and we demonstrate that, through extensive experiments, the proposed LGGM successfully replicates the revolutionary achievements gained by the recent LGMs in other fields~\cite{achiam2023gpt, videoworldsimulators2024, rombach2022high, touvron2023llama}.

\newpage

\section{Large Graph Generative Models}\label{sec-lggms}
\subsection{Notation}\label{sec-notation}
\vspace{-1ex}
Let $\mathbb{G}$ be a random variable of universal graphs, governed by its underlying distribution $P(\mathbb{G})$. Given that real-world graphs originate from various domains, we introduce $\mathbb{G}^{c}$ to represent a random variable for graphs from domain $c$, with its distribution as $P(\mathbb{G}^c)$. Assuming the universal graph space encompasses $\mathcal{C}$ distinct domains, i.e., $\mathcal{G}=\cup_{c \in \mathcal{C}}\mathcal{G}^{c}$ with each set of graphs from domain $c$ as $\mathcal{G}^{c}$, then $P(\mathbb{G}^{c})/P(\mathbb{G})$ is domain-specific/agnostic distribution. To ease the introduction of training and evaluation setting in Section~\ref{sec-experiment}, we further divide each domain-specific set of graphs $\mathcal{G}^{c}$ into training, validation and testing subsets, notated as $\mathcal{G}^{c} = \mathcal{G}^{\text{Train}, c}\cup\mathcal{G}^{\text{Val}, c}\cup\mathcal{G}^{\text{Test}, c}$. We represent each graph $G=(\mathbf{X}^G, \mathbf{E}^G)$ with $\mathbf{X}^G\in\mathbb{R}^{n_G\times d_{X}}/\mathbf{E}^G\in\mathbb{R}^{n_G\times n_G\times d_{E}}$ as the one-hot encoding matrix representing node/edge categories with $n_G$ being the number of nodes in graph $G$ and $d_{X}/d_{E}$ being the number of node/edge categories, considering the edge existence as a particular edge category. In Text-to-Graph generation, each graph $G$ is paired with a textual description $S$ from the textual distribution $P(\mathbb{S})$ and their joint distribution is $P(\mathbb{G}, \mathbb{S})$.


\subsection{Large Graph Corpus}\label{sec-corpus}
\vspace{-1ex}
Training LGGM requires a substantial, well-curated collection of graphs from multiple domains. We select graphs from the Network Repository across 13 distinct yet representative domains covering a wide variety of real-world scenarios, including Facebook (FB), Animal Social (ASN), Email, Web, Road, Power, Chemical (CHEM), Biological (BIO), Economic (ECON), Retweet (RT), Collaboration (COL), Ecological (ECO), Citation, as shown in Figure~\ref{fig-framework}(a). Given that many real-world graphs (e.g., social networks and road networks) comprise thousands or even millions of nodes and edges, and that state-of-the-art diffusion models, e.g., DiGress and GDSS, are limited to handling networks with only hundreds of nodes, we further sample subgraphs for certain domains to address scalability challenges. Specifically, we generate 2/3-hop ego subgraphs centered on multiple randomly chosen nodes followed by taking their induced subgraphs. We apply this strategy iteratively across all the initially collected graphs until hitting the preset budget. Appendix~\ref{app-graph-data} presents the graph statistics.

\subsection{Pre-Training and Graph Generation of LGGM}\label{sec-pretrain}
\vspace{-1ex}
Our LGGM is designed based on discrete denoising diffusion~\cite{austin2021structured, chen2023efficient, vignac2022DiGress}, which is composed of a diffusion forward process based on a transition matrix and a reverse prediction process based on minimizing the cross-entropy loss between the ground-truth graphs and the predicted clean graphs.

During the forward process, for each graph $G$ sampled from the joint distribution $P(\mathbb{G})$, we obtain its noisy version $G^{t} = (\mathbf{X}^{t}, \mathbf{E}^{t})$ at step $t$ by sampling from the conditional categorical distribution:
\begin{equation}\label{eq-forward}
q(\mathbb{G}^{t} | \mathbb{G}^{t-1}) = (\mathbf{X}^{t-1} \mathbf{Q}^{t}_{X}, \mathbf{E}^{t-1} \mathbf{Q}^{t}_{E}) \quad \text{and} \quad q(\mathbb{G}^{t} | \mathbb{G}^{0}) = (\mathbf{X} \bar{\mathbf{Q}}^{t}_{X}, \mathbf{E} \bar{\mathbf{Q}}^{t}_{E}),
\end{equation}
where $\mathbf{Q}^{t}_X \in \mathbb{R}^{d_{X}\times d_{X}}$ and $\mathbf{Q}^{t}_E \in \mathbb{R}^{d_{E}\times d_{E}}$ are node/edge transition matrices and $\mathbb{G}^{0} = \mathbb{G}$ is the original data distribution of graphs. Depending on whether our generative downstream tasks require generalization to unseen domains or not, we can either use different transition matrices for graphs from different domains, i.e., domain-specific transition matrix $\mathbf{Q}_{X}^{t, c} = \alpha^{t}\mathbf{I} + (1 - \alpha^{t})\mathbf{1}\mathbf{m}_{X}^{c}, \mathbf{m}_{X}^c = \frac{1}{|\mathcal{G}^{\text{Train}, c}|}\sum_{G\in \mathcal{G}^{\text{Train}, c}}\mathbf{X}^G, \forall c \in C$ or unify transition matrices across different domains. For the unified transition matrices, we can trivially use the uniform transition matrix, i.e. $\mathbf{Q}_X^{t, c}=\alpha^{t}\mathbf{I} + (1 - \alpha^t)(\mathbf{1}_{d_{\mathbf{X}}}\mathbf{1}^{\top}_{d_{\mathbf{X}}})/{d_{\mathbf{X}}}$, or compute the marginal transition matrix across all graphs from all domains $\mathbf{Q}_{X}^{t} = \alpha^{t}\mathbf{I} + (1 - \alpha^{t})\mathbf{1}\mathbf{m}_{X}, \mathbf{m}_{X}=\frac{1}{|\mathcal{G}^{\text{Train}}|}\sum_{G\in\mathcal{G}^{\text{Train}}}\mathbf{X}^{G}$. And $\mathbf{Q}^{t}_{E}$ can be computed similarly. We validate the advantages of LGGMs under both of these two transition strategies in Appendix~\ref{app-expr-full}.

In the reverse process, a parametrized neural network is trained to predict the clean graph given the noisy graph sampled following Eq.~\eqref{eq-forward} by optimizing the following loss:
\begin{equation}\label{eq-loss}
    \boldsymbol{\Theta}^{\star} = \argmin_{\boldsymbol{\Theta}} \mathcal{L} = \mathbb{E}_{G \sim P(\mathbb{G})}\mathbb{E}_{t \sim \mathcal{T}}\mathbb{E}_{G^t \sim q(\mathbb{G}^{t}|\mathbb{G})}(-\log p_{\boldsymbol{\Theta}}(G|G^t)).
\end{equation}

Following~\cite{vignac2022DiGress}, we combine the learned $P_{\boldsymbol{\Theta}^{\star}}(\mathbb{G}|\mathbb{G}^t)$ and the closed-form posterior $P(\mathbb{G}^{t-1}|\mathbb{G}^{t}, \mathbb{G})$ to perform backward generation by sampling from the following distribution:
\begin{equation}\label{eq-generate}
    P(\mathbb{G}^{t-1}|\mathbb{G}^{t}) \propto \sum_{\mathbb{G}}P(\mathbb{G}^{t-1}|\mathbb{G}^{t}, \mathbb{G})P_{\boldsymbol{\Theta}^{*}}(\mathbb{G}|\mathbb{G}^{t}).
\end{equation}

\begin{figure*}[htbp!]
     \centering
     \includegraphics[width=1.0\textwidth]{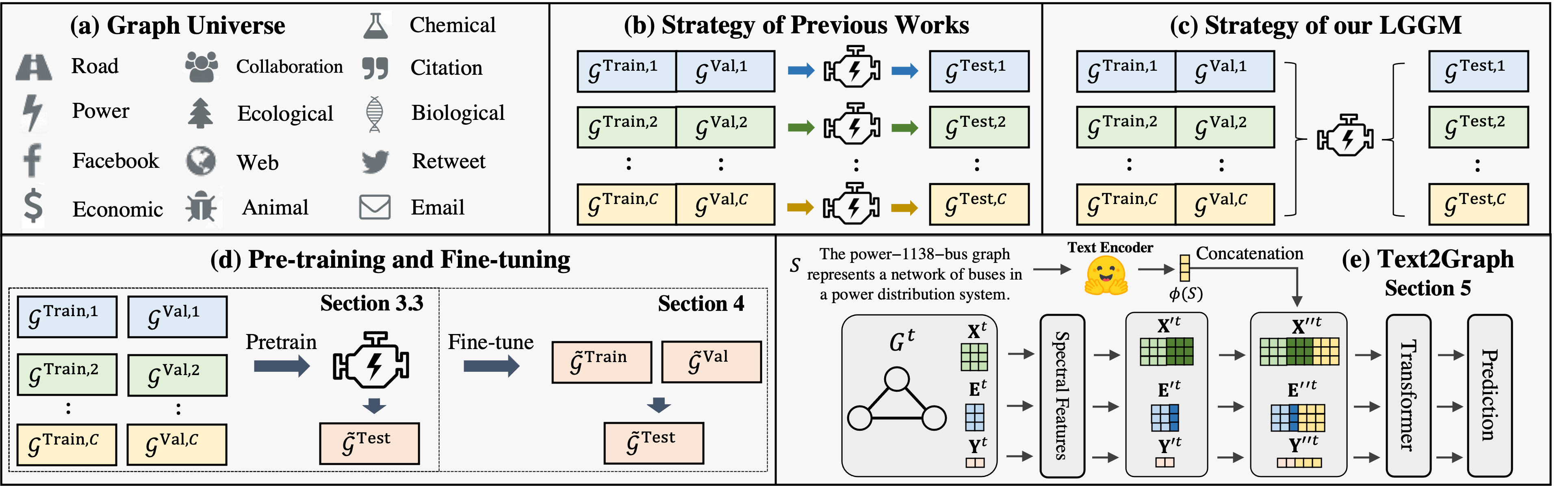}
     \caption{The overview of LGGM framework and experimental settings. (a): Graph universe including our collected 13 distinct yet representative domains. (b)-(c): Compared with all previous graph generative models that have been trained only on one domain each time, our LGGM is trained on thousands of graphs from 13 domains. (d): We pre-train/fine-tune LGGM in Section~\ref{sec-pretrain}/\ref{sec-ft}. (e): Given the text prompt $S$ and the current generated graph at $t$, we concatenate its textual embedding obtained from a pre-trained language model with the node/edge/graph embeddings after spectral feature extraction and forward them through the Graph Transformer to predict the clean graph.}
     \label{fig-framework}
\end{figure*}

\section{Fine-tuning LGGM}\label{sec-ft}

In many real-world applications, the graphs of interest $\widetilde{\mathcal{G}}$ may highly likely come from completely unseen domains, i.e., $\widetilde{\mathcal{G}} \cap \mathcal{G} = \emptyset$, and their corresponding distribution may also be significantly different from the pre-trained one, i.e., $P(\widetilde{\mathbb{G}})\ne P(\mathbb{G})$ as shown by comparing CHEM and FB Networks in Figure~\ref{fig-motivation}(a). In this case, we further fine-tune our pre-trained LGGM based on the observed graphs $\widetilde{\mathcal{G}}$ from the unseen domains:
\begin{equation}\label{eq-ft-loss}
    \boldsymbol{\Theta}^{\star\star} = \argmin_{\boldsymbol{\Theta}} \mathcal{L} = \mathbb{E}_{\widetilde{G} \sim P(\widetilde{\mathbb{G}})}\mathbb{E}_{t \sim \mathcal{T}}\mathbb{E}_{\widetilde{G}^{t} \sim q(\widetilde{\mathbb{G}}^{t}|\widetilde{\mathbb{G}})}(-\log p_{\boldsymbol{\Theta}}(\widetilde{G}|\widetilde{G}^t)),
\end{equation}
where $\boldsymbol{\Theta}^{\star\star}$ is initialized as $\boldsymbol{\Theta}^{\star}$ from the pretaining phase in Eq~\eqref{eq-loss}. After fine-tuning, our LGGM can effectively adapt to unseen distributions by using both the prior knowledge from the pre-training stage and the specific knowledge of new graphs from the unseen domains, as verified in Figure~\ref{fig-ft-uniform}.


Despite the superior capabilities of LGGM in generating graphs after both pre-training and fine-tuning processes, they essentially mimic the random sampling from the learned distribution $P(\mathbb{G})$ that is prescribed by the training data without any fine-level customization. To control the characteristics of the generated graphs, we further propose the very first Text-to-Graph LGGM to generate graphs based on textual description. In this way, users could specify their desired properties of the graphs through natural language description, thereby guiding the graph generation in a more tailored manner.

\section{Text-to-Graph LGGM}\label{sec-Text2Graph}
Given the textual description $S$ about the network to be generated, our goal here is to learn $P(\mathbb{G}^{t - 1}|\mathbb{G}^t, \mathbb{S})$, which is further decomposed as:
\begin{equation}\label{eq-generate-text}
    P(\mathbb{G}^{t-1}|\mathbb{G}^{t}, \mathbb{S}) \propto \sum_{\mathbb{G}}P(\mathbb{G}^{t-1}|\mathbb{G}^{t}, \mathbb{G}, \mathbb{S})P(\mathbb{G}|\mathbb{G}^{t}, \mathbb{S}).
\end{equation}
Theorem~\ref{thm-text2graph} proves that if the transition matrices $\mathbf{Q}_X^t, \mathbf{Q}_E^t$ in Eq.~\eqref{eq-forward} are independent of the textual description $S$, the first term $P(\mathbb{G}^{t - 1}|\mathbb{G}^t, \mathbb{G}, \mathbb{S})$ can then be simplified as $P(\mathbb{G}^{t - 1}|\mathbb{G}^t, \mathbb{G})$ with the analytical form computation~\cite{vignac2022DiGress}. For the second term, we approximate it by a neural network, i.e., $P(\mathbb{G}|\mathbb{G}^{t}, \mathbb{S}) = P_{\boldsymbol{\Theta}^{\blacktriangle}}(\mathbb{G}|\mathbb{G}^{t}, \mathbb{S})$ with $\boldsymbol{\Theta}^{\blacktriangle}$ being optimized by:
\begin{equation}\label{eq-text2graph-loss}
    \boldsymbol{\Theta}^{\blacktriangle} = \argmin_{\boldsymbol{\Theta}} \mathcal{L} = \mathbb{E}_{(G, S) \sim P(\mathbb{G}, \mathbb{S})}\mathbb{E}_{t \sim \mathcal{T}}\mathbb{E}_{G^{t} \sim q(\mathbb{G}^{t}|\mathbb{G})}(-\log p_{\boldsymbol{\Theta}}(G|G^t, \phi(S))),
\end{equation}
where $\phi$ is a pre-trained textual encoder. Figure~\ref{fig-framework}(e) shows the architecture of LGGM-Text2Graph, which firstly integrates the textual embedding $\phi(S)$ into the node/edge/graph-level latent embeddings after spectral feature extraction of the current generated graph and further predicts the clean graph. Theorem~\ref{thm-elbo} proves that modeling $P(\mathbb{G}^{t - 1}|\mathbb{G}^t, \mathbb{S})$ with $P_{\boldsymbol{\Theta}^{\blacktriangle}}(\mathbb{G}^{t - 1}|\mathbb{G}^t, \mathbb{S})$ leads to higher evidence lower bound of the likelihood $\log P(\mathbb{G}^0, \mathbb{S})$.

\newpage

Training $p_{\boldsymbol{\Theta}}(G|G^t, \phi(S))$ in Eq~\eqref{eq-text2graph-loss} requires the joint distribution between graphs and their corresponding textual descriptions, i.e., $P(\mathbb{G}, \mathbb{S})$. Given users' specific interests in the graphs to generate, we explore two main categories of textual prompts to guide graph generation: domain/name (e.g., Power Network, power-1138-bus) and structural characteristics (e.g., average degree, clustering coefficient). For example, zoologists interested in the dynamics of tortoise interactions might seek to generate Animal Social Networks~\cite{sosa2021animal}, and social scientists studying social anomalies might prioritize generating social interactions with dense and unexpected connections~\cite{ma2021comprehensive}. Since this work is a pioneering effort in Text-to-Graph generation and no prior collection of user prompts for this purpose exists, following previous works, e.g., LLaVA~\cite{liu2024visual, zhang2023enhanced}, we ask GPT3.5/4 to emulate the human drafting of prompts to obtain pairs of (user prompt, graph). For preparing the graphs with user prompts about their domains/names, we obtain the domain/name information of each graph directly from the Network Repository~\cite{rossi2015network} and prompt GPT3.5 to generate the human-readable description paired with the corresponding graph. See more details in Appendix~\ref{app-graph-name-data}. For preparing the graphs with user prompts about their average clustering coefficient/degree, instead of using graphs from Network Repository that only count partially of the entire graph universe (i.e., no existing graphs there cover the area with high average degree and low average clustering coefficient in Figure~\ref{fig-motivation}(a)), we use the Watts–Strogatz small-world graph model~\cite{watts1998collective} to synthesize graphs covering the full spectrum of the graph universe. After that, we calculate the average degree and clustering coefficient for each graph and prompt GPT4 to generate textual descriptions about these networks using their statistics. See more details in Appendix~\ref{app-graph-property-data}. We also employ t-SNE visualization~\cite{van2008visualizing} to analyze the generated textual descriptions, as shown in Figure~\ref{fig-tsne-prompt}. This visualization indicates that texts describing graphs from various domains or with distinct statistics tend to form separate clusters, a necessary condition for the successful control of the generated graphs.

\vspace{-1ex}
\section{Experiments}\label{sec-experiment}
\vspace{-1ex}
\subsection{Experimental Setup}\label{sec-expr-setup}
In this section, we conduct four experiments over the graphs collected from 13 domains to demonstrate the effectiveness of LGGMs in four different aspects, the details of which are summarized as follows:

\begin{itemize}[leftmargin = *]
    \item \textbf{Pre-training Evaluation in Table~\ref{tab-pretrain-uniform} in Section~\ref{sec-pretrain-eval}:} To demonstrate the superior zero-shot performance of LGGM in generating unseen graphs compared to conventional graph generative models, we adopt the out-of-distribution evaluation where we iteratively treat each domain X as the unseen one and train the LGGM using training graphs from all other domains, and evaluate its performance on the testing graphs from the unseen domain X. The variant of LGGM in this experiment is called LGGM-X where X represents the unseen domain.

    \item \textbf{Fine-tuning Evaluation in Figure~\ref{fig-ft-uniform} in Section~\ref{sec-ft-eval}:} To demonstrate the high adaptability for fine-tuning LGGM, we further fine-tune the above pre-trained LGGM. Specifically, we take LGGM-X pre-trained on graphs from all other domains but domain X, and then fine-tune it on the training graphs from domain X. After that we evaluate it on the testing graphs from domain X. The variant of LGGM in this experiment is called Fine-tuned LGGM on X. 

    \item \textbf{Text-to-Graph Generation in Table~\ref{tab-text2graph-uniform} and Figure~\ref{fig-text2graph-property} in Section~\ref{sec-Text-to-Graph-eval}:} To control the graph generation, we consider two types of user prompt information: the domain/name and the graph properties, i.e., we train LGGM on training graphs from all domains with user prompts either describing the graph domains/names or graph statistics. We call these two variants of LGGM as LGGM-T2G$^{\text{D}}$ and LGGM-T2G$^{\text{UP}}$, respectively.

    \item \textbf{Fine-tuned LGGM compared with DiGress trained directly on X in Figure~\ref{fig-motivation}(b)/\ref{fig-ft-varyingnumber} in Section~\ref{sec-train-from-scratch}:} When having access to graphs of domain X, users could directly train existing graph generative models and generate graphs for the domain X. To demonstrate the practical usage of LGGMs, we further compare the fine-tuned LGGM on X with DiGress directly trained on X. In addition, we also compare their performance under limited data scenarios~\cite{gavrilev2023anomaly, liu2024data}.    
\end{itemize}

Figure~\ref{fig-experiment-setup} in Appendix~\ref{app-expr-setup} comprehensively illustrates each of the above training paradigms. Due to the page limitation, we present the evaluation metrics and model hyperparameters in Appendix~\ref{app-expr}. Moreover, we only present results under the uniform transition strategy in the main paper while leaving the one under domain-specific transition strategy in Appendix~\ref{app-expr-full}. It is important to note that the benefits of LGGMs are consistent across both of these two transition strategies.

\begin{table*}[t!]
\scriptsize
\setlength\tabcolsep{5.5pt}
\caption{Comparing Zero-shot Generative Performance on unseen Graphs in held-out domain X between DiGress trained on QM9 and LGGM-X trained on all except the held-out domain X. Result "ALL" is computed by averaging across 12 domains and the best result for each domain is in \textbf{bold}.}
\label{tab-pretrain-uniform}
\centering
\begin{tabular}{llcccc|llcccc}
\toprule
\textbf{Domain} & \textbf{Method} & \textbf{DEG} & \textbf{CC} & \textbf{Spec} & \textbf{Orb} & \textbf{Domain} & \textbf{Method} & \textbf{DEG} & \textbf{CC} & \textbf{Spec} & \textbf{Orb} \\
\toprule
\multirow{2}{*}{\textsc{FB}} & DiGress & \textbf{0.3376} & \textbf{0.6298} & \textbf{0.0797} & \textbf{0.3593} & \multirow{2}{*}{BIO} & DiGress & 0.2712 & 0.5202 & 0.1127 & 0.3188 \\
 & LGGM-X & 0.4723 & 0.6843 & 0.2924 & 0.7555 & & LGGM-X & \textbf{0.1081} & \textbf{0.2696} & \textbf{0.0900} & \textbf{0.2053}\\
 \midrule
\multirow{2}{*}{\textsc{ASN}} & DiGress & 0.1496 & 0.3258 & 0.1506 & 0.4420 & \multirow{2}{*}{\textsc{ECON}} & DiGress & 0.2987 & 0.4841 & 0.2162 & 0.3834 \\
 & LGGM-X & \textbf{0.0281} & \textbf{0.2440} & \textbf{0.0830} & \textbf{0.0618} & & LGGM-X & \textbf{0.1213} & \textbf{0.0920} & \textbf{0.1120} & \textbf{0.1086} \\
 \midrule
\multirow{2}{*}{\textsc{Email}} & DiGress & 0.2192 & 0.6012 & \textbf{0.0702} & 0.3416 & \multirow{2}{*}{\textsc{RT}} & DiGress & 0.4164 & \textbf{0.1327} & 0.4147 & 0.5957 \\
 & LGGM-X & \textbf{0.0751} & \textbf{0.2364} & 0.0768 & \textbf{0.3089} & & LGGM-X & \textbf{0.0525} & 0.1429 & \textbf{0.1330} & \textbf{0.2219} \\
 \midrule
\multirow{2}{*}{\textsc{Web}} & DiGress & 0.2556 & 0.6186 & 0.1877 & 0.6045 & \multirow{2}{*}{\textsc{Col}} & DiGress & 0.2473 & 0.5826 & 0.2314 & 0.7679 \\
 & LGGM-X &  \textbf{0.0648} & \textbf{0.3961} & \textbf{0.0549} & \textbf{0.1127} & & LGGM-X & \textbf{0.0736} & \textbf{0.5769} & \textbf{0.0895} & \textbf{0.0988}\\
 \midrule
\multirow{2}{*}{\textsc{ROAD}} & DiGress & 0.3705 & 0.8226 & 0.2801 & 0.7198 & \multirow{2}{*}{\textsc{Eco}} & DiGress & 0.5431 & 0.7915 & \textbf{0.2338} & 0.6045 \\
 & LGGM-X & \textbf{0.0713} & \textbf{0.2193} & \textbf{0.0987} & \textbf{0.2986} & & LGGM-X & \textbf{0.4753} & \textbf{0.3904} & 0.3194 & \textbf{0.3934}  \\
 \midrule
\multirow{2}{*}{\textsc{Power}} & DiGress & 0.3726 & 0.4582 & 0.3270 & 1.4732 & \multirow{2}{*}{\textsc{Citation}} & DiGress & 0.2527 & 0.7790 & 0.1315 & \textbf{0.4966}\\
 & LGGM-X & \textbf{0.0119} & \textbf{0.1293} & \textbf{0.0373} & \textbf{0.0754} & & LGGM-X & \textbf{0.1348} & \textbf{0.7257} & \textbf{0.1160} & 0.4981 \\
 \midrule
 \multirow{2}{*}{\textsc{All}} & DiGress & 0.3112 & 0.5622 & 0.2030 & 0.5923\\
 & LGGM-X & \textbf{0.1408} & \textbf{0.3422} & \textbf{0.1253} & \textbf{0.2616} \\
 \bottomrule
\end{tabular}
\begin{tablenotes}
      \scriptsize
      \centering
      \item \textbf{DEG}, \textbf{CC}, \textbf{Spec}, \textbf{Orb}: MMD of Degree, Clustering Coefficient, Eigenvalues, and Orbits, more details are in Appendix~\ref{app-metrics}.
\end{tablenotes}
\vspace{-2ex}
\end{table*}

\begin{figure*}[t!]
        \centering
        \begin{subfigure}[b]{0.475\textwidth}  
            \centering 
            \includegraphics[width=\textwidth]{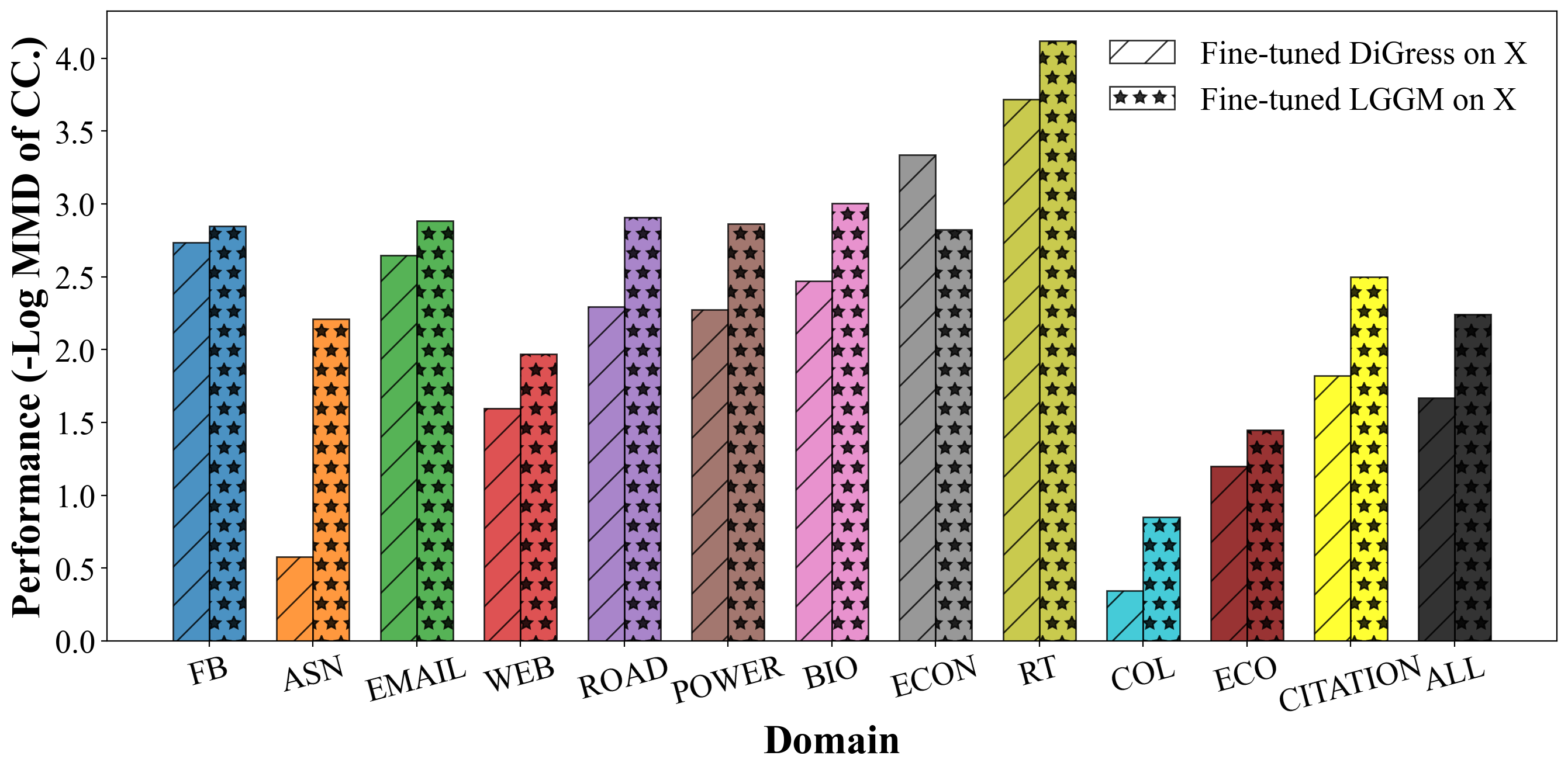}
            \caption{Performance of MMD of CC.}
            \label{fig-finetune-cc}
        \end{subfigure}
        \begin{subfigure}[b]{0.475\textwidth}   
            \centering 
            \includegraphics[width=\textwidth]{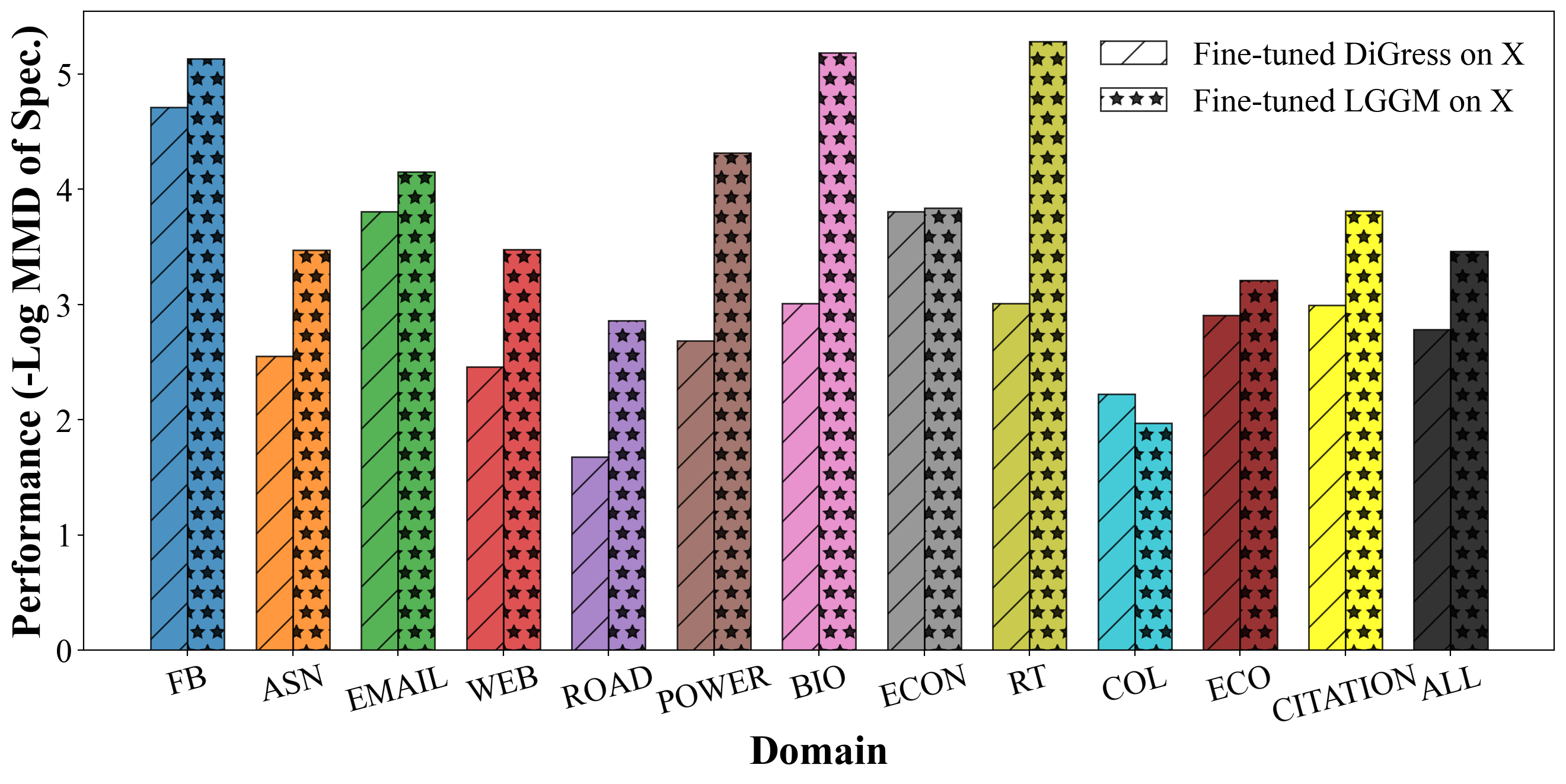}
            \caption{Performance of MMD of Spec.}
            \label{fig-finetune-spec}
        \end{subfigure}
        \caption{Performance comparison between Fine-tuned LGGM and Fine-tuned DiGress.} 
        \vspace{-1ex}
        \label{fig-ft-uniform}
    \end{figure*}

\vspace{-2ex}
\subsection{Pre-training Evaluation}\label{sec-pretrain-eval}
Table~\ref{tab-pretrain-uniform} compares the performance of our model, LGGM-X, pre-trained on all graph domains except the held-out domain X, with DiGress trained on the QM9 dataset. Both of them are evaluated over graphs from the unseen domain X. Overall, LGGM-X outperforms DiGress across all evaluation metrics shown by the "ALL" result. This superiority suggests that training on graphs from diverse domains captures transferable structural patterns and enhances the generalization of the model to unseen domains. The only exception from this trend occurs with Facebook Networks (FB) where our LGGM-X performs uniformly worse than DiGress across all evaluation metrics. This is because Facebook Networks (FB) only count a tiny region among the whole graph universe. As illustrated in Figure~\ref{fig-motivation}(a), the average clustering coefficient of FB graphs ranges from 0.301 to 0.407, a narrow segment within the broader global graph spectrum spanning from 0 to 1. This narrow range poses a challenge for the generalized LGGM-X to specialize in learning the graph data distribution specific to the FB domain. Furthermore, we conduct the same experiment but under the domain-specific transition strategy in Table~\ref{app-tab-pretrain-marginal} in Appendix~\ref{app-expr-full}, and similarly, LGGM-X generally outperforms DiGress. 

\vspace{-1ex}
\subsection{Fine-tuning Evaluation}\label{sec-ft-eval}
\vspace{-1ex}
In addition to the superior zero-shot generative performance of pre-trained LGGM-X, many real-world applications already possess exemplary graphs that can be leveraged, e.g., different types of anomaly behaviors in social networks/e-commerce platforms, and molecules with predefined chemical structures in drug discovery. In these scenarios, users can fine-tune LGGM-X with these domain-specific graphs, adapting the broadly trained model to specialize in generating graphs tailored to target domains. Figure~\ref{fig-ft-uniform} compares the generative performance of fine-tuned DiGress on X that is originally pre-trained on QM9 and fine-tuned LGGM-X on X that is originally pre-trained on all but domain X. We can see that LGGM-X consistently outperforms DiGress for graphs from most of the domains, which further validates the adaptability of LGGM after fine-tuning on a specific domain.

\begin{table*}[t!]
\scriptsize
\setlength\tabcolsep{4.5pt}
\caption{Comparing the Graph Generative Performance of LGGM with/without Text Conditions. Best and runner-up results are \textbf{bolded} and \underline{underlined}.}
\vspace{-1.5ex}
\label{tab-text2graph-uniform}
\centering
\begin{tabular}{llcccc|llcccc}
\toprule
\textbf{Domain} & \textbf{Method} & \textbf{DEG} & \textbf{CC} & \textbf{Spec} & \textbf{Orb} & \textbf{Domain} & \textbf{Method} & \textbf{DEG} & \textbf{CC} & \textbf{Spec} & \textbf{Orb} \\
\toprule
\multirow{3}{*}{\textsc{FB}} & LGGM & \underline{0.0321} & 0.4994 & 0.0763 & 0.3117 & \multirow{3}{*}{\textsc{BIO}} & LGGM & 0.2661 & 0.3120 & 0.1135 & 0.3835\\
& LGGM-T2G$^{\text{D}}$ & 0.1561 & \underline{0.1639} & \underline{0.0924} & \underline{0.0417} & & LGGM-T2G$^{\text{D}}$ & \underline{0.0099} & \underline{0.1286} & \underline{0.0303} & \underline{0.1366}\\
 & LGGM-T2G$^{\text{UP}}$ & \textbf{0.0050} & \textbf{0.0545} & \textbf{0.0070} & \textbf{0.0251} & & LGGM-T2G$^{\text{UP}}$ & \textbf{0.0028} & \textbf{0.0287} & \textbf{0.0236} & \textbf{0.0174}\\
 \midrule
\multirow{3}{*}{\textsc{ASN}} & LGGM & 0.1511 & 0.4325 & 0.1875 & 0.3896 & \multirow{3}{*}{\textsc{ECON}} & LGGM & 0.3828 & 0.1533 & 0.2039 & 0.2583\\
& LGGM-T2G$^{\text{D}}$ & \underline{0.0318} & \underline{0.2821} & \underline{0.0606} & \underline{0.0631} & & LGGM-T2G$^{\text{D}}$ & \underline{0.0666} & \underline{0.0594} & \underline{0.0650} & \underline{0.0586}\\
 & LGGM-T2G$^{\text{UP}}$ & \textbf{0.0211} & \textbf{0.1191} & \textbf{0.0462} & \textbf{0.0195} & & LGGM-T2G$^{\text{UP}}$ & \textbf{0.0132} & \textbf{0.0257} & \textbf{0.0053} & \textbf{0.0191} \\
 \midrule
\multirow{3}{*}{\textsc{Email}} & LGGM & 0.2156 & 0.2450 & 0.0666 & 0.2757 & \multirow{3}{*}{\textsc{RT}} & LGGM & 0.4395 & 0.2225 & 0.4337 & 0.6641 \\
& LGGM-T2G$^{\text{D}}$ & \underline{0.0469} & \underline{0.0982} & \underline{0.0484} & \underline{0.0505} & & LGGM-T2G$^{\text{D}}$ & \underline{0.0468} & \underline{0.0955} & \underline{0.0729} & \underline{0.0393}\\
 & LGGM-T2G$^{\text{UP}}$ & \textbf{0.0073} & \textbf{0.0379} & \textbf{0.0127} & \textbf{0.0437} & & LGGM-T2G$^{\text{UP}}$ & \textbf{0.0286} & \textbf{0.0933} & \textbf{0.0400} & \textbf{0.0312}\\
 \midrule
\multirow{3}{*}{\textsc{Web}} & LGGM & 0.2725 & 0.2672 & 0.1900 & 0.4368 & \multirow{3}{*}{\textsc{Col}} & LGGM  & 0.3565  & 0.3554 & 0.2451 & 0.7874\\
& LGGM-T2G$^{\text{D}}$ & \underline{0.0255} & \textbf{0.0737} & \underline{0.0354} & \underline{0.1856} & & LGGM-T2G$^{\text{D}}$ & \underline{0.0395} & \underline{0.3110} & \underline{0.1146} & \underline{0.1823}\\
& LGGM-T2G$^{\text{UP}}$ & \textbf{0.0105} & \underline{0.0941} & \textbf{0.0206} & \textbf{0.0451} & & LGGM-T2G$^{\text{UP}}$ & \textbf{0.0265} & \textbf{0.2813} & \textbf{0.0895} & \textbf{0.0899}\\
 \midrule
\multirow{3}{*}{\textsc{ROAD}} & LGGM & 0.4825 & 0.5373 & 0.3398 & 0.7542 & \multirow{3}{*}{\textsc{Eco}} & LGGM & 0.5466 & 0.6003 & 0.2257 & 0.7089 \\
& LGGM-T2G$^{\text{D}}$ & \textbf{0.0088} & \underline{0.1225} & \underline{0.0399} & \underline{0.0155} & & LGGM-T2G$^{\text{D}}$ & \underline{0.2160} & \underline{0.2917} & \underline{0.1203} & \underline{0.2569}\\
 & LGGM-T2G$^{\text{UP}}$ & \underline{0.0177} & \textbf{0.0437} & \textbf{0.0336} & \textbf{0.0086} & & LGGM-T2G$^{\text{UP}}$ &  \textbf{0.0293} & \textbf{0.2885} & \textbf{0.0416} & \textbf{0.2556}  \\
 \midrule
\multirow{3}{*}{\textsc{Power}} & LGGM & 0.4394 & 0.4646 & 0.3473 & 1.3186 & \multirow{3}{*}{\textsc{Citation}} & LGGM & 0.2624 & 0.5374 & 0.1295 & 0.3419\\
& LGGM-T2G$^{\text{D}}$ & \underline{0.0162} & \underline{0.1131} & \underline{0.0479} & \underline{0.1786} & & LGGM-T2G$^{\text{D}}$ & \underline{0.0101} & \underline{0.1025} & \underline{0.0315} & \underline{0.0651}\\
& LGGM-T2G$^{\text{UP}}$ & \textbf{0.0062} & \textbf{0.0570} & \textbf{0.0111} & \textbf{0.0084} & & LGGM-T2G$^{\text{UP}}$ & \textbf{0.0072} & \textbf{0.0849} & \textbf{0.0115} & \textbf{0.0287}\\
 \midrule
\multirow{3}{*}{\textsc{All}} & LGGM & 0.3206 & 0.3856 & 0.2132 & 0.5526  \\
 & LGGM-T2G$^{\text{D}}$ & \underline{0.0562} & \underline{0.1535} & \underline{0.0633} & \underline{0.1061}  \\
 & LGGM-T2G$^{\text{UP}}$ & \textbf{0.0146} & \textbf{0.1007} & \textbf{0.0286} & \textbf{0.0494}  \\
 \bottomrule
\end{tabular}
\vspace{-2ex}
\end{table*}

\begin{figure}[t!]
     \centering
     \vspace{-1ex}
     \includegraphics[width=1\textwidth]{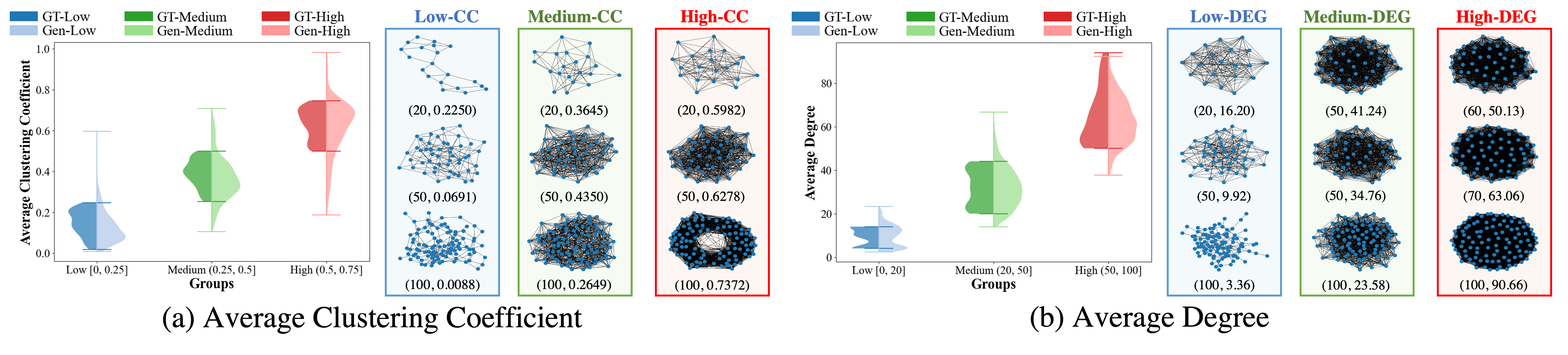}
     \vspace{-3ex}
     \caption{Text-to-Graph Generation with Prescribed Graph Properties. (a) Controlling Average Clustering Coefficient; (b) Controlling Average Degree. GT-Ground Truth Graphs and Gen-Generated Graphs. Below each graph, the number of nodes and key statistical measures are displayed.}
     \label{fig-text2graph-property}
     \vspace{-2ex}
\end{figure}

\vspace{-4ex}
\subsection{Text-to-Graph Generation}\label{sec-Text-to-Graph-eval}
\vspace{-1ex}
Here we integrate Text-to-Graph (T2G) generation into LGGMs. We introduce two variants: LGGM-T2G$^{\text{D}}$, which utilizes domain labels such as "Power Networks" as textual descriptions, and LGGM-T2G$^{\text{UP}}$, which utilizes user prompts from GPT3.5, like "The power-1138-bus graph represents a network of buses in a power distribution system". Table~\ref{tab-text2graph-uniform} compares the basic LGGM trained without text conditions, against LGGM-T2G$^{\text{D}}$ and LGGM-T2G$^{\text{UP}}$. Firstly, we observe a significant performance improvement from LGGM to LGGM-T2G$^{\text{D}}$/LGGM-T2G$^{\text{UP}}$. The inclusion of text descriptions acts as a unique identifier that enables LGGM-T2G to specialize in generating graphs aligning with corresponding domains. Moreover, the network-level user prompts in LGGM-T2G$^{\text{UP}}$ provide a finer-level control compared to the domain-level descriptions in LGGM-T2G$^{\text{D}}$, further boosting the performance. Furthermore, we shuffle the domain names paired with each graph for LGGM-T2G$^{\text{D}}$ in the testing phase and observe the performance decrease in Table~\ref{app-table-text2graph-shuffle-uniform} as expected.

LGGM-T2G can also control the properties of the generated graphs. Here we first synthesize ground-truth graphs with clustering coefficients between [0, 0.75] and average degrees between [0, 100]. We divide these ground-truth graphs into three groups, low/medium/high, and prompt GPT4 to generate user instructions describing these two graph properties (Appendix~\ref{app-graph-property-data}). Then we combine these three groups of graphs with their instructions to train LGGM-T2G and evaluate whether the properties of the generated graphs align with the instructions. In Figure~\ref{fig-text2graph-property}(a)/(b), we can see a clear alignment between the statistical properties of the ground-truth graphs and those of the generated graphs, both in terms of the average CC and DEG. Furthermore, we visualize the generated graphs for these three groups in Figure~\ref{fig-text2graph-property}(a)/(b). We can see graphs in low-CC groups possess many squares while the ones in high-CC groups contain many triangles, aligning with the intuition of CC.

\begin{figure*}[t!]
     \centering
     \begin{subfigure}[b]{0.245\textwidth}
         \centering
         \includegraphics[width=\textwidth]{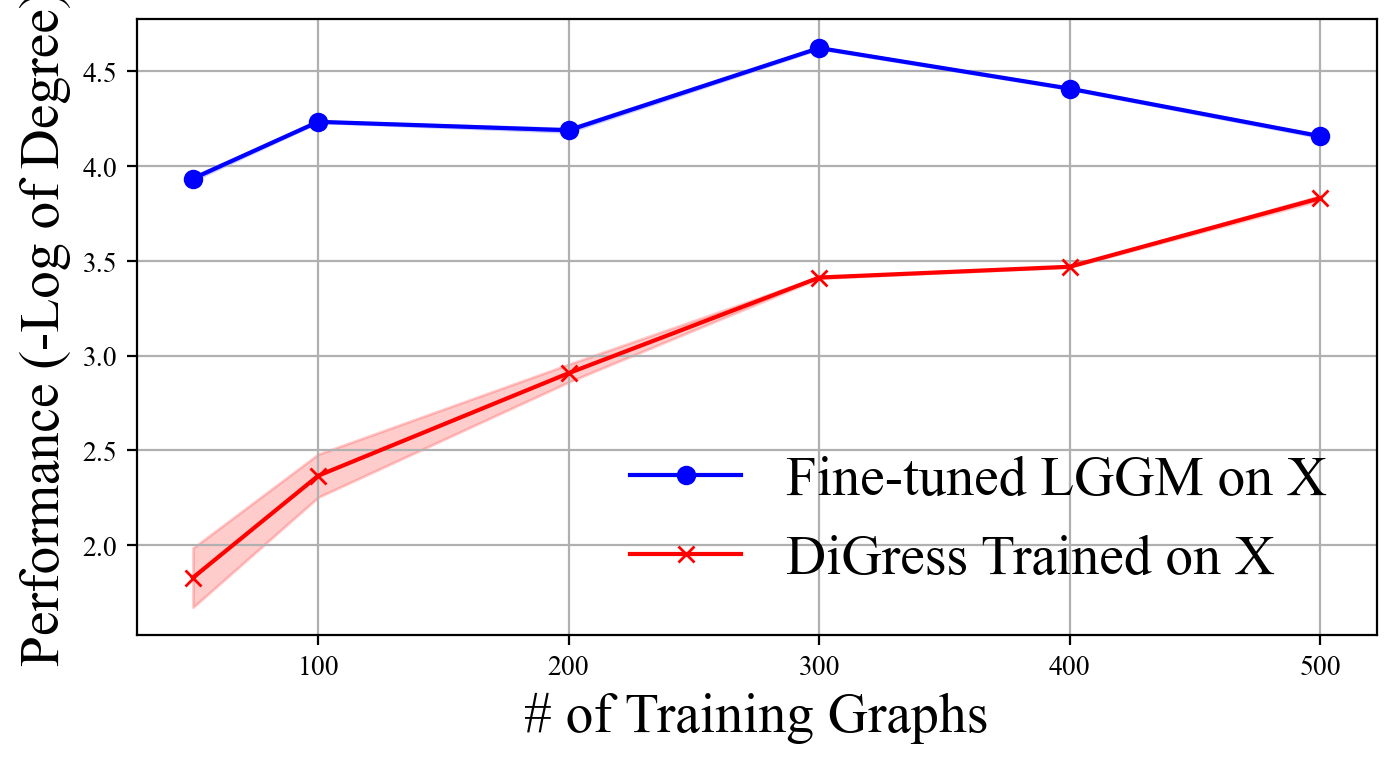}
         \caption{Road-DEG.}
         \label{fig-sens-road-deg-marginal}
     \end{subfigure}
     \hfill
     \begin{subfigure}[b]{0.245\textwidth}
         \centering
         \includegraphics[width=\textwidth]{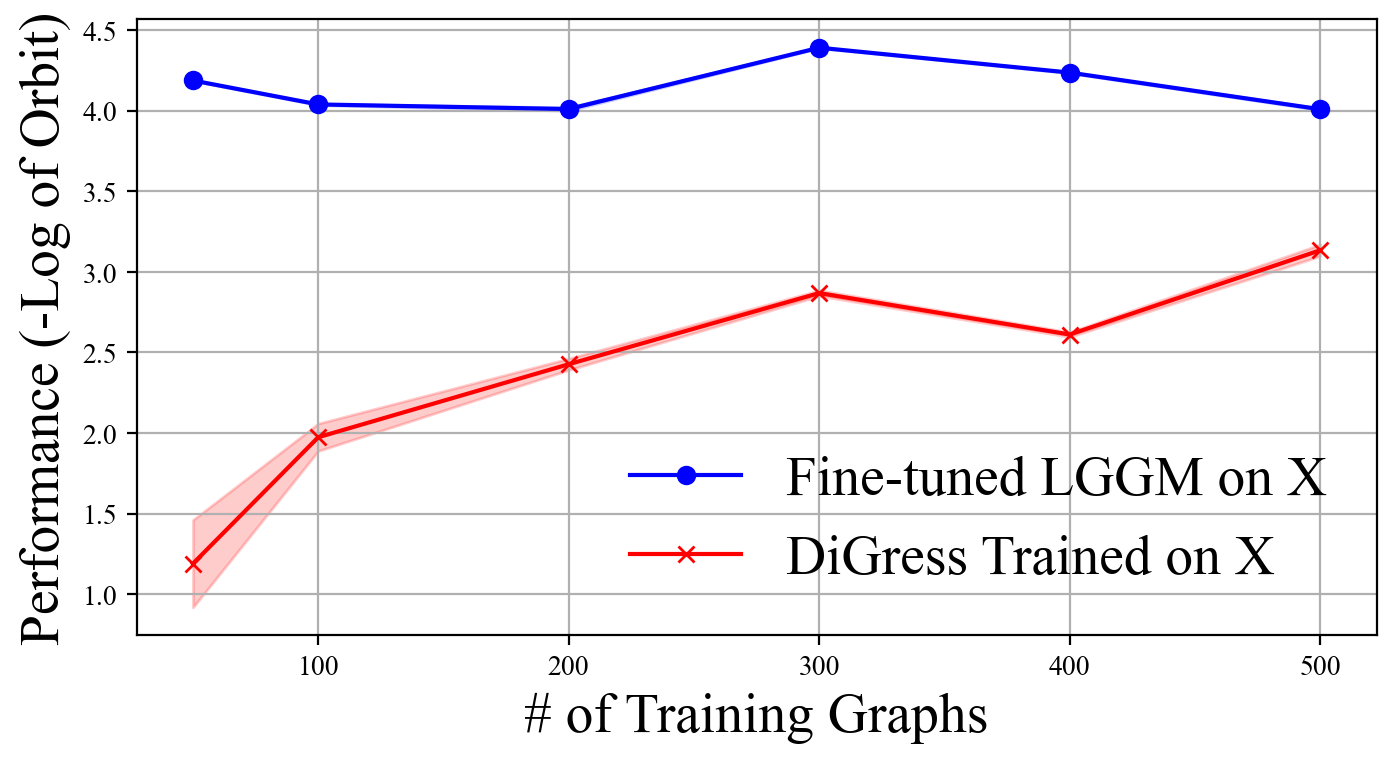}
         \caption{Road-Orbit.}
         \label{fig-sens-road-orbit-marginal}
     \end{subfigure}
     \hfill
     \begin{subfigure}[b]{0.245\textwidth}
         \centering
         \includegraphics[width=\textwidth]{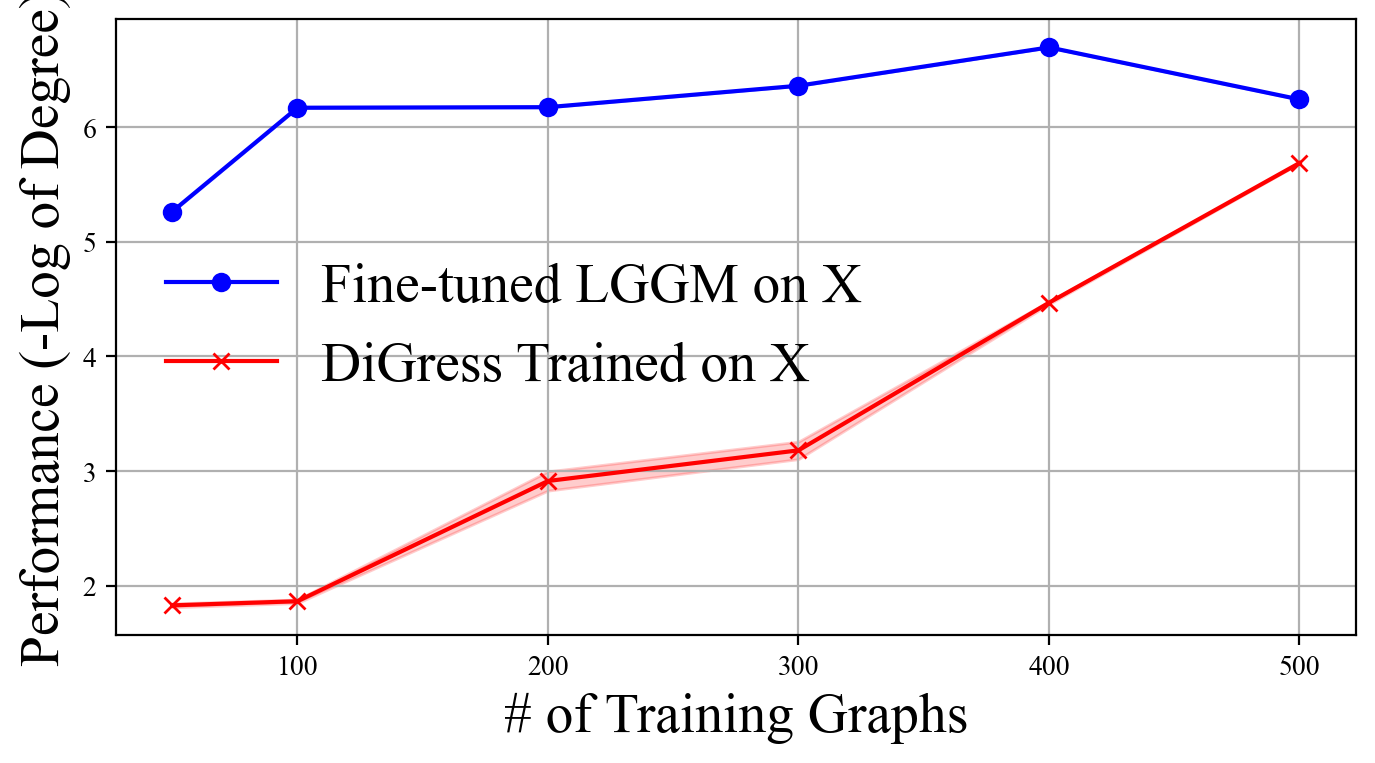}
         \caption{Retweet-DEG.}
         \label{fig-sens-rt-deg-marginal}
     \end{subfigure}
     \hfill
     \begin{subfigure}[b]{0.245\textwidth}
         \centering
         \includegraphics[width=\textwidth]{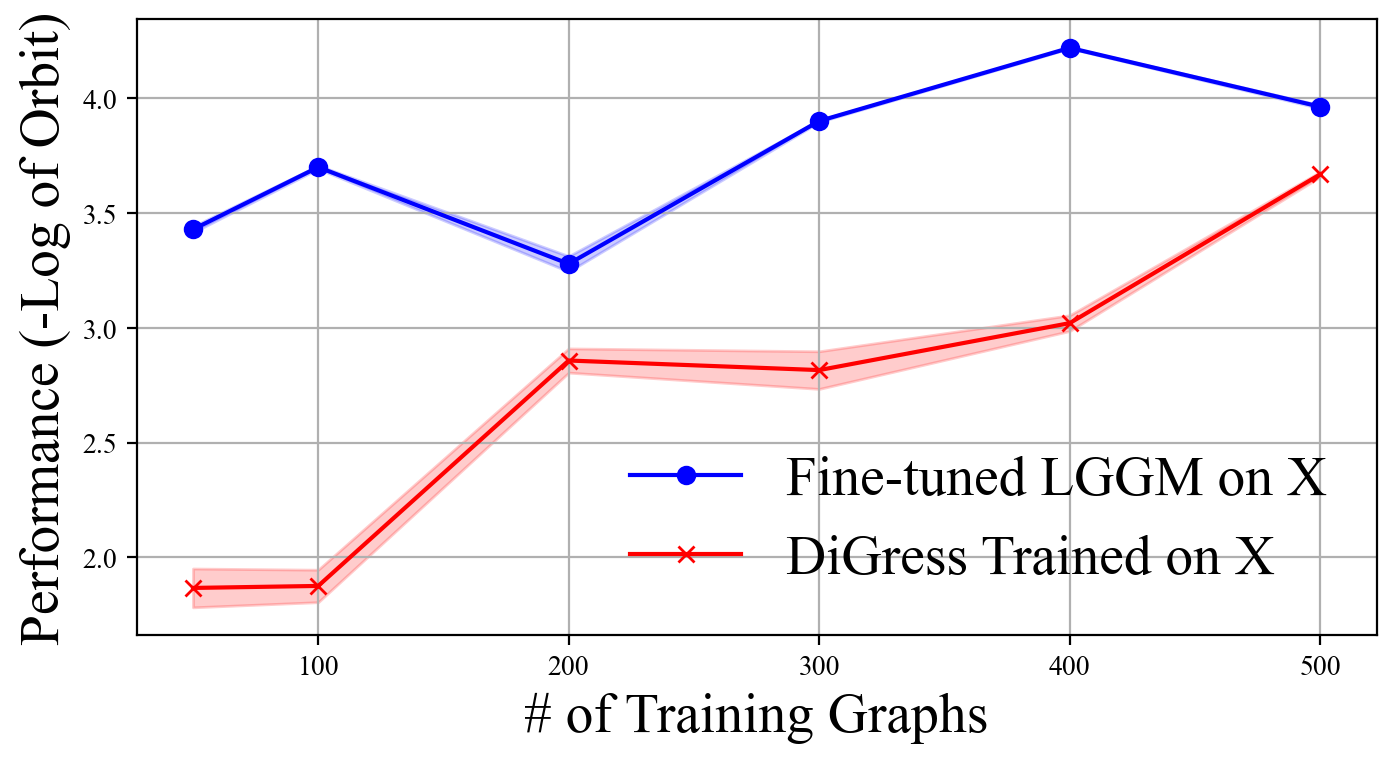}
         \caption{Retweet-Orbit.}
         \label{fig-sens-rt-orbit-marginal}
     \end{subfigure}
    \caption{With fewer training graphs, Fine-tuned LGGM becomes more advantageous than DiGress.}
    \label{fig-ft-varyingnumber}
    \vspace{-2ex}
\end{figure*}

\vspace{-3ex}
\subsection{Practical Usage of Fine-tuned LGGM}\label{sec-train-from-scratch}
To demonstrate the practical usage of LGGM in generating graphs for real-world deployment, we further compare the fine-tuned LGGM with DiGress trained directly on each domain in Figure~\ref{fig-motivation}(b). We can see that even using the same graphs for training, due to the additional knowledge incorporated during the pre-training phase of LGGM, it exhibits significantly better generative performance for most domains. Moreover, this advantage becomes even more pronounced when fewer graphs are available. Figure~\ref{fig-ft-varyingnumber} illustrates the enhanced performance of fine-tuned LGGM versus DiGress trained on X, with a widening margin as the number of training graphs in X decreases. This is particularly useful since many graph generative applications involve semi-supervised settings, e.g., generating anomaly software and design of drugs, the amount of which only count 0.05\%-0.5\%~\cite{bajorath2002integration} and 0.01\%~\cite{oak2019malware} among the whole potential candidates, respectively.

\vspace{-1ex}
\section{Research Problems Enabled by LGGMs}\label{sec-future}
\vspace{-1ex}
As the proposed LGGMs are the first to explore the potential of large generative models in graphs, it will spark numerous transformative research opportunities~\cite{leskovec2010kronecker}, which are summarized below:
\begin{itemize}[leftmargin=*]
    \item \textbf{Simulations, Extrapolation, Anonymization}: Since our LGGM-T2G can generate graphs with pre-defined properties, we can simulate graphs with various properties and extrapolate new insights from these simulated graphs, e.g., evaluate conventional and newly designed graph algorithms/ models~\cite{palowitch2022graphworld}. Moreover, for sensitive real-world graphs, such as those involving proprietary drug designs and corporate networks, we can maintain confidentiality by sharing only the model, which can then simulate similar graphs without disclosing private information.

    \item \textbf{Data Augmentation}: LGGMs can be used for data augmentation when only limited graphs are available for applications like graph anomaly detection and molecular tasks~\cite{liu2024data, qin2023class}. 

    \item \textbf{Graph Compression}: LGGM allows for the compression of graphs across multiple domains by merely storing model parameters instead of the original graphs. 
\end{itemize}

\vspace{-2ex}
\section{Limitations, Future Directions, and Conclusion}\label{sec-conclusion}
\textbf{Limitations and Future Directions}: Like LGMs in other fields~\cite{wang2023survey, zhao2023domain}, our LGGMs are not specialized in generating graphs for specific domains. One future direction could be exploring strategies such as Retrieval-Augmented Generation to enhance domain proficiency~\cite{wang2024knowledge}. Additionally, our evaluation of LGGMs has focused solely on their generative capabilities, without examining their potential usage in downstream tasks. A promising future direction is to assess their practical utility in application-oriented manners, e.g., higher quality of generated graphs for better data augmentation.

\textbf{Conclusion}: Motivated by the recent successes of Large Generative Models (LGMs) across fields of Vision, Language, Video, and Audio, and recognizing the promising practical usage of graph generative models, we introduce, for the very first time, Large Graph Generative Models (LGGMs). These models are trained on over 5,000 graphs sourced from 13 distinct domains from the well-known Network Repository. We empirically verify the superiority of our LGGMs in three aspects. Firstly, our pre-trained LGGM-X models demonstrate exceptional zero-shot generative capabilities. Secondly, LGGMs show remarkable adaptability for fine-tuning, and the fine-tuned LGGM is even more powerful than previous graph generative models trained from scratch. Lastly, our models facilitate Text-to-Graph generation, enabling users to specify domain/network names/statistics through prompts to control the generated graphs. Looking ahead, we identify several potential transformative research problems in Section~\ref{sec-future}. To foster further innovation and community collaboration, we release the complete resources of LGGMs, including code, data, and model checkpoints. We invite the community to use these tools to explore new possibilities in graph generation and beyond.

\bibliographystyle{plain}
\bibliography{reference}

\begin{thebibliography}{10}

\bibitem{achiam2023gpt}
Josh Achiam, Steven Adler, Sandhini Agarwal, Lama Ahmad, Ilge Akkaya, Florencia~Leoni Aleman, Diogo Almeida, Janko Altenschmidt, Sam Altman, Shyamal Anadkat, et~al.
\newblock Gpt-4 technical report.
\newblock {\em arXiv preprint arXiv:2303.08774}, 2023.

\bibitem{austin2021structured}
Jacob Austin, Daniel~D Johnson, Jonathan Ho, Daniel Tarlow, and Rianne Van Den~Berg.
\newblock Structured denoising diffusion models in discrete state-spaces.
\newblock {\em Advances in Neural Information Processing Systems}, 34:17981--17993, 2021.

\bibitem{bacciu2020edge}
Davide Bacciu, Alessio Micheli, and Marco Podda.
\newblock Edge-based sequential graph generation with recurrent neural networks.
\newblock {\em Neurocomputing}, 416:177--189, 2020.

\bibitem{bajorath2002integration}
J{\"u}rgen Bajorath.
\newblock Integration of virtual and high-throughput screening.
\newblock {\em Nature Reviews Drug Discovery}, 1(11):882--894, 2002.

\bibitem{beard2009firefly}
Daniel Beard.
\newblock {\em Firefly: web-based interactive tool for the visualization and validation of image processing algorithms}.
\newblock PhD thesis, University of Missouri--Columbia, 2009.

\bibitem{videoworldsimulators2024}
Tim Brooks, Bill Peebles, Connor Holmes, Will DePue, Yufei Guo, Li~Jing, David Schnurr, Joe Taylor, Troy Luhman, Eric Luhman, Clarence Ng, Ricky Wang, and Aditya Ramesh.
\newblock Video generation models as world simulators.
\newblock 2024.

\bibitem{bubeck2023sparks}
S{\'e}bastien Bubeck, Varun Chandrasekaran, Ronen Eldan, Johannes Gehrke, Eric Horvitz, Ece Kamar, Peter Lee, Yin~Tat Lee, Yuanzhi Li, Scott Lundberg, et~al.
\newblock Sparks of artificial general intelligence: Early experiments with gpt-4.
\newblock {\em arXiv preprint arXiv:2303.12712}, 2023.

\bibitem{cao2024survey}
Hanqun Cao, Cheng Tan, Zhangyang Gao, Yilun Xu, Guangyong Chen, Pheng-Ann Heng, and Stan~Z Li.
\newblock A survey on generative diffusion models.
\newblock {\em IEEE Transactions on Knowledge and Data Engineering}, 2024.

\bibitem{chen2023efficient}
Xiaohui Chen, Jiaxing He, Xu~Han, and Li-Ping Liu.
\newblock Efficient and degree-guided graph generation via discrete diffusion modeling.
\newblock {\em arXiv preprint arXiv:2305.04111}, 2023.

\bibitem{dai2023advdiff}
Xuelong Dai, Kaisheng Liang, and Bin Xiao.
\newblock Advdiff: Generating unrestricted adversarial examples using diffusion models.
\newblock {\em arXiv preprint arXiv:2307.12499}, 2023.

\bibitem{devlin2018bert}
Jacob Devlin, Ming-Wei Chang, Kenton Lee, and Kristina Toutanova.
\newblock Bert: Pre-training of deep bidirectional transformers for language understanding.
\newblock {\em arXiv preprint arXiv:1810.04805}, 2018.

\bibitem{du2020interpretable}
Yuanqi Du, Xiaojie Guo, Amarda Shehu, and Liang Zhao.
\newblock Interpretable molecule generation via disentanglement learning.
\newblock In {\em Proceedings of the 11th ACM International Conference on Bioinformatics, Computational Biology and Health Informatics}, pages 1--8, 2020.

\bibitem{du2022interpretable}
Yuanqi Du, Xiaojie Guo, Amarda Shehu, and Liang Zhao.
\newblock Interpretable molecular graph generation via monotonic constraints.
\newblock In {\em Proceedings of the 2022 SIAM International Conference on Data Mining (SDM)}, pages 73--81. SIAM, 2022.

\bibitem{du2021deep}
Yuanqi Du, Yinkai Wang, Fardina Alam, Yuanjie Lu, Xiaojie Guo, Liang Zhao, and Amarda Shehu.
\newblock Deep latent-variable models for controllable molecule generation.
\newblock In {\em 2021 IEEE International Conference on Bioinformatics and Biomedicine (BIBM)}, pages 372--375. IEEE, 2021.

\bibitem{faez2021deep}
Faezeh Faez, Yassaman Ommi, Mahdieh~Soleymani Baghshah, and Hamid~R Rabiee.
\newblock Deep graph generators: A survey.
\newblock {\em IEEE Access}, 9:106675--106702, 2021.

\bibitem{fan2019conditional}
Shuangfei Fan and Bert Huang.
\newblock Conditional labeled graph generation with gans.
\newblock In {\em Proc. ICLR Workshop Represent. Learn. Graphs Manifolds}, 2019.

\bibitem{gavrilev2023anomaly}
Dmitrii Gavrilev and Evgeny Burnaev.
\newblock Anomaly detection in networks via score-based generative models.
\newblock {\em arXiv preprint arXiv:2306.15324}, 2023.

\bibitem{goldenberg2010survey}
Anna Goldenberg, Alice~X Zheng, Stephen~E Fienberg, Edoardo~M Airoldi, et~al.
\newblock A survey of statistical network models.
\newblock {\em Foundations and Trends{\textregistered} in Machine Learning}, 2(2):129--233, 2010.

\bibitem{guo2021generating}
Xiaojie Guo, Yuanqi Du, Sivani Tadepalli, Liang Zhao, and Amarda Shehu.
\newblock Generating tertiary protein structures via interpretable graph variational autoencoders.
\newblock {\em Bioinformatics Advances}, 1(1):vbab036, 2021.

\bibitem{guo2020property}
Xiaojie Guo, Yuanqi Du, and Liang Zhao.
\newblock Property controllable variational autoencoder via invertible mutual dependence.
\newblock In {\em International Conference on Learning Representations}, 2020.

\bibitem{guo2022systematic}
Xiaojie Guo and Liang Zhao.
\newblock A systematic survey on deep generative models for graph generation.
\newblock {\em IEEE Transactions on Pattern Analysis and Machine Intelligence}, 45(5):5370--5390, 2022.

\bibitem{hoogeboom2022equivariant}
Emiel Hoogeboom, V{\i}ctor~Garcia Satorras, Cl{\'e}ment Vignac, and Max Welling.
\newblock Equivariant diffusion for molecule generation in 3d.
\newblock In {\em International conference on machine learning}, pages 8867--8887. PMLR, 2022.

\bibitem{igashov2024equivariant}
Ilia Igashov, Hannes St{\"a}rk, Cl{\'e}ment Vignac, Arne Schneuing, Victor~Garcia Satorras, Pascal Frossard, Max Welling, Michael Bronstein, and Bruno Correia.
\newblock Equivariant 3d-conditional diffusion model for molecular linker design.
\newblock {\em Nature Machine Intelligence}, pages 1--11, 2024.

\bibitem{jo2022score}
Jaehyeong Jo, Seul Lee, and Sung~Ju Hwang.
\newblock Score-based generative modeling of graphs via the system of stochastic differential equations.
\newblock In {\em International Conference on Machine Learning}, pages 10362--10383. PMLR, 2022.

\bibitem{kang2024diffattack}
Mintong Kang, Dawn Song, and Bo~Li.
\newblock Diffattack: Evasion attacks against diffusion-based adversarial purification.
\newblock {\em Advances in Neural Information Processing Systems}, 36, 2024.

\bibitem{kolaczyk2014statistical}
Eric~D Kolaczyk and G{\'a}bor Cs{\'a}rdi.
\newblock {\em Statistical analysis of network data with R}, volume~65.
\newblock Springer, 2014.

\bibitem{lee2019review}
Clement Lee and Darren~J Wilkinson.
\newblock A review of stochastic block models and extensions for graph clustering.
\newblock {\em Applied Network Science}, 4(1):1--50, 2019.

\bibitem{leskovec2010kronecker}
Jure Leskovec, Deepayan Chakrabarti, Jon Kleinberg, Christos Faloutsos, and Zoubin Ghahramani.
\newblock Kronecker graphs: an approach to modeling networks.
\newblock {\em Journal of Machine Learning Research}, 11(2), 2010.

\bibitem{lewis2019bart}
Mike Lewis, Yinhan Liu, Naman Goyal, Marjan Ghazvininejad, Abdelrahman Mohamed, Omer Levy, Ves Stoyanov, and Luke Zettlemoyer.
\newblock Bart: Denoising sequence-to-sequence pre-training for natural language generation, translation, and comprehension.
\newblock {\em arXiv preprint arXiv:1910.13461}, 2019.

\bibitem{liu2024data}
Gang Liu, Eric Inae, Tong Zhao, Jiaxin Xu, Tengfei Luo, and Meng Jiang.
\newblock Data-centric learning from unlabeled graphs with diffusion model.
\newblock {\em Advances in neural information processing systems}, 36, 2024.

\bibitem{liu2024visual}
Haotian Liu, Chunyuan Li, Qingyang Wu, and Yong~Jae Lee.
\newblock Visual instruction tuning.
\newblock {\em Advances in neural information processing systems}, 36, 2024.

\bibitem{liu2024graph}
Kay Liu, Huijun~Lona Yu, Yao Yan, Ziqing Hu, Pankaj Rajak, Amila Weerasinghe, Olcay Boz, Deepayan Chakrabarti, and Fei Wang.
\newblock Graph diffusion models for anomaly detection.
\newblock 2024.

\bibitem{liu2021graphebm}
Meng Liu, Keqiang Yan, Bora Oztekin, and Shuiwang Ji.
\newblock Graphebm: Molecular graph generation with energy-based models.
\newblock {\em arXiv preprint arXiv:2102.00546}, 2021.

\bibitem{liu2024review}
Zewen Liu, Guancheng Wan, B~Aditya Prakash, Max~SY Lau, and Wei Jin.
\newblock A review of graph neural networks in epidemic modeling.
\newblock {\em arXiv preprint arXiv:2403.19852}, 2024.

\bibitem{livernoche2023diffusion}
Victor Livernoche, Vineet Jain, Yashar Hezaveh, and Siamak Ravanbakhsh.
\newblock On diffusion modeling for anomaly detection.
\newblock {\em arXiv preprint arXiv:2305.18593}, 2023.

\bibitem{luo2021graphdf}
Youzhi Luo, Keqiang Yan, and Shuiwang Ji.
\newblock Graphdf: A discrete flow model for molecular graph generation.
\newblock In {\em International conference on machine learning}, pages 7192--7203. PMLR, 2021.

\bibitem{ma2021comprehensive}
Xiaoxiao Ma, Jia Wu, Shan Xue, Jian Yang, Chuan Zhou, Quan~Z Sheng, Hui Xiong, and Leman Akoglu.
\newblock A comprehensive survey on graph anomaly detection with deep learning.
\newblock {\em IEEE Transactions on Knowledge and Data Engineering}, 35(12):12012--12038, 2021.

\bibitem{madhawa2019graphnvp}
Kaushalya Madhawa, Katushiko Ishiguro, Kosuke Nakago, and Motoki Abe.
\newblock Graphnvp: An invertible flow model for generating molecular graphs.
\newblock {\em arXiv preprint arXiv:1905.11600}, 2019.

\bibitem{mao2024graph}
Haitao Mao, Zhikai Chen, Wenzhuo Tang, Jianan Zhao, Yao Ma, Tong Zhao, Neil Shah, Michael Galkin, and Jiliang Tang.
\newblock Graph foundation models.
\newblock {\em arXiv preprint arXiv:2402.02216}, 2024.

\bibitem{maziarka2020mol}
{\L}ukasz Maziarka, Agnieszka Pocha, Jan Kaczmarczyk, Krzysztof Rataj, Tomasz Danel, and Micha{\l} Warcho{\l}.
\newblock Mol-cyclegan: a generative model for molecular optimization.
\newblock {\em Journal of Cheminformatics}, 12(1):2, 2020.

\bibitem{menon2022generative}
Dhruv Menon and Raghavan Ranganathan.
\newblock A generative approach to materials discovery, design, and optimization.
\newblock {\em ACS omega}, 7(30):25958--25973, 2022.

\bibitem{newman2000models}
Mark~EJ Newman.
\newblock Models of the small world.
\newblock {\em Journal of Statistical Physics}, 101:819--841, 2000.

\bibitem{oak2019malware}
Rajvardhan Oak, Min Du, David Yan, Harshvardhan Takawale, and Idan Amit.
\newblock Malware detection on highly imbalanced data through sequence modeling.
\newblock In {\em Proceedings of the 12th ACM Workshop on artificial intelligence and security}, pages 37--48, 2019.

\bibitem{palowitch2022graphworld}
John Palowitch, Anton Tsitsulin, Brandon Mayer, and Bryan Perozzi.
\newblock Graphworld: Fake graphs bring real insights for gnns.
\newblock In {\em Proceedings of the 28th ACM SIGKDD Conference on Knowledge Discovery and Data Mining}, pages 3691--3701, 2022.

\bibitem{penedo2023refinedweb}
Guilherme Penedo, Quentin Malartic, Daniel Hesslow, Ruxandra Cojocaru, Alessandro Cappelli, Hamza Alobeidli, Baptiste Pannier, Ebtesam Almazrouei, and Julien Launay.
\newblock The refinedweb dataset for falcon llm: outperforming curated corpora with web data, and web data only.
\newblock {\em arXiv preprint arXiv:2306.01116}, 2023.

\bibitem{popova2019molecularrnn}
Mariya Popova, Mykhailo Shvets, Junier Oliva, and Olexandr Isayev.
\newblock Molecularrnn: Generating realistic molecular graphs with optimized properties.
\newblock {\em arXiv preprint arXiv:1905.13372}, 2019.

\bibitem{qin2023chatgpt}
Chengwei Qin, Aston Zhang, Zhuosheng Zhang, Jiaao Chen, Michihiro Yasunaga, and Diyi Yang.
\newblock Is chatgpt a general-purpose natural language processing task solver?
\newblock {\em arXiv preprint arXiv:2302.06476}, 2023.

\bibitem{qin2023class}
Yiming Qin, Huangjie Zheng, Jiangchao Yao, Mingyuan Zhou, and Ya~Zhang.
\newblock Class-balancing diffusion models.
\newblock In {\em Proceedings of the IEEE/CVF Conference on Computer Vision and Pattern Recognition}, pages 18434--18443, 2023.

\bibitem{radford2021learning}
Alec Radford, Jong~Wook Kim, Chris Hallacy, Aditya Ramesh, Gabriel Goh, Sandhini Agarwal, Girish Sastry, Amanda Askell, Pamela Mishkin, Jack Clark, et~al.
\newblock Learning transferable visual models from natural language supervision.
\newblock In {\em International conference on machine learning}, pages 8748--8763. PMLR, 2021.

\bibitem{rombach2022high}
Robin Rombach, Andreas Blattmann, Dominik Lorenz, Patrick Esser, and Bj{\"o}rn Ommer.
\newblock High-resolution image synthesis with latent diffusion models.
\newblock In {\em Proceedings of the IEEE/CVF conference on computer vision and pattern recognition}, pages 10684--10695, 2022.

\bibitem{ronneberger2015u}
Olaf Ronneberger, Philipp Fischer, and Thomas Brox.
\newblock U-net: Convolutional networks for biomedical image segmentation.
\newblock In {\em Medical image computing and computer-assisted intervention--MICCAI 2015: 18th international conference, Munich, Germany, October 5-9, 2015, proceedings, part III 18}, pages 234--241. Springer, 2015.

\bibitem{rossi2015network}
Ryan Rossi and Nesreen Ahmed.
\newblock The network data repository with interactive graph analytics and visualization.
\newblock In {\em Proceedings of the AAAI conference on artificial intelligence}, volume~29, 2015.

\bibitem{rossi2016interactive}
Ryan~A Rossi and Nesreen~K Ahmed.
\newblock An interactive data repository with visual analytics.
\newblock {\em ACM SIGKDD Explorations Newsletter}, 17(2):37--41, 2016.

\bibitem{rossi2019complex}
Ryan~A Rossi and Nesreen~K Ahmed.
\newblock Complex networks are structurally distinguishable by domain.
\newblock {\em Social Network Analysis and Mining}, 9:1--13, 2019.

\bibitem{sah2019multi}
Pratha Sah, Jos{\'e}~David M{\'e}ndez, and Shweta Bansal.
\newblock A multi-species repository of social networks.
\newblock {\em Scientific data}, 6(1):44, 2019.

\bibitem{sah2016inferring}
Pratha Sah, Kenneth~E Nussear, Todd~C Esque, Christina~M Aiello, Peter~J Hudson, and Shweta Bansal.
\newblock Inferring social structure and its drivers from refuge use in the desert tortoise, a relatively solitary species.
\newblock {\em Behavioral Ecology and Sociobiology}, 70:1277--1289, 2016.

\bibitem{simonovsky2018graphvae}
Martin Simonovsky and Nikos Komodakis.
\newblock Graphvae: Towards generation of small graphs using variational autoencoders.
\newblock In {\em Artificial Neural Networks and Machine Learning--ICANN 2018: 27th International Conference on Artificial Neural Networks, Rhodes, Greece, October 4-7, 2018, Proceedings, Part I 27}, pages 412--422. Springer, 2018.

\bibitem{somepalli2023diffusion}
Gowthami Somepalli, Vasu Singla, Micah Goldblum, Jonas Geiping, and Tom Goldstein.
\newblock Diffusion art or digital forgery? investigating data replication in diffusion models.
\newblock In {\em Proceedings of the IEEE/CVF Conference on Computer Vision and Pattern Recognition}, pages 6048--6058, 2023.

\bibitem{sosa2021animal}
Sebastian Sosa, David Jacoby, Mathieu Lihoreau, and C{\'e}dric Sueur.
\newblock Animal social networks: Towards an integrative framework embedding social interactions, space and time.
\newblock {\em Methods in Ecology and Evolution}, 12(1):4--9, 2021.

\bibitem{thompson2022evaluation}
Rylee Thompson, Boris Knyazev, Elahe Ghalebi, Jungtaek Kim, and Graham~W Taylor.
\newblock On evaluation metrics for graph generative models.
\newblock {\em arXiv preprint arXiv:2201.09871}, 2022.

\bibitem{tian2024graph}
Yijun Tian, Huan Song, Zichen Wang, Haozhu Wang, Ziqing Hu, Fang Wang, Nitesh~V Chawla, and Panpan Xu.
\newblock Graph neural prompting with large language models.
\newblock In {\em Proceedings of the AAAI Conference on Artificial Intelligence}, volume~38, pages 19080--19088, 2024.

\bibitem{touvron2023llama}
Hugo Touvron, Louis Martin, Kevin Stone, Peter Albert, Amjad Almahairi, Yasmine Babaei, Nikolay Bashlykov, Soumya Batra, Prajjwal Bhargava, Shruti Bhosale, et~al.
\newblock Llama 2: Open foundation and fine-tuned chat models.
\newblock {\em arXiv preprint arXiv:2307.09288}, 2023.

\bibitem{van2008visualizing}
Laurens Van~der Maaten and Geoffrey Hinton.
\newblock Visualizing data using t-sne.
\newblock {\em Journal of machine learning research}, 9(11), 2008.

\bibitem{vignac2022DiGress}
Clement Vignac, Igor Krawczuk, Antoine Siraudin, Bohan Wang, Volkan Cevher, and Pascal Frossard.
\newblock Digress: Discrete denoising diffusion for graph generation.
\newblock In {\em The Eleventh International Conference on Learning Representations}, 2023.

\bibitem{wang2023survey}
Cunxiang Wang, Xiaoze Liu, Yuanhao Yue, Xiangru Tang, Tianhang Zhang, Cheng Jiayang, Yunzhi Yao, Wenyang Gao, Xuming Hu, Zehan Qi, et~al.
\newblock Survey on factuality in large language models: Knowledge, retrieval and domain-specificity.
\newblock {\em arXiv preprint arXiv:2310.07521}, 2023.

\bibitem{wang2022git}
Jianfeng Wang, Zhengyuan Yang, Xiaowei Hu, Linjie Li, Kevin Lin, Zhe Gan, Zicheng Liu, Ce~Liu, and Lijuan Wang.
\newblock Git: A generative image-to-text transformer for vision and language.
\newblock {\em arXiv preprint arXiv:2205.14100}, 2022.

\bibitem{wang2024knowledge}
Yu~Wang, Nedim Lipka, Ryan~A Rossi, Alexa Siu, Ruiyi Zhang, and Tyler Derr.
\newblock Knowledge graph prompting for multi-document question answering.
\newblock In {\em Proceedings of the AAAI Conference on Artificial Intelligence}, volume~38, pages 19206--19214, 2024.

\bibitem{watts1998collective}
Duncan~J Watts and Steven~H Strogatz.
\newblock Collective dynamics of ‘small-world’networks.
\newblock {\em nature}, 393(6684):440--442, 1998.

\bibitem{you2018graphrnn}
Jiaxuan You, Rex Ying, Xiang Ren, William Hamilton, and Jure Leskovec.
\newblock Graphrnn: Generating realistic graphs with deep auto-regressive models.
\newblock In {\em International conference on machine learning}, pages 5708--5717. PMLR, 2018.

\bibitem{zang2020moflow}
Chengxi Zang and Fei Wang.
\newblock Moflow: an invertible flow model for generating molecular graphs.
\newblock In {\em Proceedings of the 26th ACM SIGKDD international conference on knowledge discovery \& data mining}, pages 617--626, 2020.

\bibitem{zhang2023enhanced}
Yanzhe Zhang, Ruiyi Zhang, Jiuxiang Gu, Yufan Zhou, Nedim Lipka, Diyi Yang, and Tong Sun.
\newblock Enhanced visual instruction tuning for text-rich image understanding.
\newblock 2023.

\bibitem{zhang2023llavar}
Yanzhe Zhang, Ruiyi Zhang, Jiuxiang Gu, Yufan Zhou, Nedim Lipka, Diyi Yang, and Tong Sun.
\newblock Llavar: Enhanced visual instruction tuning for text-rich image understanding.
\newblock {\em arXiv preprint arXiv:2306.17107}, 2023.

\bibitem{zhang2024artbank}
Zhanjie Zhang, Quanwei Zhang, Wei Xing, Guangyuan Li, Lei Zhao, Jiakai Sun, Zehua Lan, Junsheng Luan, Yiling Huang, and Huaizhong Lin.
\newblock Artbank: Artistic style transfer with pre-trained diffusion model and implicit style prompt bank.
\newblock In {\em Proceedings of the AAAI Conference on Artificial Intelligence}, volume~38, pages 7396--7404, 2024.

\bibitem{zhao2023domain}
Xujiang Zhao, Jiaying Lu, Chengyuan Deng, Can Zheng, Junxiang Wang, Tanmoy Chowdhury, Li~Yun, Hejie Cui, Zhang Xuchao, Tianjiao Zhao, et~al.
\newblock Domain specialization as the key to make large language models disruptive: A comprehensive survey.
\newblock {\em arXiv preprint arXiv:2305.18703}, 2023.

\bibitem{zhou2023comprehensive}
Ce~Zhou, Qian Li, Chen Li, Jun Yu, Yixin Liu, Guangjing Wang, Kai Zhang, Cheng Ji, Qiben Yan, Lifang He, et~al.
\newblock A comprehensive survey on pretrained foundation models: A history from bert to chatgpt.
\newblock {\em arXiv preprint arXiv:2302.09419}, 2023.

\bibitem{zhu2022survey}
Yanqiao Zhu, Yuanqi Du, Yinkai Wang, Yichen Xu, Jieyu Zhang, Qiang Liu, and Shu Wu.
\newblock A survey on deep graph generation: Methods and applications.
\newblock In {\em Learning on Graphs Conference}, pages 47--1. PMLR, 2022.

\end{thebibliography}

\appendix
\renewcommand \thepart{} 
\renewcommand \partname{}

\newpage
\part{Appendix} 
\parttoc

\clearpage

\newpage
\section{Notations}\label{app-notations}

This section summarizes the notations used throughout this paper.

\begin{table}[htbp!]
\caption{Notations used throughout this paper.} 
\setlength{\extrarowheight}{.095pt}
\setlength\tabcolsep{2pt}
\centering
\label{tab-symbols}
\begin{tabular}{cc}
\midrule
\textbf{Notations} & \textbf{Definitions or Descriptions}\\
\midrule
$\mathbb{G}, \mathbb{G}^c$ & Random variable of universal graphs and graphs from domain $c$\\
$\mathcal{G}, \mathcal{G}^c$ & Set of universal graphs and graphs from domain c\\
$\mathcal{G}^{\text{Train, Val, Test, c}}$ & Set of training/validation/testing graphs from domain $c$\\
$P(\mathbb{G}), P(\mathbb{G}^c)$ & Distribution of universal graphs and graphs from domain c\\
$G=(\mathbf{X}^G, \mathbf{E}^G)$ & Graph $G$ with node/edge category matrices $\mathbf{X}^G, \mathbf{E}^G$\\
$n_G$ & Number of nodes in graph $G$\\
$d_{X}/d_{E}$ & Number of node/edge categories\\
$\mathbf{Q}_{X}^{t}, \mathbf{Q}_{E}^{t}$ & Node/edge transition matrices\\
$\bar{\mathbf{Q}}_{X}^{t}, \bar{\mathbf{Q}}_{E}^{t}$ & Node/edge accumulative transition matrices\\
$\mathbf{m}_{X}^c, \mathbf{m}_{E}^c$ & Distribution of node/edge categories of graphs from domain c\\
$t, \mathcal{T}$ & Diffusion step $t$ and the set of total steps $\mathcal{T}$\\
$\widetilde{\mathbb{G}}$ & Distribution of graphs from unseen domains\\
$\widetilde{\mathcal{G}}$ & Set of graphs from unseen domains\\
$S, \phi(S)$ & Text with its embedding from the pre-trained textual encoder $\phi$\\
$P(\mathbb{G}, \mathbb{S})$ & Joint distribution of graphs and their textual descriptions\\
$\boldsymbol{\Theta}$ & Parameters of Neural Networks\\
$\boldsymbol{\Theta}^{\star}$ & Optimal Parameters of Neural Networks after pre-training\\
$\boldsymbol{\Theta}^{\star\star}$ & Optimal Parameters of Neural Networks after fine-tuning\\
$\boldsymbol{\Theta}^{\blacktriangle}$ & Optimal Parameters of Neural Networks after Text2Graph Generation\\
\textsc{FB} & Facebook Networks\\
\textsc{ASN} & Animal Social Networks\\
\textsc{Email} & Email Networks\\
\textsc{Web} & Web Graphs\\
\textsc{Road} & Road Networks\\
\textsc{Power} & Power Networks\\
\textsc{CHEM} & Chemical Networks\\
\textsc{BIO} & Biological Networks\\
\textsc{Econ} & Economic Networks\\
\textsc{RT} & Retweet Networks\\
\textsc{COL} & Collaboration Networks\\
\textsc{ECO} & Ecological Networks\\
\textsc{Citation} & Citation Networks\\
LGGM-X & Pre-trained LGGM on all other domains except X\\
Fine-tuned LGGM on X & Fine-tuned LGGM-X on domain X\\
LGGM-T2G & LGGM trained on graphs paired with texts\\
LGGM-T2G$^{\text{D}}$ & LGGM trained on graphs with texts on domains\\
LGGM-T2G$^{\text{UP}}$ & LGGM trained on graphs with user prompts on domains/names\\
LGGM & LGGM trained on all graphs from all domains\\
\bottomrule
\end{tabular}
\end{table}

\newpage 
\section{Proof of Theorems}\label{app-theorem}

\begin{thm}\label{thm-text2graph}
If the transition matrices $\mathbf{Q}_{X}^{t}, \mathbf{Q}_{E}^{t}$ in Eq.~\eqref{eq-forward} are independent of the textual description $\mathbb{S}$, then we have $P(\mathbb{G}^{t - 1}|\mathbb{G}^{t}, \mathbb{G}, \mathbb{S}) \propto P(\mathbb{G}^t|\mathbb{G}^{t - 1})P(\mathbb{G}^{t - 1}|\mathbb{G})$ and correspondingly, we have the analytical formed solution, i.e., $P(\mathbf{X}^{t - 1}|\mathbf{X}^{t}, \mathbf{X}, S) \propto \mathbf{X}^{t}(\mathbf{Q}_X^t)^{\top} \odot \mathbf{X}\bar{\mathbf{Q}}^{t - 1}_{X}$, $P(\mathbf{E}^{t - 1}|\mathbf{E}^{t}, \mathbf{E}, S) \propto \mathbf{E}^{t}(\mathbf{Q}_E^t)^{\top} \odot \mathbf{E}\bar{\mathbf{Q}}^{t - 1}_{E}$ following~\cite{vignac2022DiGress}.
\end{thm}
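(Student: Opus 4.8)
The plan is to exploit the Markov structure of the forward diffusion together with the assumed $S$-independence of the transition matrices, reducing the text-conditioned posterior to the standard text-free posterior already computed in~\cite{vignac2022DiGress}. First I would lay out the joint law of the clean graph, its text, and the noised states. Since $\mathbb{S}$ is attached only to the clean graph $\mathbb{G}^0 = \mathbb{G}$, and the noising chain $\mathbb{G}^0 \to \mathbb{G}^1 \to \cdots$ is generated entirely by the matrices $\mathbf{Q}_X^t, \mathbf{Q}_E^t$ that by hypothesis do not depend on $S$, the underlying graphical model factors as $\mathbb{S} \leftarrow \mathbb{G}^0 \to \mathbb{G}^1 \to \cdots$. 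In this structure $\mathbb{G}^0$ separates $\mathbb{S}$ from every later state, which yields the two conditional independencies I will need: the Markov-plus-$S$-independence identity $P(\mathbb{G}^t \mid \mathbb{G}^{t-1}, \mathbb{G}, \mathbb{S}) = P(\mathbb{G}^t \mid \mathbb{G}^{t-1})$, and the accumulated-kernel identity $P(\mathbb{G}^{t-1} \mid \mathbb{G}, \mathbb{S}) = P(\mathbb{G}^{t-1} \mid \mathbb{G})$.

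Next I would apply Bayes' rule to the posterior with $\mathbb{G}^0 = \mathbb{G}$ and $\mathbb{S}$ held fixed:
\[
P(\mathbb{G}^{t-1} \mid \mathbb{G}^t, \mathbb{G}, \mathbb{S}) = \frac{P(\mathbb{G}^t \mid \mathbb{G}^{t-1}, \mathbb{G}, \mathbb{S}) \, P(\mathbb{G}^{t-1} \mid \mathbb{G}, \mathbb{S})}{P(\mathbb{G}^t \mid \mathbb{G}, \mathbb{S})}.
\]
Substituting the two independencies collapses the numerator to $P(\mathbb{G}^t \mid \mathbb{G}^{t-1}) P(\mathbb{G}^{t-1} \mid \mathbb{G})$, and since the denominator does not depend on $\mathbb{G}^{t-1}$ it may be folded into the normalizing constant, giving the claimed proportionality $P(\mathbb{G}^{t-1} \mid \mathbb{G}^t, \mathbb{G}, \mathbb{S}) \propto P(\mathbb{G}^t \mid \mathbb{G}^{t-1}) P(\mathbb{G}^{t-1} \mid \mathbb{G})$. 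Crucially the right-hand side carries no dependence on $S$, so the text-conditioned posterior coincides with the text-free posterior $P(\mathbb{G}^{t-1} \mid \mathbb{G}^t, \mathbb{G})$.

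For the explicit formula I would specialize to the one-hot categorical parametrization of Eq.~\eqref{eq-forward}, treating nodes and edges independently and componentwise. The single-step likelihood $P(\mathbf{X}^t \mid \mathbf{X}^{t-1})$ is given by the row $\mathbf{X}^{t-1} \mathbf{Q}_X^t$; reading it as a function of the unknown $\mathbf{X}^{t-1}$ while $\mathbf{X}^t$ is fixed contributes the likelihood vector $\mathbf{X}^t (\mathbf{Q}_X^t)^{\top}$ over the candidate categories, and the accumulated forward kernel $P(\mathbf{X}^{t-1} \mid \mathbf{X})$ contributes the prior $\mathbf{X} \bar{\mathbf{Q}}_X^{t-1}$. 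Multiplying these two categorical factors entrywise over the possible values of $\mathbf{X}^{t-1}$ produces the Hadamard product $\mathbf{X}^t (\mathbf{Q}_X^t)^{\top} \odot \mathbf{X} \bar{\mathbf{Q}}_X^{t-1}$, with the identical argument for edges yielding $\mathbf{E}^t (\mathbf{Q}_E^t)^{\top} \odot \mathbf{E} \bar{\mathbf{Q}}_E^{t-1}$.

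The main obstacle is the first paragraph: carefully justifying the two conditional independencies, i.e. that conditioning on the clean graph $\mathbb{G}$ renders $\mathbb{S}$ irrelevant to both the one-step forward transition and the accumulated forward kernel. This is precisely where the hypothesis that $\mathbf{Q}_X^t, \mathbf{Q}_E^t$ are $S$-independent enters, and it is the only nontrivial probabilistic content; once it is established, the Bayes manipulation and the categorical bookkeeping are routine and follow~\cite{vignac2022DiGress} verbatim.
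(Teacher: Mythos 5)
Your proposal is correct and takes essentially the same route as the paper's proof: both apply Bayes' rule to $P(\mathbb{G}^{t-1}\mid\mathbb{G}^{t},\mathbb{G},\mathbb{S})$, invoke the same two conditional independencies ($P(\mathbb{G}^{t}\mid\mathbb{G}^{t-1},\mathbb{G},\mathbb{S})=P(\mathbb{G}^{t}\mid\mathbb{G}^{t-1})$ and $P(\mathbb{G}^{t-1}\mid\mathbb{G},\mathbb{S})=P(\mathbb{G}^{t-1}\mid\mathbb{G})$) justified by the Markov chain and the $S$-independence of the transition matrices, and then follow~\cite{vignac2022DiGress} for the componentwise Hadamard-product form on nodes and edges. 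Your version is marginally more explicit than the paper's (stating the graphical model $\mathbb{S}\leftarrow\mathbb{G}^{0}\to\mathbb{G}^{1}\to\cdots$ and writing the normalizing denominator $P(\mathbb{G}^{t}\mid\mathbb{G},\mathbb{S})$ rather than a chain of proportionalities), but this is a difference of exposition, not of method.
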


\begin{proof}
Applying the Bayes rule, we have:

\begin{align}\label{eq-bayes}
    P(\mathbb{G}^{t-1}|\mathbb{G}^{t}, \mathbb{G}, \mathbb{S}) &\propto P(\mathbb{G}^{t-1}, \mathbb{G}^{t}, \mathbb{G}, \mathbb{S}) \propto P(\mathbb{G}^{t}|\mathbb{G}^{t - 1}, \mathbb{G}, \mathbb{S})P(\mathbb{G}^{t - 1}, \mathbb{G}, \mathbb{S}) \\
    & \propto P(\mathbb{G}^t|\mathbb{G}^{t - 1}, \mathbb{G}, \mathbb{S})P(\mathbb{G}^{t - 1}|\mathbb{G}, \mathbb{S})P(\mathbb{G}, \mathbb{S}).
\end{align}
Given the independence of the transition matrix on the textual description $S$ and also the noise is Markovian~\cite{vignac2022DiGress}, we have $P(\mathbb{G}^{t}|\mathbb{G}^{t - 1}, \mathbb{G}, \mathbb{S}) = P(\mathbb{G}^t|\mathbb{G}^{t - 1})$, $P(\mathbb{G}^{t - 1}|\mathbb{G}, \mathbb{S}) = P(\mathbb{G}^{t - 1}|\mathbb{G})$, and also the irrelevance of $P(\mathbb{G}, \mathbb{S})$ to $P(\mathbb{G}^{t - 1}|\mathbb{G}^{t}, \mathbb{G}, \mathbb{S})$, we then end up with:
\begin{equation}
    P(\mathbb{G}^{t-1}|\mathbb{G}^{t}, \mathbb{G}, \mathbb{S}) \propto P(\mathbb{G}^{t}|\mathbb{G}^{t - 1})P(\mathbb{G}^{t - 1}|\mathbb{G}).
\end{equation}

Since the distribution of graphs can be decomposed into the distribution of node and edge categories, following~\cite{vignac2022DiGress}, we similarly have:
\begin{equation}
    P(\mathbf{X}^{t - 1}|\mathbf{X}^{t}, \mathbf{X}, S)\propto P(\mathbf{X}^t|\mathbf{X}^{t - 1})P(\mathbf{X}^{t - 1}|\mathbf{X}) = \mathbf{X}^t(\mathbf{Q}_X^t)^{\top}\odot \mathbf{X}\bar{\mathbf{Q}}_X^{t - 1},
\end{equation}
\begin{equation}
    P(\mathbf{E}^{t - 1}|\mathbf{E}^{t}, \mathbf{E}, S)\propto P(\mathbf{E}^t|\mathbf{E}^{t - 1})P(\mathbf{E}^{t - 1}|\mathbf{E}) = \mathbf{E}^t(\mathbf{Q}_E^t)^{\top}\odot \mathbf{E}\bar{\mathbf{Q}}_E^{t - 1}.
\end{equation}
\end{proof}

\begin{thm}\label{thm-elbo}
Given the decomposition in Eq.~\eqref{eq-generate-text} that $P(\mathbb{G}^{t-1}|\mathbb{G}^{t}, \mathbb{S}) \propto \sum_{\mathbb{G}}P(\mathbb{G}^{t-1}|\mathbb{G}^{t}, \mathbb{G}, \mathbb{S})P(\mathbb{G}|\mathbb{G}^{t}, \mathbb{S})$, optimizing $\boldsymbol{\Theta}$ according to Eq.~\eqref{eq-text2graph-loss} essentially optimizes the variational lower bound of the log-likelihood $P_{\boldsymbol{\Theta}}(\mathbb{G}^0, \mathbb{S})$.
\end{thm}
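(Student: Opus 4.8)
The plan is to replay the standard evidence-lower-bound (ELBO) derivation for discrete denoising diffusion~\cite{austin2021structured, vignac2022DiGress}, but carrying the textual variable $\mathbb{S}$ as a conditioning argument throughout and using Theorem~\ref{thm-text2graph} to strip the text dependence out of the forward posterior. First I would observe that $\log P_{\boldsymbol{\Theta}}(\mathbb{G}^0,\mathbb{S}) = \log P_{\boldsymbol{\Theta}}(\mathbb{G}^0\mid\mathbb{S}) + \log P(\mathbb{S})$, and since the text marginal $P(\mathbb{S})$ (a fixed corpus together with the pretrained encoder $\phi$) does not depend on $\boldsymbol{\Theta}$, it suffices to lower-bound the conditional $\log P_{\boldsymbol{\Theta}}(\mathbb{G}^0\mid\mathbb{S})$. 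Treating the forward chain $q(\mathbb{G}^{1:T}\mid\mathbb{G}^0)$ from Eq.~\eqref{eq-forward} as a variational posterior and applying Jensen's inequality yields
\begin{equation*}
\log P_{\boldsymbol{\Theta}}(\mathbb{G}^0\mid\mathbb{S}) \;\ge\; \mathbb{E}_{q}\Big[\log \tfrac{P_{\boldsymbol{\Theta}}(\mathbb{G}^{0:T}\mid\mathbb{S})}{q(\mathbb{G}^{1:T}\mid\mathbb{G}^0)}\Big],
\end{equation*}
which is the object to be maximized.

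Next I would regroup the telescoping ratio in the usual way into three pieces: a reconstruction term $\mathbb{E}_q[\log P_{\boldsymbol{\Theta}}(\mathbb{G}^0\mid\mathbb{G}^1,\mathbb{S})]$, a prior-matching term $-\mathrm{KL}\big(q(\mathbb{G}^T\mid\mathbb{G}^0)\,\|\,P(\mathbb{G}^T)\big)$ that is constant in $\boldsymbol{\Theta}$, and the denoising sum $-\sum_{t\ge 2}\mathbb{E}_q\big[\mathrm{KL}\big(q(\mathbb{G}^{t-1}\mid\mathbb{G}^t,\mathbb{G}^0,\mathbb{S})\,\|\,P_{\boldsymbol{\Theta}}(\mathbb{G}^{t-1}\mid\mathbb{G}^t,\mathbb{S})\big)\big]$. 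Here I invoke Theorem~\ref{thm-text2graph}: because the transition matrices $\mathbf{Q}_X^t,\mathbf{Q}_E^t$ are text-independent, the true posterior satisfies $q(\mathbb{G}^{t-1}\mid\mathbb{G}^t,\mathbb{G}^0,\mathbb{S}) = P(\mathbb{G}^{t-1}\mid\mathbb{G}^t,\mathbb{G}^0)$ and admits the closed form given in Theorem~\ref{thm-text2graph}. Meanwhile the model reverse, by the decomposition in Eq.~\eqref{eq-generate-text}, is the mixture $P_{\boldsymbol{\Theta}}(\mathbb{G}^{t-1}\mid\mathbb{G}^t,\mathbb{S}) \propto \sum_{\mathbb{G}} P(\mathbb{G}^{t-1}\mid\mathbb{G}^t,\mathbb{G})\,P_{\boldsymbol{\Theta}}(\mathbb{G}\mid\mathbb{G}^t,\mathbb{S})$, whose first factor coincides with the one appearing in the true posterior. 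Thus each denoising KL compares two mixtures that share the same text-independent kernel $P(\mathbb{G}^{t-1}\mid\mathbb{G}^t,\mathbb{G})$ and differ only in their weights over the clean graph $\mathbb{G}$ — a point mass at $\mathbb{G}^0$ for the truth versus the predictor $P_{\boldsymbol{\Theta}}(\mathbb{G}\mid\mathbb{G}^t,\mathbb{S})$ for the model.

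The crux — and the step I expect to be the main obstacle — is to bound each denoising KL, up to a $\boldsymbol{\Theta}$-independent additive constant and a per-timestep reweighting, by the clean-graph cross-entropy $-\log P_{\boldsymbol{\Theta}}(\mathbb{G}^0\mid\mathbb{G}^t,\phi(S))$ appearing inside Eq.~\eqref{eq-text2graph-loss}. Following~\cite{vignac2022DiGress}, I would use the factorization of the graph distribution into independent node and edge categories together with the convexity of the KL divergence (a Jensen argument on the shared mixing kernel) to collapse the mixture-level KL into a sum of category-level cross-entropies against the predicted clean distribution; this is precisely where the qualifier \emph{essentially} enters, since the reduction is an inequality holding up to timestep reweighting rather than an exact identity. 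Summing over $t$ and taking the outer expectation over $(G,S)\sim P(\mathbb{G},\mathbb{S})$, $t\sim\mathcal{T}$, and $G^t\sim q(\mathbb{G}^t\mid\mathbb{G})$ then reproduces the loss of Eq.~\eqref{eq-text2graph-loss} up to $\boldsymbol{\Theta}$-independent constants, so minimizing that loss maximizes the ELBO and hence the variational lower bound on $\log P_{\boldsymbol{\Theta}}(\mathbb{G}^0,\mathbb{S})$, completing the argument.
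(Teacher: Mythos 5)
Your proposal is correct, and it shares its skeleton with the paper's proof — both introduce the forward chain $q(\mathbb{G}^{1:T}\mid\mathbb{G}^0)$ as a variational posterior and apply Jensen's inequality to $\log P_{\boldsymbol{\Theta}}(\mathbb{G}^0,\mathbb{S})$ after marginalizing over the latents — but it diverges meaningfully after that. The paper stays with the ``raw-ratio'' form of the bound, factoring it by Markovianity into $\mathbb{E}_q\bigl[\log P(\mathbb{G}^T,\mathbb{S}) + \sum_{t}\log\frac{P_{\boldsymbol{\Theta}}(\mathbb{G}^{t-1}\mid\mathbb{G}^t,\mathbb{S})}{q(\mathbb{G}^t\mid\mathbb{G}^{t-1})}\bigr]$, and then closes informally: since the decomposition of Eq.~\eqref{eq-generate-text} expresses $P_{\boldsymbol{\Theta}}(\mathbb{G}^{t-1}\mid\mathbb{G}^t,\mathbb{S})$ through $P_{\boldsymbol{\Theta}}(\mathbb{G}\mid\mathbb{G}^t,\mathbb{S})$, optimizing the cross-entropy loss of Eq.~\eqref{eq-text2graph-loss} ``leads to optimizing'' the summands of the bound — no quantitative link between the clean-graph cross-entropy and the per-step ELBO terms is established, and Theorem~\ref{thm-text2graph} is never invoked. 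You instead regroup into the DDPM-style ELBO with posteriors conditioned on $\mathbb{G}^0$ (which is why you need Theorem~\ref{thm-text2graph} to strip $\mathbb{S}$ from $q(\mathbb{G}^{t-1}\mid\mathbb{G}^t,\mathbb{G}^0,\mathbb{S})$), observe that both the true posterior and the model reverse are mixtures over the same text-independent kernel $P(\mathbb{G}^{t-1}\mid\mathbb{G}^t,\mathbb{G})$ with weights $\delta_{\mathbb{G}^0}$ versus $P_{\boldsymbol{\Theta}}(\mathbb{G}\mid\mathbb{G}^t,\mathbb{S})$, and then bound each denoising KL by $\mathrm{KL}(\delta_{\mathbb{G}^0}\,\|\,P_{\boldsymbol{\Theta}}(\cdot\mid\mathbb{G}^t,\mathbb{S})) = -\log P_{\boldsymbol{\Theta}}(\mathbb{G}^0\mid\mathbb{G}^t,\phi(S))$ via joint convexity of KL (equivalently, data processing through the shared kernel). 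That last inequality is exactly what the paper's proof leaves implicit, and it is what makes the word \emph{essentially} in the theorem statement precise: the loss of Eq.~\eqref{eq-text2graph-loss} is, up to timestep reweighting and $\boldsymbol{\Theta}$-independent constants, an upper bound on the KL terms, so minimizing it maximizes a valid (if looser) variational lower bound. In short, your route costs an extra lemma and a convexity argument but buys a rigorous loss-to-ELBO connection; the paper's route is shorter but rests on an unquantified appeal to the decomposition.
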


\begin{proof}
We start directly from the log-likelihood of the joint distribution of $P_{\Theta}(\mathbb{G}^0, \mathbb{S})$:
\begin{align}
    \log P_{\boldsymbol{\Theta}}(\mathbb{G}^0, \mathbb{S}) 
    &= \log \int P_{\boldsymbol{\Theta}}(\mathbb{G}^0, \mathbb{S}, \mathbb{G}^1, ..., \mathbb{G}^T)d(\mathbb{G}^1, \mathbb{G}^2, ..., \mathbb{G}^T)\label{eq-vlb-first} \\
    & = \log \int \frac{P_{\boldsymbol{\Theta}}(\mathbb{G}^0, \mathbb{S}, \mathbb{G}^1, ..., \mathbb{G}^T)}{q(\mathbb{G}^1, \mathbb{G}^2, ..., \mathbb{G}^T)}q(\mathbb{G}^1, \mathbb{G}^2, ..., \mathbb{G}^T)d(\mathbb{G}^1, \mathbb{G}^2, ..., \mathbb{G}^T) \\
    & = \log \mathbb{E}_{q(\mathbb{G}^1, \mathbb{G}^2, ..., \mathbb{G}^T)}\frac{P_{\boldsymbol{\Theta}}(\mathbb{G}^0, \mathbb{S}, \mathbb{G}^1, ..., \mathbb{G}^T)}{q(\mathbb{G}^1, \mathbb{G}^2, ..., \mathbb{G}^T)} \\
    & \ge \mathbb{E}_{q(\mathbb{G}^1, \mathbb{G}^2, ..., \mathbb{G}^T)}\log \frac{P_{\boldsymbol{\Theta}}(\mathbb{G}^0, \mathbb{S}, \mathbb{G}^1, ..., \mathbb{G}^T)}{q(\mathbb{G}^1, \mathbb{G}^2, ..., \mathbb{G}^T)} ~~~~~~~~\text{by Jensen’s inequality}\\
    & = \mathbb{E}_{q(\mathbb{G}^1, \mathbb{G}^1, ..., \mathbb{G}^T)}\log \frac{P(\mathbb{G}^{T}, \mathbb{S})\prod_{t = 1}^{T}P_{\boldsymbol{\Theta}}(\mathbb{G}^{t - 1}|\mathbb{G}^{t}, \mathbb{S})}{q(\mathbb{G}^1)\prod_{t = 2}^{T}q(\mathbb{G}^{t}|\mathbb{G}^{t - 1})}~~~~\text{by Markovian}\\
    & = \mathbb{E}_{q(\mathbb{G}^0, \mathbb{G}^1, ..., \mathbb{G}^T)}[\log P(\mathbb{G}^T, \mathbb{S}) + \sum_{t = 1}^{T}\log \frac{P_{\boldsymbol{\Theta}}(\mathbb{G}^{t - 1}|\mathbb{G}^{t}, \mathbb{S})}{q(\mathbb{G}^t|\mathbb{G}^{t - 1})} ] + \text{const}.\label{eq-vlb-last}
\end{align}

According to the decomposition in Eq~\eqref{eq-generate}, optimizing $\boldsymbol{\Theta}$ according to Eq.~\eqref{eq-text2graph-loss} leads to optimizing $P_{\boldsymbol{\Theta}}(\mathbb{G}^{t - 1}|\mathbb{G}^t, \mathbb{S})$, which corresponds to the second term in Eq.~\eqref{eq-vlb-last} and subsequently optimizes the variational lower bound of the log-likelihood $P_{\boldsymbol{\Theta}}(\mathbb{G}^0, \mathbb{S})$ according to the derivation from Eq.~\eqref{eq-vlb-first} to Eq.~\eqref{eq-vlb-last}. Therefore, training Text-to-Graph LGGM according to Eq.~\eqref{eq-text2graph-loss} enables the model to generate graphs such that the pairs of texts and graphs end up with higher likelihoods.

\end{proof}

\newpage
\section{Data Preparation}\label{app-data}
\subsection{Pre-processed Graphs for Training LGGMs}\label{app-graph-data}
We select graphs from the Network Repository across 13 distinct yet representative domains covering a wide variety of real-world scenarios, including Facebook (FB), Animal Social (ASN), Email, Web, Road, Power, Chemical (CHEM), Biological (BIO), Economic (ECON), Retweet (RT), Collaboration (COL), Ecological (ECO), Citation. Due to the scalability issue with diffusion-based graph generative models, we further sample subgraphs for certain domains, and Table~\ref{tab:network_summary} presents the comprehensive statistics of the sampled subgraphs, which are used for training LGGMs. We can see that graphs from different domains are statistically different.

\begin{table}[htbp!]
\tiny
\setlength{\extrarowheight}{.1pt}
\setlength\tabcolsep{4pt}
\centering
\caption{Summary of Graph Statistics. Facebook (FB), Animal Social (ASN), Email, Web, Road, Power, Chemical (CHEM), Biological (BIO), Economic (ECON), Retweet (RT), Collaboration (COL), Ecological (ECO), Citation.}
\begin{tabular}{lccccccccc}
\hline
\textbf{Category} & \textbf{\makecell{Num\\ Nodes}} & \textbf{\makecell{Num\\ Edges}} & \textbf{\makecell{Avg\\ Degree}} & \textbf{\makecell{Avg\\ Clustering}} & \textbf{\makecell{Max\\ Nodes}} & \textbf{\makecell{Min\\ Nodes}} & \textbf{\makecell{Max\\ Edges}} & \textbf{\makecell{Min\\ Edges}} & \textbf{\makecell{Num\\ Graphs}} \\ \hline
ASN & $52.47 \pm 40.13$ & $77.59 \pm 80.95$ & $2.62 \pm 1.52$ & $0.395 \pm 0.178$ & 283 & 3 & 515 & 2 & 267 \\
BIO & $191.14 \pm 43.47$ & $965.71 \pm 878.35$ & $9.16 \pm 7.69$ & $0.276 \pm 0.199$ & 258 & 109 & 4392 & 96 & 504 \\
CHEM & $36.46 \pm 20.49$ & $64.61 \pm 26.23$ & $3.75 \pm 0.63$ & $0.421 \pm 0.223$ & 125 & 2 & 149 & 1 & 646 \\
Citation & $235.91 \pm 27.25$ & $1287.16 \pm 1087.00$ & $10.17 \pm 8.14$ & $0.369 \pm 0.224$ & 270 & 175 & 4474 & 188 & 504 \\
COL & $174.26 \pm 53.82$ & $312.56 \pm 176.33$ & $3.41 \pm 1.24$ & $0.497 \pm 0.203$ & 247 & 52 & 996 & 68 & 504 \\
ECO & $100.67 \pm 30.10$ & $1490.00 \pm 673.87$ & $27.72 \pm 7.00$ & $0.406 \pm 0.082$ & 128 & 54 & 2106 & 353 & 6 \\
ECON & $144.18 \pm 35.82$ & $3258.76 \pm 3540.28$ & $39.76 \pm 37.80$ & $0.419 \pm 0.296$ & 219 & 90 & 11142 & 188 & 504 \\
Email & $146.67 \pm 35.86$ & $681.55 \pm 500.28$ & $9.79 \pm 7.26$ & $0.389 \pm 0.211$ & 213 & 82 & 2909 & 216 & 504 \\
Power & $132.22 \pm 20.29$ & $289.32 \pm 183.02$ & $4.35 \pm 2.31$ & $0.161 \pm 0.164$ & 187 & 81 & 1332 & 133 & 512 \\
Road & $265.25 \pm 94.31$ & $276.46 \pm 79.61$ & $2.70 \pm 2.08$ & $0.078 \pm 0.134$ & 411 & 32 & 456 & 137 & 504 \\
RT & $104.11 \pm 35.23$ & $110.99 \pm 46.44$ & $2.11 \pm 0.37$ & $0.028 \pm 0.038$ & 175 & 35 & 295 & 34 & 558 \\
FB & $219.45 \pm 47.05$ & $1863.44 \pm 701.53$ & $16.36 \pm 6.17$ & $0.315 \pm 0.083$ & 259 & 48 & 3898 & 46 & 504 \\
Web & $173.32 \pm 24.86$ & $462.21 \pm 336.46$ & $5.09 \pm 3.06$ & $0.404 \pm 0.196$ & 231 & 119 & 1607 & 149 & 504 \\ \hline
\end{tabular}
\label{tab:network_summary}
\end{table}

\subsection{Preparation of Graphs and Textual Description About Their Domains/Names}\label{app-graph-name-data}
Here we thoroughly discuss the process of obtaining graphs and their corresponding text prompts describing their domains/names. As given by the Network Repository, we directly download graphs along with their domains/names. We then prompt GPT3.5 to generate user prompts describing the graph given its domain/name. The concrete prompt template we use here is shown in Listing~\ref{list-Domain-Name} with exemplary generated user prompts shown in Listing~\ref{list-Domain-Name-Example}. Moreover, we apply the sentence transformer to obtain text embeddings of the generated prompts for each network and perform t-SNE visualization. As shown in Figure~\ref{fig-tsne-domain}, we see prompts for graphs from different domains from different clusters. More importantly, textual similarity can somewhat reflect their network similarity. For example, prompts for road and power networks are very close, and they both belong to infrastructure. Moreover, Facebook Networks, Email Networks, Collaboration Networks, Web Graphs are very close since all these four belong to some sub-variants of social networks. \textit{This inherent relationship between the textual similarity and structural similarity between two graphs demonstrates that the world knowledge encoded in the text could somehow provide useful preference for the graphs to be generated.}

\begin{figure*}[htbp!]
     \centering
     \begin{subfigure}[b]{0.325\textwidth}
         \centering
         \includegraphics[width=\textwidth]{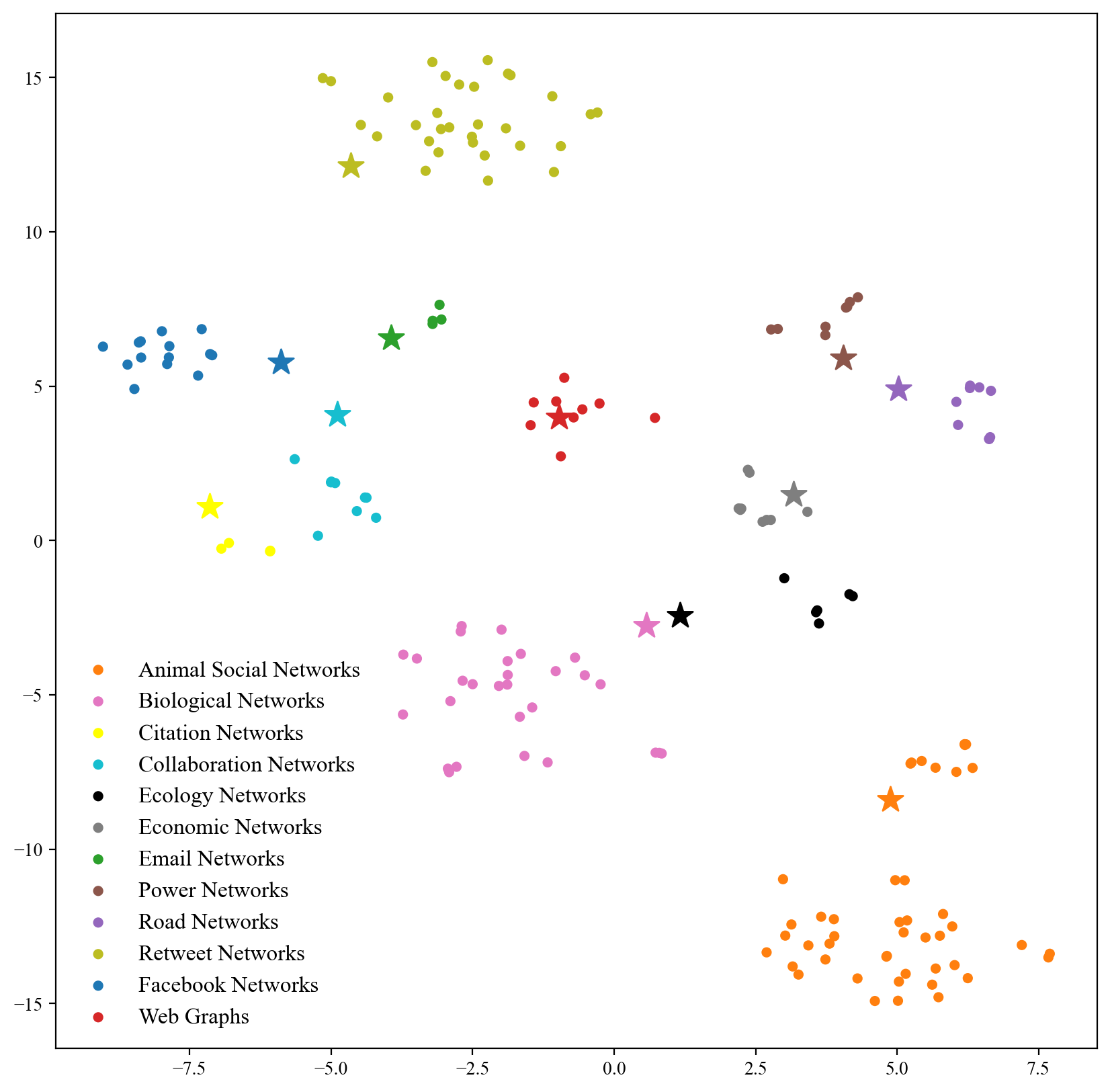}
         \caption{Domain and Name}
         \label{fig-tsne-domain}
     \end{subfigure}
     \hfill
     \begin{subfigure}[b]{0.325\textwidth}
         \centering
         \includegraphics[width=\textwidth]{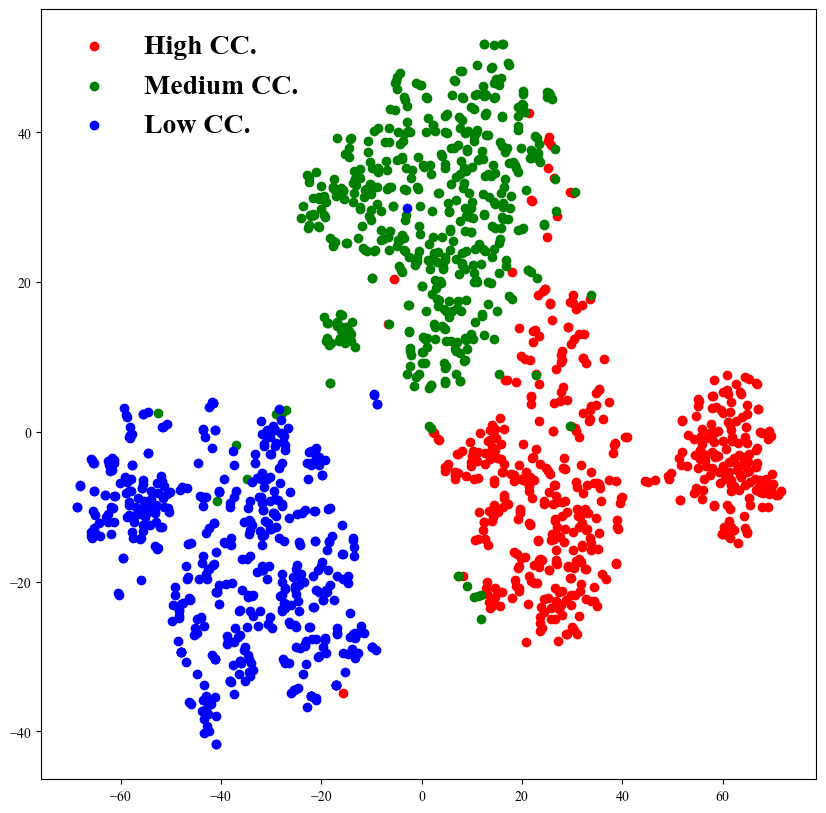}
         \caption{Average Clustering Coefficient}
         \label{fig-tsne-cc}
     \end{subfigure}
     \hfill
     \begin{subfigure}[b]{0.325\textwidth}
         \centering
         \includegraphics[width=\textwidth]{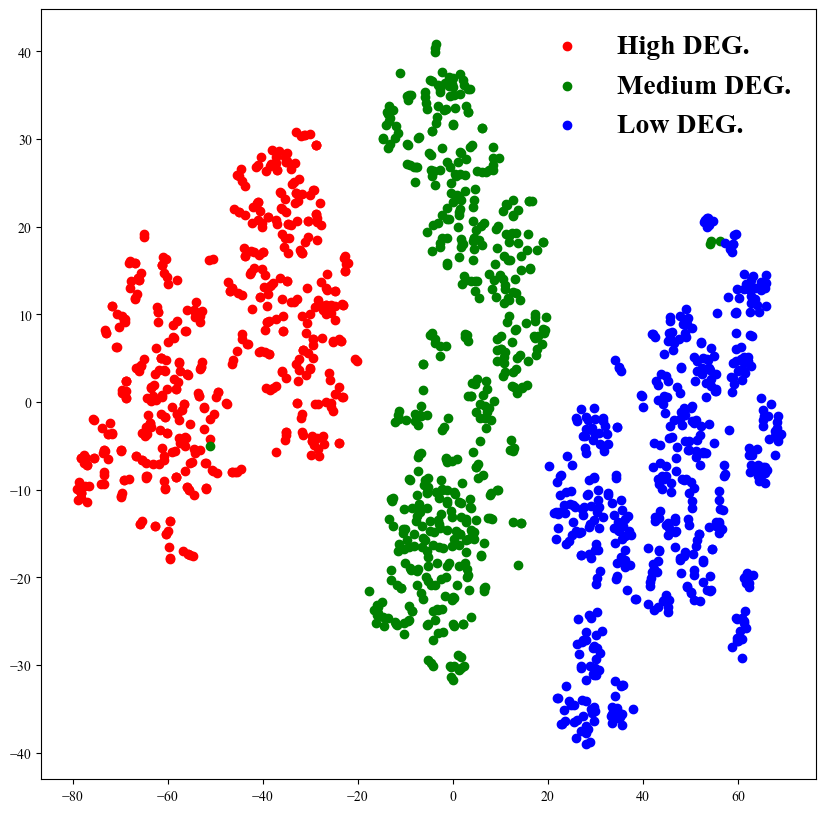}
         \caption{Average Degree}
         \label{fig-tsne-deg}
     \end{subfigure}
    \caption{t-SNE visualization of textual description about network (a) domain/name (b) average clustering coefficient (c) average degree.}
    \label{fig-tsne-prompt}
\end{figure*}

\newpage
\colorlet{shadecolor}{gray!10}
\UseRawInputEncoding
\lstset{breaklines=true, columns=fullflexible, backgroundcolor=\color{shadecolor}}
\lstinputlisting[basicstyle=\footnotesize, title={Prompt Template for Generating Textual Description about Network Domain/Name.}, caption={Prompt Template for Generating Textual Description about Network Domain/Name.}, label={list-Domain-Name}]{Prompt/Domain-Name.txt}

\colorlet{shadecolor}{gray!10}
\UseRawInputEncoding
\lstset{breaklines=true, columns=fullflexible, backgroundcolor=\color{shadecolor}}
\lstinputlisting[basicstyle=\footnotesize, title={Examples of Textual Description about Network Domain/Name.}, caption={Examples of Textual Description about Network Domain/Name.}, label={list-Domain-Name-Example}]{Prompt/Domain-Name-Example.txt}

\subsection{Preparing Graphs and Their Textual Description about Graph Property}\label{app-graph-property-data}
Here we thoroughly discuss the process of obtaining graphs and their corresponding text prompts describing their properties. Our goal is to demonstrate that Text2Graph LGGM can control the statistics of the generated graphs in the full spectrum. However, the graphs obtained directly from the Network Repository do not cover the whole topological space (e.g., Figure~\ref{fig-motivation}(a) shows that no networks have a higher average degree while low clustering coefficient). Therefore, we plan to synthesize graphs covering the whole space by Watts-Strogatz Small-world Graph Model. We vary the number of nodes between [10, 110], the number of initial neighbors between [5, number of nodes], and also the probability of rewiring each edge between [0, 1] to ensure the generated graphs span across the full spectrum. After that, we group the generated graphs into low, medium, and high groups in terms of their clustering coefficient and average degree. We implement this using~\href{https://networkx.org/documentation/stable/reference/generated/networkx.generators.random_graphs.watts_strogatz_graph.html}{NetworkX}.

After we synthesize graphs and divide them into three groups, we generate user prompts paired with these graphs next. Specifically, we prompt GPT4 following the templates in Listing~\ref{list-Property}/\ref{list-Property-Example}. To ensure the compatibility between the synthesis graphs and the generated user prompts. We further replace the number output by GPT4 describing the network property with the real statistic calculated from each network.

\colorlet{shadecolor}{gray!10}
\UseRawInputEncoding
\lstset{breaklines=true, columns=fullflexible, backgroundcolor=\color{shadecolor}}
\lstinputlisting[basicstyle=\footnotesize, title={Prompt Template for Generating Textual Description about Network Property.}, caption={Prompt Template for Generating Textual Description about Network Property.}, label={list-Property}]{Prompt/Property.txt}

\colorlet{shadecolor}{gray!10}
\UseRawInputEncoding
\lstset{breaklines=true, columns=fullflexible, backgroundcolor=\color{shadecolor}}
\lstinputlisting[basicstyle=\footnotesize, title={Examples of Textual Description about Network Property.}, caption={Examples of Textual Description about Network Property.}, label={list-Property-Example}]{Prompt/Property-Example.txt}

\newpage
\section{Experimental Setting}\label{app-expr}
\vspace{-1ex}
\subsection{Evaluation Metrics}\label{app-metrics}
\vspace{-1ex}
Following~\cite{thompson2022evaluation, you2018graphrnn}, we evaluate the graph generation performance by the standard Maximum Mean Discrepancy (MMD) between generated and reference graphs $\mathcal{G}_{g}, \mathcal{G}_{r}$:
\begin{equation}
    \text{MMD}(\mathcal{G}_g, \mathcal{G}_r) = \frac{1}{m^2}\sum_{i,j=1}^{m}k(\mathbf{x}_{i}^r, \mathbf{x}_{j}^r) + \frac{1}{n^2}\sum_{i,j=1}^nk(\mathbf{x}_i^g, \mathbf{x}_j^g) - \frac{2}{nm}\sum_{i=1}^{n}\sum_{j=1}^mk(\mathbf{x}_i^g, \mathbf{x}_j^r),
\end{equation}
where $k(\cdot, \cdot)$ is a general kernel function and specifically we use RBF kernel following~\cite{you2018graphrnn}:
\begin{equation}
    k(\mathbf{x}_i, \mathbf{x}_j) = \exp(-d(\mathbf{x}_i, \mathbf{x}_j)/2\sigma^2),
\end{equation}

where $d(\cdot, \cdot)$ computes pairwise distance following~\cite{vignac2022DiGress} and MMD is evaluated over the distributions of degree (DEG), clustering coefficients (CC), eigenvalues of normalized Laplacian matrix (Spec) and orbits counts representing the distribution of all substructures of size 4 (Orb).

\vspace{-1ex}
\subsection{Hyperparameter Details}\label{app-hyperparameter}
\vspace{-1ex}
For all experiments, we select the best configuration according to the generation performance on validation graphs and report the final performance on generating testing graphs. We adopt the default hyperparameter settings from DiGress~\cite{vignac2022DiGress} with the following exceptions: we generate 100 graphs per domain for each evaluation and set the training epochs at 300 to ensure convergence. Additionally, we implement gradient accumulation, using a mini-batch size of 12 across 4 accumulations, resulting in an effective batch size of 48. For Text-to-Graph Generation, the textual encoder used to obtain textual description embeddings is "all-MiniLM-L6-v2". All experiments are performed on a machine with A100-80G GPU RAM and 128GB RAM.

\vspace{-1ex}
\subsection{Paradigm Setup}\label{app-expr-setup}
\vspace{-1ex}
Figure~\ref{fig-experiment-setup} comprehensively visualizes the training/evaluation paradigms of the four experiments, the details of which are discussed in Section~\ref{sec-expr-setup}.
\begin{figure*}[htbp!]
     \centering
     \includegraphics[width=0.98\textwidth]{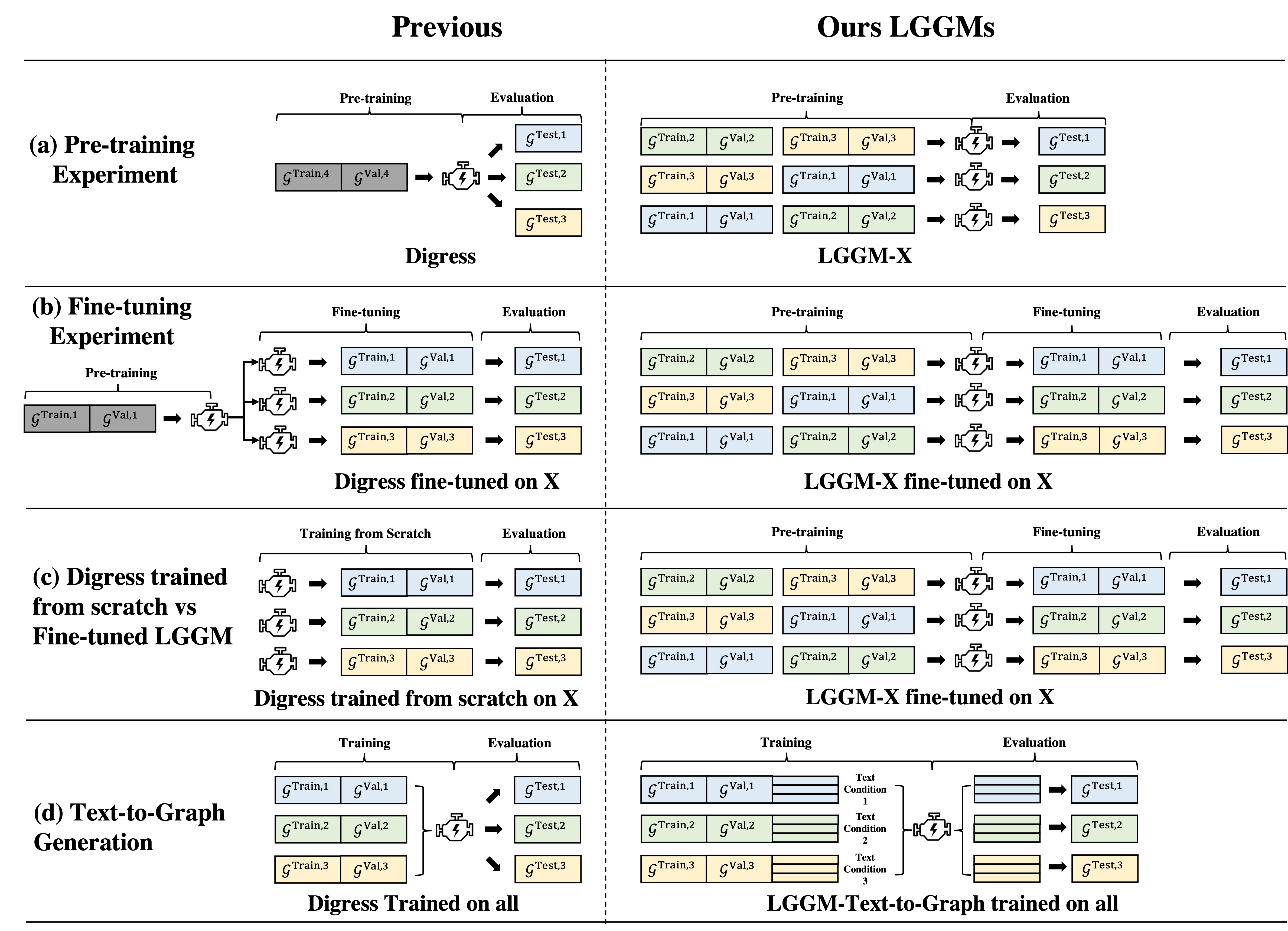}
     \caption{Comprehensive Overview of the Experimental Setup for our LGGMs.}
     \label{fig-experiment-setup}
\end{figure*}

\newpage
\section{Full Experimental Results}\label{app-expr-full}
\subsection{Out-of-domain Performance Comparison between DiGress and LGGM}\label{app-pretrain}
\subsubsection{Domain Specific Transition Strategy}
\begin{table*}[htbp!]
\scriptsize
\setlength\tabcolsep{5.5pt}
\caption{Comparing Zero-shot Generation Performance on Unseen Graphs in domain X between DiGress trained on QM9 and LGGM-X pre-trained on all domains except the held-out domain X.}
\label{app-tab-pretrain-marginal}
\centering
\begin{tabular}{llcccc|llcccc}
\toprule
\textbf{Domain} & \textbf{Method} & \textbf{DEG} & \textbf{CC} & \textbf{Spec} & \textbf{Orb} & \textbf{Domain} & \textbf{Method} & \textbf{DEG} & \textbf{CC} & \textbf{Spec} & \textbf{Orb} \\
\toprule
\multirow{2}{*}{\textsc{FB}} & DiGress & \textbf{0.2695} & \textbf{0.3452} & \textbf{0.0649} & \textbf{0.1489} & \multirow{2}{*}{BIO} & DiGress & 0.2419 & \textbf{0.2993} & \textbf{0.1101} & \textbf{0.2978} \\
 & LGGM-X & 0.4962 & 0.7625 & 0.3408 & 0.7982 &  & LGGM-X & \textbf{0.2117} & 0.6365 & 0.1690 & 0.5156\\
 \midrule
\multirow{2}{*}{\textsc{ASN}} & DiGress & 0.1793 & 0.4721 & 0.1751 & 0.5654 & \multirow{2}{*}{\textsc{ECON}} & DiGress & 0.2811 & 0.2042 & 0.2028 & 0.2633 \\
 & LGGM-X & \textbf{0.0220} & \textbf{0.4044} & \textbf{0.1274} & \textbf{0.0505} & & LGGM-X & \textbf{0.1916} & \textbf{0.0917} & \textbf{0.1219} & \textbf{0.0640} \\
 \midrule
\multirow{2}{*}{\textsc{Email}} & DiGress & \textbf{0.2312} & \textbf{0.5444} & \textbf{0.0674} & \textbf{0.2650} & \multirow{2}{*}{\textsc{RT}} & DiGress & 0.4466 & 0.4170 & 0.4483 & 0.4551 \\
 & LGGM-X & 0.2618 & 0.8650 & 0.3013 & 1.0459 & & LGGM-X & \textbf{0.0721} & \textbf{0.0517} & \textbf{0.2331} & \textbf{0.4085} \\
 \midrule
\multirow{2}{*}{\textsc{Web}} & DiGress & 0.2575 & \textbf{0.5955} & 0.1907 & 0.9282 & \multirow{2}{*}{\textsc{Col}} & DiGress & 0.2393 & \textbf{0.5341} & 0.2247 & 0.7619 \\
 & LGGM-X &  \textbf{0.1491} & 0.9436 & \textbf{0.1154} & \textbf{0.4016} & & LGGM-X & \textbf{0.1493} & 0.9200 & \textbf{0.1786} & \textbf{0.2057}\\
 \midrule
\multirow{2}{*}{\textsc{ROAD}} & DiGress & 0.4111 & 0.6653 & 0.3084 & 0.6530 & \multirow{2}{*}{\textsc{Eco}} & DiGress & 0.4580 & 0.4546 & 0.2144 & 0.4417 \\
 & LGGM-X & \textbf{0.0379} & \textbf{0.1191} & \textbf{0.0759} & \textbf{0.0401} & & LGGM-X & \textbf{0.2049} & \textbf{0.2760} & \textbf{0.0691} & \textbf{0.2107}  \\
 \midrule
\multirow{2}{*}{\textsc{Power}} & DiGress & 0.5292 & \textbf{0.6083} & 0.3556 & 1.2124 & \multirow{2}{*}{\textsc{Citation}} & DiGress & 0.3159 & \textbf{0.3664} & 0.1299 & \textbf{0.2278}\\
 & LGGM-X & \textbf{0.0343} & 0.6290 & \textbf{0.0649} & \textbf{0.0228} & & LGGM-X & \textbf{0.1314} & 0.8908 & \textbf{0.1188} & 0.6391 \\
 \midrule
\multirow{2}{*}{\textsc{All}} & DiGress & 0.3217 & \textbf{0.4589} & 0.2077 & 0.5184  \\
 & LGGM-X & \textbf{0.1635} & 0.5492 & \textbf{0.1597} & \textbf{0.3669}  \\
 \bottomrule
\end{tabular}
\end{table*}

\subsubsection{Uniform Transition Strategy}
\begin{table*}[htbp!]
\scriptsize
\setlength\tabcolsep{5.5pt}
\caption{Comparing Zero-shot Generation Performance on Unseen Graphs in domain X between DiGress trained on QM9 and LGGM-X pre-trained on all domains except the held-out domain X.}
\label{app-tab-pretrain-uniform}
\centering
\begin{tabular}{llcccc|llcccc}
\toprule
\textbf{Domain} & \textbf{Method} & \textbf{DEG} & \textbf{CC} & \textbf{Spec} & \textbf{Orb} & \textbf{Domain}  & \textbf{Method} & \textbf{DEG} & \textbf{CC} & \textbf{Spec} & \textbf{Orb} \\
\toprule
\multirow{2}{*}{\textsc{FB}} & DiGress & \textbf{0.3376} & \textbf{0.6298} & \textbf{0.0797} & \textbf{0.3593} & \multirow{2}{*}{BIO} & DiGress & 0.2712 & 0.5202 & 0.1127 & 0.3188 \\
 & LGGM-X & 0.4723 & 0.6843 & 0.2924 & 0.7555 &  & LGGM-X & \textbf{0.1081} & \textbf{0.2696} & \textbf{0.0900} & \textbf{0.2053}\\
 \midrule
\multirow{2}{*}{\textsc{ASN}} & DiGress & 0.1496 & 0.3258 & 0.1506 & 0.4420 & \multirow{2}{*}{\textsc{ECON}} & DiGress & 0.2987 & 0.4841 & 0.2162 & 0.3834 \\
 & LGGM-X & \textbf{0.0281} & \textbf{0.2440} & \textbf{0.0830} & \textbf{0.0618} &  & LGGM-X & \textbf{0.1213} & \textbf{0.0920} & \textbf{0.1120} & \textbf{0.1086} \\
 \midrule
\multirow{2}{*}{\textsc{Email}} & DiGress & 0.2192 & 0.6012 & 0.0702 & 0.3416 & \multirow{2}{*}{\textsc{RT}}  & DiGress & 0.4164 & 0.1327 & 0.4147 & 0.5957 \\
 & LGGM-X & \textbf{0.0751} & \textbf{0.2364} & \textbf{0.0768} & \textbf{0.3089} &  & LGGM-X & \textbf{0.0525} & \textbf{0.1429} & \textbf{0.1330} & \textbf{0.2219} \\
 \midrule
\multirow{2}{*}{\textsc{Web}} & DiGress & 0.2556 & 0.6186 & 0.1877 & 0.6045 & \multirow{2}{*}{\textsc{Col}}  & DiGress & 0.2473 & 0.5826 & 0.2314 & 0.7679 \\
 & LGGM-X &  \textbf{0.0648} & \textbf{0.3961} & \textbf{0.0549} & \textbf{0.1127} &  & LGGM-X & \textbf{0.0736} & \textbf{0.5769} & \textbf{0.0895} & \textbf{0.0988}\\
 \midrule
\multirow{2}{*}{\textsc{ROAD}} & DiGress & 0.3705 & 0.8226 & 0.2801 & 0.7198 & \multirow{2}{*}{\textsc{Eco}}  & DiGress & 0.5431 & 0.7915 & 0.2338 & 0.6045 \\
 & LGGM-X & \textbf{0.0713} & \textbf{0.2193} & \textbf{0.0987} & \textbf{0.2986} & & LGGM-X & \textbf{0.4753} & \textbf{0.3904} & 0.3194 & \textbf{0.3934}  \\
 \midrule
\multirow{2}{*}{\textsc{Power}} & DiGress & 0.3726 & 0.4582 & 0.3270 & 1.4732 & \multirow{2}{*}{\textsc{Citation}}  & DiGress & 0.2527 & 0.7790 & 0.1315 & \textbf{0.4966}\\
 & LGGM-X & \textbf{0.0119} & \textbf{0.1293} & \textbf{0.0373} & \textbf{0.0754} &  & LGGM-X & \textbf{0.1348} & \textbf{0.7257} & \textbf{0.1160} & 0.4981 \\
 \midrule
 \multirow{2}{*}{\textsc{All}} & DiGress & 0.3112 & 0.5622 & 0.2030 & 0.5923\\
 & LGGM-X & \textbf{0.1408} & \textbf{0.3422} & \textbf{0.1253} & \textbf{0.2616} \\
 \bottomrule
\end{tabular}
\end{table*}

\newpage
\subsection{Performance Comparison between Fine-tuned DiGress and Fine-tuned LGGM}\label{app-expr-fine-tune}
\subsubsection{Domain Specific Transition Strategy}
\begin{table*}[htbp!]
\scriptsize
\setlength\tabcolsep{5.5pt}
\caption{Comparing Graph Generation Performance between Fine-tuned DiGress and Fine-tuned LGGM on each domain. DiGress-FT: DiGress pre-trained on QM9 and fine-tuned on domain X; LGGM-FT: LGGM pre-trained on all other domains except X and fine-tuned on X under \textbf{Domain Specific Transition Strategy}.}
\label{app-tab-ft-marginal}
\centering
\begin{tabular}{llccccllcccc}
\toprule
\textbf{Domain} & \textbf{Method} & \textbf{DEG} & \textbf{CC} & \textbf{Spec} & \textbf{Orb} & \textbf{Domain} & \textbf{Method} & \textbf{DEG} & \textbf{CC} & \textbf{Spec} & \textbf{Orb} \\
\toprule
\multirow{2}{*}{FB} & DiGress-FT & 0.0159 & 0.0564 & 0.0082 & 0.0298 & \multirow{2}{*}{BIO} & DiGress-FT & 0.0391 & 0.0354 & 0.0347 & 0.0291 \\
 & LGGM-FT & \textbf{0.0065} & \textbf{0.0544} & \textbf{0.0069} & \textbf{0.0282} & & LGGM-FT & \textbf{0.0036} & \textbf{0.0303} & \textbf{0.0102} & \textbf{0.0342} \\
 \midrule
\multirow{2}{*}{ASN} & DiGress-FT & 0.0189 & 0.0775 & 0.0729 & 0.0886 & \multirow{2}{*}{ECON} & DiGress-FT & 0.0301 & 0.0431 & 0.0372 & 0.0392 \\
 & LGGM-FT & \textbf{0.0014} & \textbf{0.0509} & \textbf{0.0161} & \textbf{0.0084} & & LGGM-FT & \textbf{0.0215} & \textbf{0.0330} & \textbf{0.0062} & \textbf{0.0249} \\
 \midrule
\multirow{2}{*}{EMAIL} & DiGress-FT & 0.0208 & 0.0448 & 0.0230 & \textbf{0.0447} & \multirow{2}{*}{RT} & DiGress-FT & 0.0054 & 0.0464 & 0.0051 & 0.0437 \\
 & LGGM-FT & \textbf{0.0166} & \textbf{0.0364} & \textbf{0.0104} & 0.0463 & & LGGM-FT & \textbf{0.0012} & \textbf{0.0075} & \textbf{0.0033} & \textbf{0.0162} \\
 \midrule
\multirow{2}{*}{WEB} & DiGress-FT & 0.0192 & 0.0808 & 0.0664 & 0.1361 & \multirow{2}{*}{COL} & DiGress-FT & 0.0255 & 0.2279 & 0.0788 & 0.0731 \\
  & LGGM-FT & \textbf{0.0116} & \textbf{0.0721} & \textbf{0.0152} & \textbf{0.0656} & & LGGM-FT & \textbf{0.0202} & \textbf{0.1621} & \textbf{0.0571} & \textbf{0.0631} \\
 \midrule
\multirow{2}{*}{ROAD} & DiGress-FT & 0.0907 & 0.1404 & 0.1099 & 0.1097 & \multirow{2}{*}{ECO} & DiGress-FT & 0.1370 & 0.2747 & 0.0476 & 0.2109 \\
 & LGGM-FT & \textbf{0.0088} & \textbf{0.1349} & \textbf{0.0347} & \textbf{0.0125} & & LGGM-FT & \textbf{0.0196} & \textbf{0.2343} & \textbf{0.0291} & \textbf{0.2100} \\
 \midrule
\multirow{2}{*}{POWER} & DiGress-FT & 0.0104 & 0.2197 & 0.1023 & 0.0445 & \multirow{2}{*}{CITATION} & DiGress-FT & 0.0363 & 0.1140 & 0.0469 & 0.0423 \\
 & LGGM-FT & \textbf{0.0008} & \textbf{0.1539} & \textbf{0.0215} & \textbf{0.0081} & & LGGM-FT & \textbf{0.0078} & \textbf{0.0827} & \textbf{0.0137} & \textbf{0.0316} \\
 \midrule
\multirow{2}{*}{All} & DiGress-FT & 0.0374 & 0.1134 & 0.0528 & 0.0743 & \\
 & LGGM-FT & \textbf{0.0010} & \textbf{0.0877} & \textbf{0.0187} & \textbf{0.0458} &  &  &  &  & \\
 \bottomrule
\end{tabular}
\end{table*}

\subsubsection{Uniform Transition Strategy}
\begin{table*}[htbp!]
\scriptsize
\setlength\tabcolsep{5.5pt}
\caption{Comparing Graph Generation Performance between Fine-tuned DiGress and Fine-tuned LGGM on each domain. DiGress-FT: DiGress pre-trained on QM9 and fine-tuned on domain X; LGGM-FT: LGGM pre-trained on all other domains except X and fine-tuned on X under \textbf{Uniform Transition Strategy}.}
\label{app-tab-ft-uniform}
\centering
\begin{tabular}{llccccllcccc}
\toprule
\textbf{Domain} & \textbf{Method} & \textbf{DEG} & \textbf{CC} & \textbf{Spec} & \textbf{Orb} & \textbf{Domain} & \textbf{Method} & \textbf{DEG} & \textbf{CC} & \textbf{Spec} & \textbf{Orb} \\
\toprule
\multirow{2}{*}{FB} & DiGress-FT & \textbf{0.0039} & 0.0650 & 0.0090 & 0.0304 & \multirow{2}{*}{BIO} & DiGress-FT & 0.0274 & 0.0845 & 0.0493 & 0.0312 \\
 & LGGM-FT & \textbf{0.0050} & \textbf{0.0579} & \textbf{0.0059} & \textbf{0.0280} &  & LGGM-FT & \textbf{0.0049} & \textbf{0.0496} & \textbf{0.0056} & \textbf{0.0257} \\
\midrule
\multirow{2}{*}{ASN} & DiGress-FT & 0.0249 & 0.5604 & 0.0779 & 0.0348 & \multirow{2}{*}{ECON}  & DiGress-FT & \textbf{0.0133} & \textbf{0.0355} & 0.0223 & \textbf{0.0360} \\
 & LGGM-FT & \textbf{0.0058} & \textbf{0.1098} & \textbf{0.0311} & \textbf{0.0101} &  & LGGM-FT & 0.0597 & 0.0594 & \textbf{0.0216} & 0.0535 \\
\midrule
\multirow{2}{*}{EMAIL} & DiGress-FT & 0.0134 & 0.0709 & 0.0223 & 0.0694 & \multirow{2}{*}{RT}  & DiGress-FT & 0.0418 & 0.0243 & 0.0495 & 0.0583 \\
 & LGGM-FT & \textbf{0.0120} & \textbf{0.0559} & \textbf{0.0158} & \textbf{0.0444} &  & LGGM-FT & \textbf{0.0032} & \textbf{0.0163} & \textbf{0.0051} & \textbf{0.0227} \\
\midrule
\multirow{2}{*}{WEB} & DiGress-FT & 0.0327 & 0.2025 & 0.0858 & 0.2033 & \multirow{2}{*}{COL}  & DiGress-FT & \textbf{0.0562} & 0.7070 & \textbf{0.1086} & 0.1471 \\
 & LGGM-FT & \textbf{0.0218} & \textbf{0.1398} & \textbf{0.0310} & \textbf{0.1262} &  & LGGM-FT & 0.1074 & \textbf{0.4265} & 0.1398 & \textbf{0.0897} \\
\midrule
\multirow{2}{*}{ROAD} & DiGress-FT & 0.0843 & 0.1010 & 0.1873 & 0.5155 & \multirow{2}{*}{ECO}  & DiGress-FT & 0.1118 & 0.3016 & 0.0548 & 0.2102 \\
 & LGGM-FT & \textbf{0.0081} & \textbf{0.0547} & \textbf{0.0573} & \textbf{0.0228} &  & LGGM-FT & \textbf{0.0204} & \textbf{0.2347} & \textbf{0.0404} & \textbf{0.2100} \\
\midrule
\multirow{2}{*}{POWER} & DiGress-FT & 0.0231 & 0.1029 & 0.0683 & 0.0441 & \multirow{2}{*}{CITATION}  & DiGress-FT & 0.0277 & 0.1622 & 0.0501 & 0.0813 \\
 & LGGM-FT & \textbf{0.0077} & \textbf{0.0570} & \textbf{0.0134} & \textbf{0.0040} &  & LGGM-FT & \textbf{0.0052} & \textbf{0.0821} & \textbf{0.0221} & \textbf{0.0443} \\
 \midrule
\multirow{2}{*}{All} & DiGress-FT & 0.0384 & 0.2015 & 0.0654 & 0.1218 \\
 & LGGM-FT & \textbf{0.0218} & \textbf{0.1120} & \textbf{0.0324} & \textbf{0.0568}\\
\bottomrule
\end{tabular}
\end{table*}

\newpage
\subsection{Performance Comparison between DiGress directly trained on X and Fine-tuned LGGM}\label{app-expr-train-scratch}

\subsubsection{Domain Specific Transition}
\begin{table*}[htbp!]
\scriptsize
\setlength\tabcolsep{5.3pt}
\caption{Comparing Graph Generation Performance between DiGress and Fine-tuned LGGM on each domain. DiGress: DiGress trained directly on domain X; LGGM-FT: LGGM pre-trained on all other domains except X and fine-tuned on X under \textbf{Domain Specific Transition Strategy}.}
\label{app-tab-ft-direct-marginal}
\centering
\begin{tabular}{llccccllcccc}
\toprule
\textbf{Domain} & \textbf{Method} & \textbf{DEG} & \textbf{CC} & \textbf{Spec} & \textbf{Orb} & \textbf{Domain} & \textbf{Method} & \textbf{DEG} & \textbf{CC} & \textbf{Spec} & \textbf{Orb} \\
\toprule
\multirow{2}{*}{FB} & DiGress & 0.0423 & 0.0718 & 0.0243 & 0.0298 & \multirow{2}{*}{BIO} & DiGress & 0.0481 & 0.1286 & 0.0487 & 0.0460 \\
& LGGM-FT & \textbf{0.0065} & \textbf{0.0544} & \textbf{0.0069} & \textbf{0.0282} &  & LGGM-FT & \textbf{0.0036} & \textbf{0.0303} & \textbf{0.0102} & \textbf{0.0342} \\
\midrule
\multirow{2}{*}{ASN} & DiGress & 0.0319 & 0.0835 & 0.0679 & 0.1463 & \multirow{2}{*}{ECON}  & DiGress & 0.0224 & 0.0361 & 0.0084 & 0.0325 \\
& LGGM-FT & \textbf{0.0014} & \textbf{0.0509} & \textbf{0.0161} & \textbf{0.0084} &  & LGGM-FT & \textbf{0.0215} & \textbf{0.0330} & \textbf{0.0062} & \textbf{0.0249} \\
\midrule
\multirow{2}{*}{EMAIL} & DiGress & \textbf{0.0145} & 0.0671 & 0.0143 & 0.0558 & \multirow{2}{*}{RT}  & DiGress & 0.0035 & 0.0111 & 0.0094 & 0.0207 \\
& LGGM-FT & 0.0166 & \textbf{0.0364} & \textbf{0.0104} & \textbf{0.0463} &  & LGGM-FT & \textbf{0.0012} & \textbf{0.0075} & \textbf{0.0033} & \textbf{0.0162} \\
\midrule
\multirow{2}{*}{WEB} & DiGress & 0.0204 & 0.0778 & 0.0695 & 0.1101 & \multirow{2}{*}{COL}  & DiGress & 0.0278 & 0.2192 & 0.0669 & 0.0284 \\
& LGGM-FT & \textbf{0.0116} & \textbf{0.0721} & \textbf{0.0152} & \textbf{0.0656} &  & LGGM-FT & \textbf{0.0202} & \textbf{0.1621} & \textbf{0.0571} & \textbf{0.0631} \\
\midrule
\multirow{2}{*}{ROAD} & DiGress & 0.0333 & \textbf{0.1342} & 0.0932 & 0.0861 & \multirow{2}{*}{ECO}  & DiGress & 0.0268 & 0.2356 & 0.0339 & 0.2100 \\
& LGGM-FT & \textbf{0.0088} & 0.1349 & \textbf{0.0347} & \textbf{0.0125} &  & LGGM-FT & \textbf{0.0196} & \textbf{0.2343} & \textbf{0.0291} & \textbf{0.2100} \\
\midrule
\multirow{2}{*}{POWER} & DiGress & 0.0143 & 0.2050 & 0.0776 & 0.0392 & \multirow{2}{*}{CITATION}  & DiGress & 0.0406 & 0.1790 & 0.0677 & 0.0944 \\
& LGGM-FT & \textbf{0.0008} & \textbf{0.1539} & \textbf{0.0215} & \textbf{0.0081} &  & LGGM-FT & \textbf{0.0078} & \textbf{0.0827} & \textbf{0.0137} & \textbf{0.0316} \\
\midrule
\multirow{2}{*}{All} & DiGress & 0.0272 & 0.1208 & 0.0485 & 0.0749\\
& LGGM-FT & \textbf{0.0100} & \textbf{0.0877} & \textbf{0.0187} & \textbf{0.0458}\\
\bottomrule
\end{tabular}
\end{table*}

\subsubsection{Uniform Transition}

\begin{table*}[htbp!]
\scriptsize
\setlength\tabcolsep{5.3pt}
\caption{Comparing Graph Generation Performance between DiGress and Fine-tuned LGGM on each domain. DiGress: DiGress trained directly on domain X; LGGM-FT: LGGM pre-trained on all other domains except X and fine-tuned on X under \textbf{Uniform Transition Strategy}.}
\label{app-tab-ft-direct-uniform}
\centering
\begin{tabular}{llccccllcccc}
\toprule
\textbf{Domain} & \textbf{Method} & \textbf{DEG} & \textbf{CC} & \textbf{Spec} & \textbf{Orb} & \textbf{Domain} & \textbf{Method} & \textbf{DEG} & \textbf{CC} & \textbf{Spec} & \textbf{Orb} \\
\toprule
\multirow{2}{*}{FB} & DiGress & 0.0177 & 0.0698 & 0.0138 & 0.0296 & \multirow{2}{*}{BIO} & DiGress & 0.0179 & 0.0499 & 0.0441 & 0.0526 \\
& LGGM-FT & \textbf{0.0050} & \textbf{0.0579} & \textbf{0.0059} & \textbf{0.0280} &  & LGGM-FT & \textbf{0.0049} & \textbf{0.0496} & \textbf{0.0056} & \textbf{0.0257} \\
\midrule
\multirow{2}{*}{ASN} & DiGress & 0.0337 & 0.1744 & 0.0482 & 0.0243 & \multirow{2}{*}{ECON} & DiGress & \textbf{0.0229} & \textbf{0.0430} & \textbf{0.0088} & \textbf{0.0427} \\
& LGGM-FT & \textbf{0.0058} & \textbf{0.1098} & \textbf{0.0311} & \textbf{0.0101} & & LGGM-FT & 0.0597 & 0.0594 & 0.0216 & 0.0535 \\
\midrule
\multirow{2}{*}{EMAIL} & DiGress & 0.0259 & 0.0901 & 0.0366 & 0.0743 & \multirow{2}{*}{RT} & DiGress & 0.0336 & 0.0920 & 0.0432 & 0.0572 \\
& LGGM-FT & \textbf{0.0120} & \textbf{0.0559} & \textbf{0.0158} & \textbf{0.0444} &  & LGGM-FT & \textbf{0.0032} & \textbf{0.0163} & \textbf{0.0051} & \textbf{0.0227} \\
\midrule
\multirow{2}{*}{WEB} & DiGress & 0.0239 & \textbf{0.0898} & 0.1033 & 0.2371 & \multirow{2}{*}{COL} & DiGress & \textbf{0.0252} & 0.5156 & \textbf{0.1171} & 0.2060 \\
& LGGM-FT & \textbf{0.0218} & 0.1398 & \textbf{0.0310} & \textbf{0.1262} & & LGGM-FT & 0.1074 & \textbf{0.4265} & 0.1398 & \textbf{0.0897} \\
\midrule
\multirow{2}{*}{ROAD} & DiGress & 0.1553 & 0.2788 & 0.2169 & 0.0542 & \multirow{2}{*}{ECO} & DiGress & 0.0263 & 0.2359 & 0.0439 & 0.2100 \\
& LGGM-FT & \textbf{0.0081} & \textbf{0.0547} & \textbf{0.0573} & \textbf{0.0228} & & LGGM-FT & \textbf{0.0204} & \textbf{0.2347} & \textbf{0.0404} & \textbf{0.2100} \\
\midrule
\multirow{2}{*}{POWER} & DiGress & 0.0348 & 0.3174 & 0.1083 & 0.1393 & \multirow{2}{*}{CITATION} & DiGress & 0.0217 & 0.1566 & 0.0645 & 0.1235 \\
& LGGM-FT & \textbf{0.0077} & \textbf{0.0570} & \textbf{0.0134} & \textbf{0.0040} & & LGGM-FT & \textbf{0.0052} & \textbf{0.0821} & \textbf{0.0221} & \textbf{0.0443} \\
\midrule
\multirow{2}{*}{All} & DiGress & 0.0366 & 0.1761 & 0.0707 & 0.1042 \\
& LGGM-FT & \textbf{0.0218} & \textbf{0.1120} & \textbf{0.0324} & \textbf{0.0568} \\
\bottomrule
\end{tabular}
\end{table*}

\newpage
\subsection{Text-to-Graph Generation}
\subsubsection{Domain Specific Transition}
\begin{table*}[htbp!]
\scriptsize
\setlength\tabcolsep{4.5pt}
\caption{Comparing the performance of graph generation between LGGM trained on graphs from all domains with and without domain/name as textual conditions.}
\label{app-tab-text2graph-marginal}
\centering
\begin{tabular}{llcccc|llcccc}
\toprule
\textbf{Domain} & \textbf{Method} & \textbf{DEG} & \textbf{CC} & \textbf{Spec} & \textbf{Orb} & \textbf{Domain} & \textbf{Method} & \textbf{DEG} & \textbf{CC} & \textbf{Spec} & \textbf{Orb} \\
\toprule
\multirow{3}{*}{\textsc{FB}} & LGGM & 0.2566 & 0.3552 & \underline{0.0587} & 0.1614 & \multirow{3}{*}{\textsc{BIO}} & LGGM & 0.2860 & \underline{0.3275} & \underline{0.1117} & \underline{0.2333}\\
& LGGM-T2G$^{\text{D}}$ & \underline{0.1533} & \underline{0.1894} & 0.0817 & \underline{0.0492} & & LGGM-T2G$^{\text{D}}$ & \underline{0.1313} & 0.5111 & 0.1340 & 0.3736\\
 & LGGM-T2G$^{\text{UP}}$ & \textbf{0.0053} & \textbf{0.0576} & \textbf{0.0076} & \textbf{0.0245} & & LGGM-T2G$^{\text{UP}}$  & \textbf{0.0219} & \textbf{0.0251} & \textbf{0.0126} & \textbf{0.0190}\\
 \midrule
\multirow{3}{*}{\textsc{ASN}} & LGGM & 0.1477 & \underline{0.3003} & 0.1551 & 0.3719 & \multirow{3}{*}{\textsc{ECON}} & LGGM & 0.3540 & 0.3404 & 0.2078 & 0.2740\\
& LGGM-T2G$^{\text{D}}$ & \underline{0.0429} & 0.4742 & \underline{0.0949} & \underline{0.0401} & & LGGM-T2G$^{\text{D}}$ & \underline{0.2346} & \underline{0.1572} & \underline{0.1550} & \textbf{0.0579}\\
 & LGGM-T2G$^{\text{UP}}$ & \textbf{0.0161} & \textbf{0.1312} & \textbf{0.0344} & \textbf{0.0174} & & LGGM-T2G$^{\text{UP}}$ & \textbf{0.0869} & \textbf{0.0601} & \textbf{0.0412} & \underline{0.0592} \\
 \midrule
\multirow{3}{*}{\textsc{Email}} & LGGM & 0.1957 & \underline{0.2629} & \underline{0.0646} & \underline{0.2118} & \multirow{3}{*}{\textsc{RT}} & LGGM & 0.4355 & 0.3924 & 0.4329 & 0.4966 \\
& LGGM-T2G$^{\text{D}}$ & \underline{0.0874} & 0.3238 & 0.1472 & 0.2869 & & LGGM-T2G$^{\text{D}}$ & \underline{0.0050} & \underline{0.0940} & \underline{0.0415} & \underline{0.2870}\\
 & LGGM-T2G$^{\text{UP}}$ & \textbf{0.0077} & \textbf{0.0316} & \textbf{0.0176} & \textbf{0.0365} & & LGGM-T2G$^{\text{UP}}$ & \textbf{0.0034} & \textbf{0.0253} & \textbf{0.0225} & \textbf{0.0869}\\
 \midrule
\multirow{3}{*}{\textsc{Web}} & LGGM & 0.2461 & \underline{0.3570} & 0.1853 & 0.4832 & \multirow{3}{*}{\textsc{Col}} & LGGM  &  0.2616 & \textbf{0.3398} & 0.2305 & 0.7090\\
& LGGM-T2G$^{\text{D}}$ & \underline{0.1253} & 0.9088 & \underline{0.1156} & \underline{0.3884} & & LGGM-T2G$^{\text{D}}$ & \underline{0.1301} & 0.9384 & \underline{0.1963} & \underline{0.2032}\\
 & LGGM-T2G$^{\text{UP}}$ & \textbf{0.0771} & \textbf{0.2720} & \textbf{0.0732} & \textbf{0.1251} & & LGGM-T2G$^{\text{UP}}$ & \textbf{0.0845} & \underline{0.5070} & \textbf{0.1378} & \textbf{0.1531}\\
 \midrule
\multirow{3}{*}{\textsc{ROAD}} & LGGM & 0.4315 & 0.8107 & 0.3192 & 0.6976 & \multirow{3}{*}{\textsc{Eco}} & LGGM & 0.4611 & 0.3108 & 0.1932 & 0.3468 \\
& LGGM-T2G$^{\text{D}}$ & \underline{0.0112} & \underline{0.1611} & \textbf{0.0298} & \underline{0.0120} & & LGGM-T2G$^{\text{D}}$ & \textbf{0.0575} & \underline{0.2976} & \underline{0.0585} & \underline{0.2580}\\
 & LGGM-T2G$^{\text{UP}}$ & \textbf{0.0097} & \textbf{0.1316} & \underline{0.0324} & \textbf{0.0119} & & LGGM-T2G$^{\text{UP}}$ & \underline{0.1070} & \textbf{0.2913} & \textbf{0.0410} & \textbf{0.2556}  \\
 \midrule
\multirow{3}{*}{\textsc{Power}} & LGGM & 0.4411 & \textbf{0.4694} & 0.3384 & 1.3222 & \multirow{3}{*}{\textsc{Citation}} & LGGM & 0.3392 & \underline{0.5009} & \underline{0.1295} & \underline{0.2248}\\
& LGGM-T2G$^{\text{D}}$ & \textbf{0.0194} & 0.6031 & \textbf{0.0286} & \textbf{0.0193} & & LGGM-T2G$^{\text{D}}$ & \underline{0.1636} & 0.8868 & 0.2036 & 0.6142\\
 & LGGM-T2G$^{\text{UP}}$ & \underline{0.0227} & \underline{0.4817} & \underline{0.0330} & \underline{0.0223} & & LGGM-T2G$^{\text{UP}}$ & \textbf{0.0496} & \textbf{0.0914} & \textbf{0.0669} & \textbf{0.0318}\\
 \midrule
\multirow{3}{*}{\textsc{All}} & LGGM & 0.3213 & \underline{0.3973} & 0.2022 & 0.4610  \\
 & LGGM-T2G$^{\text{D}}$ & \underline{0.0968} & 0.4621 & \underline{0.1072} & \underline{0.2158}  \\
 & LGGM-T2G$^{\text{UP}}$ & \textbf{0.0410} & \textbf{0.1755} & \textbf{0.0434} & \textbf{0.0703}  \\
 \bottomrule
\end{tabular}
\end{table*}

\subsubsection{Uniform Transition}
\begin{table*}[htbp!]
\scriptsize
\setlength\tabcolsep{4.5pt}
\caption{Comparing the performance of graph generation between LGGM trained on graphs from all domains with and without domain/name as textual conditions.}
\label{app-tab-text2graph-uniform}
\centering
\begin{tabular}{llcccc|llcccc}
\toprule
\textbf{Domain} & \textbf{Method} & \textbf{DEG} & \textbf{CC} & \textbf{Spec} & \textbf{Orb} & \textbf{Domain} & \textbf{Method} & \textbf{DEG} & \textbf{CC} & \textbf{Spec} & \textbf{Orb} \\
\toprule
\multirow{3}{*}{\textsc{FB}} & LGGM & \underline{0.0321} & 0.4994 & 0.0763 & 0.3117 & \multirow{3}{*}{\textsc{BIO}} & LGGM & 0.2661 & 0.3120 & 0.1135 & 0.3835\\
& LGGM-T2G$^{\text{D}}$ & 0.1561 & \underline{0.1639} & \underline{0.0924} & \underline{0.0417} & & LGGM-T2G$^{\text{D}}$ & \underline{0.0099} & \underline{0.1286} & \underline{0.0303} & \underline{0.1366}\\
 & LGGM-T2G$^{\text{UP}}$ & \textbf{0.0050} & \textbf{0.0545} & \textbf{0.0070} & \textbf{0.0251} & & LGGM-T2G$^{\text{UP}}$ & \textbf{0.0028} & \textbf{0.0287} & \textbf{0.0236} & \textbf{0.0174}\\
 \midrule
\multirow{3}{*}{\textsc{ASN}} & LGGM & 0.1511 & 0.4325 & 0.1875 & 0.3896 & \multirow{3}{*}{\textsc{ECON}} & LGGM & 0.3828 & 0.1533 & 0.2039 & 0.2583\\
& LGGM-T2G$^{\text{D}}$ & \underline{0.0318} & \underline{0.2821} & \underline{0.0606} & \underline{0.0631} & & LGGM-T2G$^{\text{D}}$ & \underline{0.0666} & \underline{0.0594} & \underline{0.0650} & \underline{0.0586}\\
 & LGGM-T2G$^{\text{UP}}$ & \textbf{0.0211} & \textbf{0.1191} & \textbf{0.0462} & \textbf{0.0195} & & LGGM-T2G$^{\text{UP}}$ & \textbf{0.0132} & \textbf{0.0257} & \textbf{0.0053} & \textbf{0.0191} \\
 \midrule
\multirow{3}{*}{\textsc{Email}} & LGGM & 0.2156 & 0.2450 & 0.0666 & 0.2757 & \multirow{3}{*}{\textsc{RT}} & LGGM & 0.4395 & 0.2225 & 0.4337 & 0.6641 \\
& LGGM-T2G$^{\text{D}}$ & \underline{0.0469} & \underline{0.0982} & \underline{0.0484} & \underline{0.0505} & & LGGM-T2G$^{\text{D}}$ & \underline{0.0468} & \underline{0.0955} & \underline{0.0729} & \underline{0.0393}\\
 & LGGM-T2G$^{\text{UP}}$ & \textbf{0.0073} & \textbf{0.0379} & \textbf{0.0127} & \textbf{0.0437} & & LGGM-T2G$^{\text{UP}}$ & \textbf{0.0286} & \textbf{0.0933} & \textbf{0.0400} & \textbf{0.0312}\\
 \midrule
\multirow{3}{*}{\textsc{Web}} & LGGM & 0.2725 & 0.2672 & 0.1900 & 0.4368 & \multirow{3}{*}{\textsc{Col}} & LGGM  & 0.3565  & 0.3554 & 0.2451 & 0.7874\\
& LGGM-T2G$^{\text{D}}$ & \underline{0.0255} & \textbf{0.0737} & \underline{0.0354} & \underline{0.1856} & & LGGM-T2G$^{\text{D}}$ & \underline{0.0395} & \underline{0.3110} & \underline{0.1146} & \underline{0.1823}\\
& LGGM-T2G$^{\text{UP}}$ & \textbf{0.0105} & \underline{0.0941} & \textbf{0.0206} & \textbf{0.0451} & & LGGM-T2G$^{\text{UP}}$ & \textbf{0.0265} & \textbf{0.2813} & \textbf{0.0895} & \textbf{0.0899}\\
 \midrule
\multirow{3}{*}{\textsc{ROAD}} & LGGM & 0.4825 & 0.5373 & 0.3398 & 0.7542 & \multirow{3}{*}{\textsc{Eco}} & LGGM & 0.5466 & 0.6003 & 0.2257 & 0.7089 \\
& LGGM-T2G$^{\text{D}}$ & \textbf{0.0088} & \underline{0.1225} & \underline{0.0399} & \underline{0.0155} & & LGGM-T2G$^{\text{D}}$ & \underline{0.2160} & \underline{0.2917} & \underline{0.1203} & \underline{0.2569}\\
 & LGGM-T2G$^{\text{UP}}$ & \underline{0.0177} & \textbf{0.0437} & \textbf{0.0336} & \textbf{0.0086} & & LGGM-T2G$^{\text{UP}}$ &  \textbf{0.0293} & \textbf{0.2885} & \textbf{0.0416} & \textbf{0.2556}  \\
 \midrule
\multirow{3}{*}{\textsc{Power}} & LGGM & 0.4394 & 0.4646 & 0.3473 & 1.3186 & \multirow{3}{*}{\textsc{Citation}} & LGGM & 0.2624 & 0.5374 & 0.1295 & 0.3419\\
& LGGM-T2G$^{\text{D}}$ & \underline{0.0162} & \underline{0.1131} & \underline{0.0479} & \underline{0.1786} & & LGGM-T2G$^{\text{D}}$ & \underline{0.0101} & \underline{0.1025} & \underline{0.0315} & \underline{0.0651}\\
& LGGM-T2G$^{\text{UP}}$ & \textbf{0.0062} & \textbf{0.0570} & \textbf{0.0111} & \textbf{0.0084} & & LGGM-T2G$^{\text{UP}}$ & \textbf{0.0072} & \textbf{0.0849} & \textbf{0.0115} & \textbf{0.0287}\\
 \midrule
\multirow{3}{*}{\textsc{All}} & LGGM & 0.3206 & 0.3856 & 0.2132 & 0.5526  \\
 & LGGM-T2G$^{\text{D}}$ & \underline{0.0562} & \underline{0.1535} & \underline{0.0633} & \underline{0.1061}  \\
 & LGGM-T2G$^{\text{UP}}$ & \textbf{0.0146} & \textbf{0.1007} & \textbf{0.0286} & \textbf{0.0494}  \\
 \bottomrule
\end{tabular}
\end{table*}

\newpage
\subsection{Sensitive Analysis on Number of Training Data under Domain Specific Transition}\label{app-expr-training-data}

\begin{figure*}[htbp!]
     \centering
     \begin{subfigure}[b]{0.24\textwidth}
         \centering
         \includegraphics[width=\textwidth]{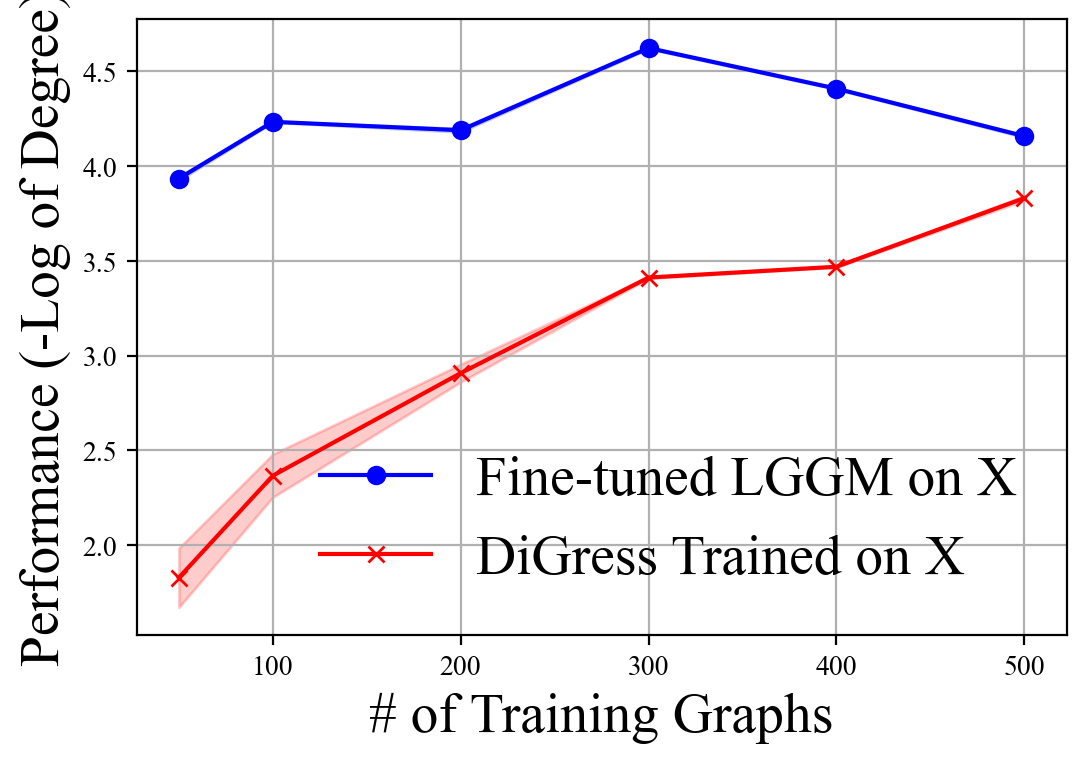}
         \caption{Road-DEG}
         \label{fig-ft-varying-number-road-deg-marginal}
     \end{subfigure}
     \hfill
     \begin{subfigure}[b]{0.24\textwidth}
         \centering
         \includegraphics[width=\textwidth]{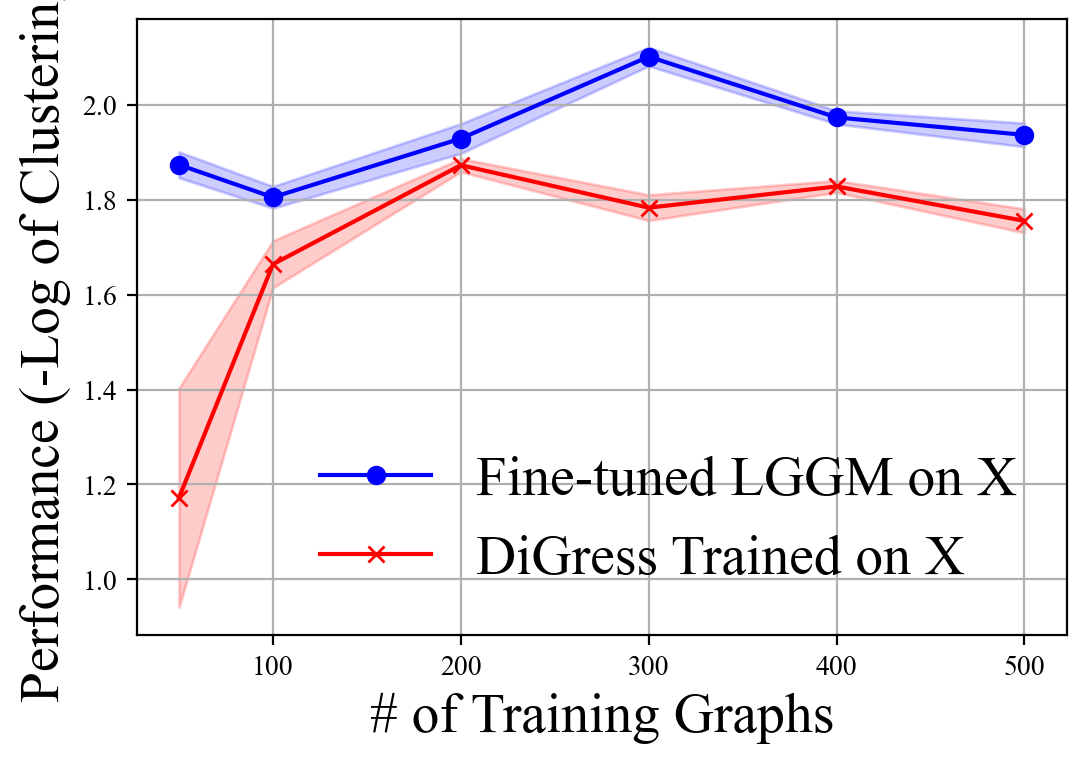}
         \caption{Road-CC}
         \label{fig-ft-varying-number-road-cc-marginal}
     \end{subfigure}
     \hfill
     \begin{subfigure}[b]{0.24\textwidth}
         \centering
         \includegraphics[width=\textwidth]{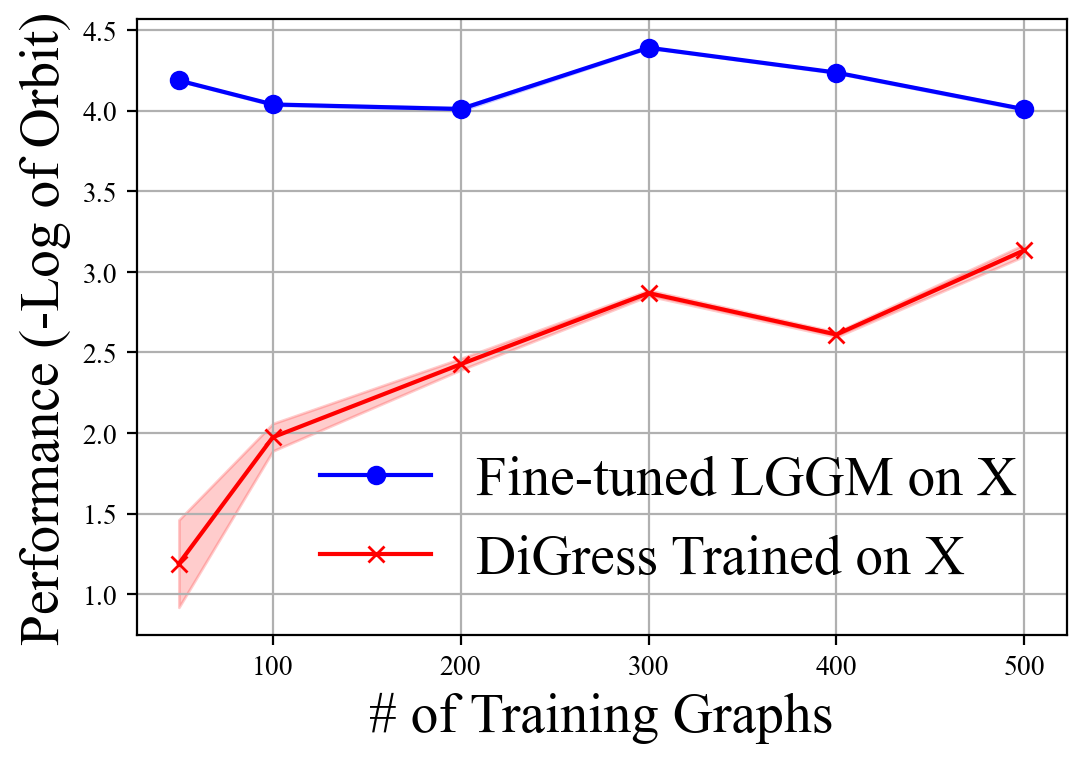}
         \caption{Road-Orb}
         \label{fig-ft-varying-number-road-orb-marginal}
     \end{subfigure}
     \hfill
     \begin{subfigure}[b]{0.24\textwidth}
         \centering
         \includegraphics[width=\textwidth]{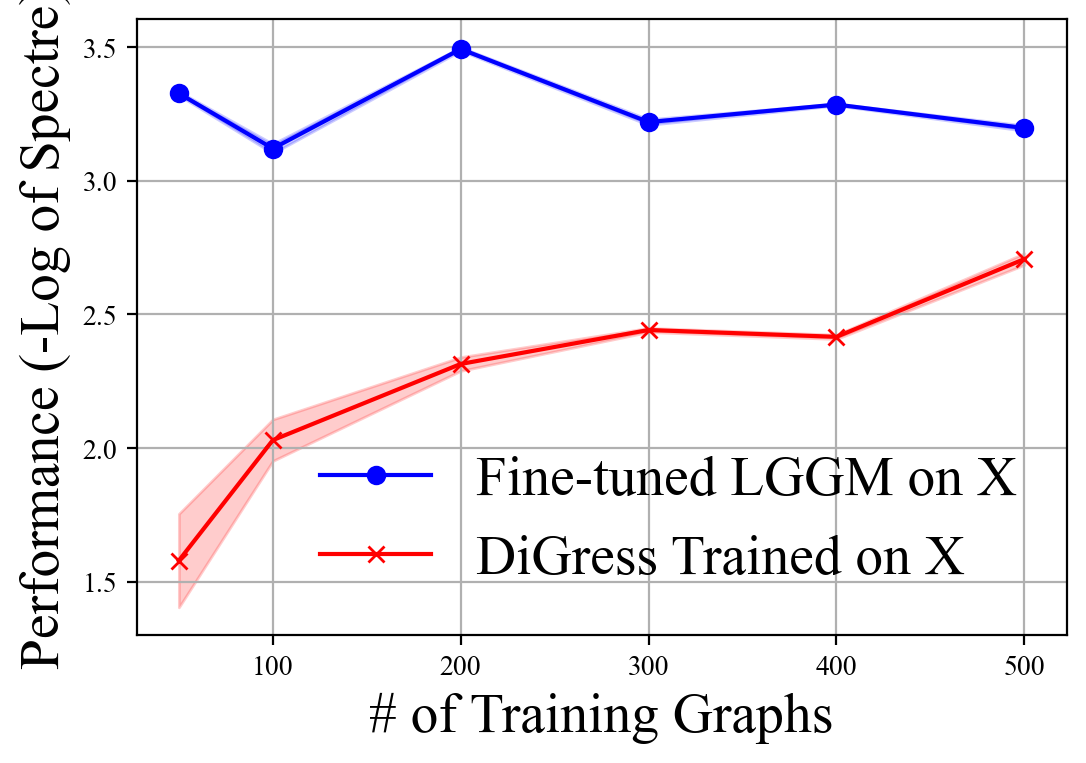}
         \caption{Road-Spec}
         \label{fig-ft-varying-number-road-spec-marginal}
     \end{subfigure}
    \caption{Effect of Number of Training Graphs on Road Networks.}
    \label{fig-ft-varyingnumber-road}
\end{figure*}

\begin{figure*}[htbp!]
     \centering
     \begin{subfigure}[b]{0.24\textwidth}
         \centering
         \includegraphics[width=\textwidth]{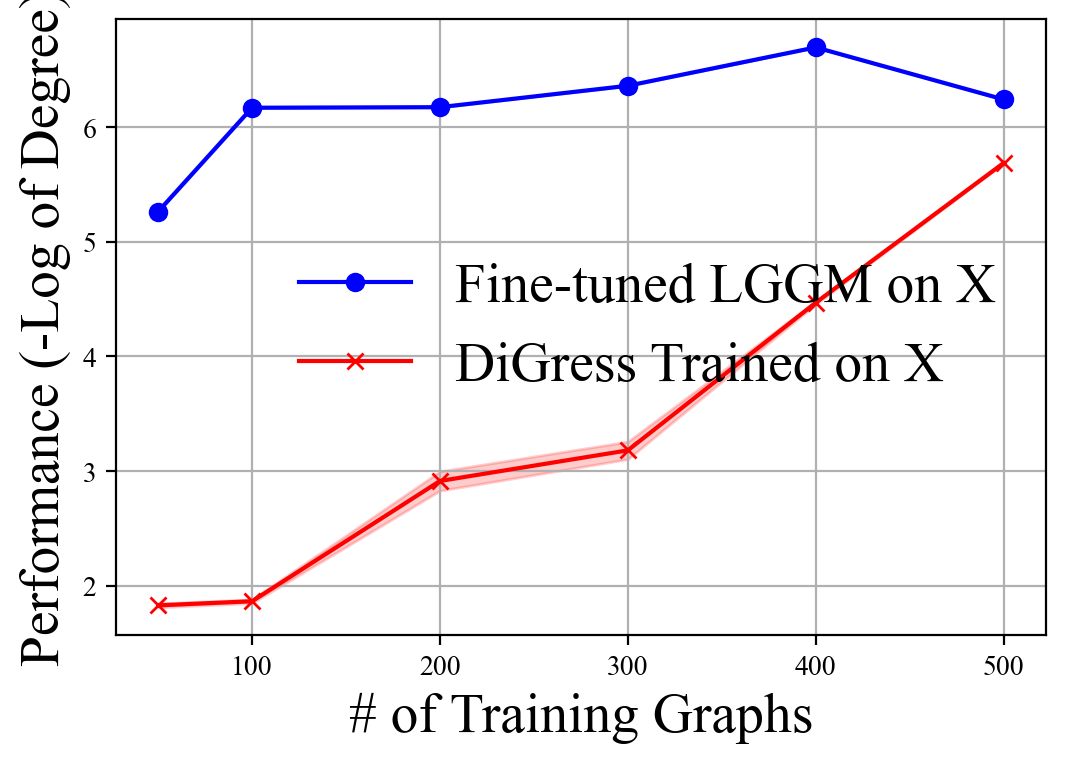}
         \caption{Retweet-DEG}
         \label{fig-ft-varying-number-rt-deg-marginal}
     \end{subfigure}
     \hfill
     \begin{subfigure}[b]{0.24\textwidth}
         \centering
         \includegraphics[width=\textwidth]{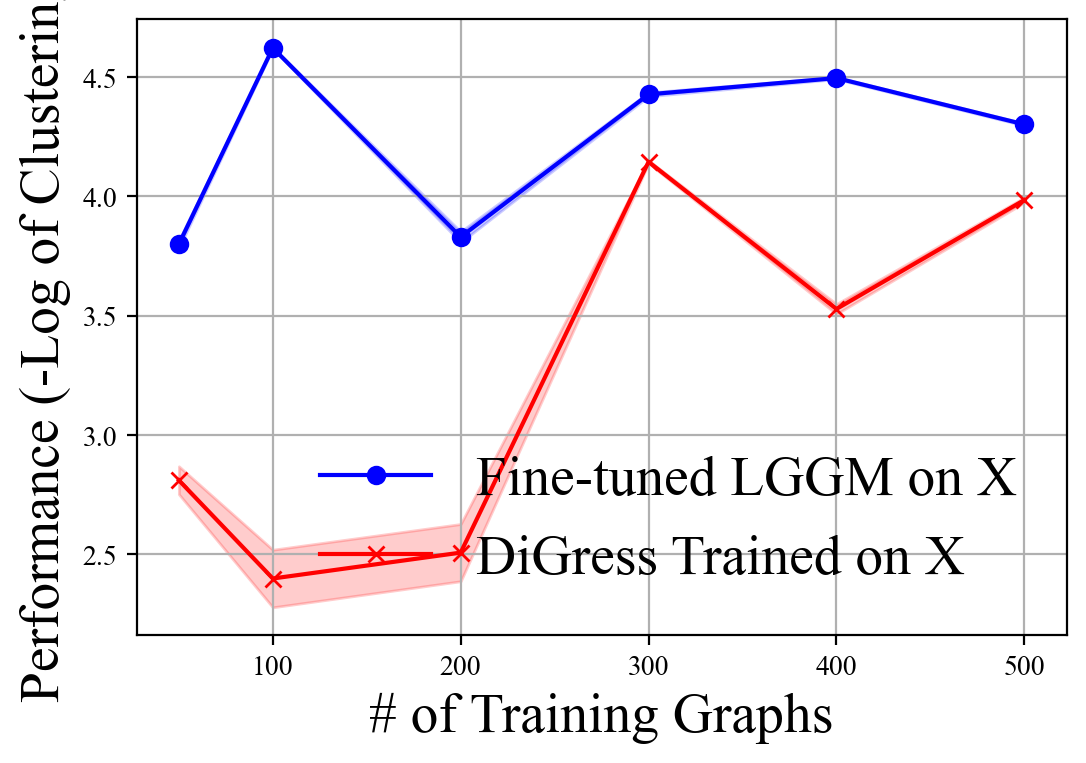}
         \caption{Retweet-CC}
         \label{fig-ft-varying-number-rt-cc-marginal}
     \end{subfigure}
     \hfill
     \begin{subfigure}[b]{0.24\textwidth}
         \centering
         \includegraphics[width=\textwidth]{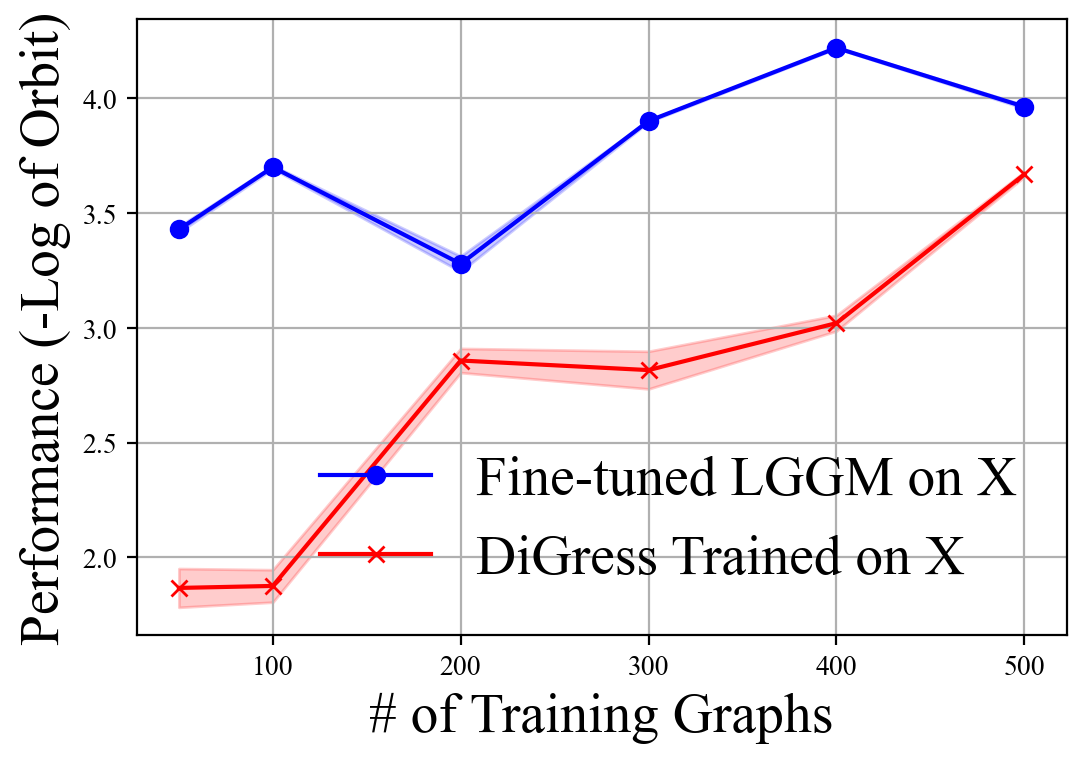}
         \caption{Retweet-Orb}
         \label{fig-ft-varying-number-rt-orb-marginal}
     \end{subfigure}
     \hfill
     \begin{subfigure}[b]{0.24\textwidth}
         \centering
         \includegraphics[width=\textwidth]{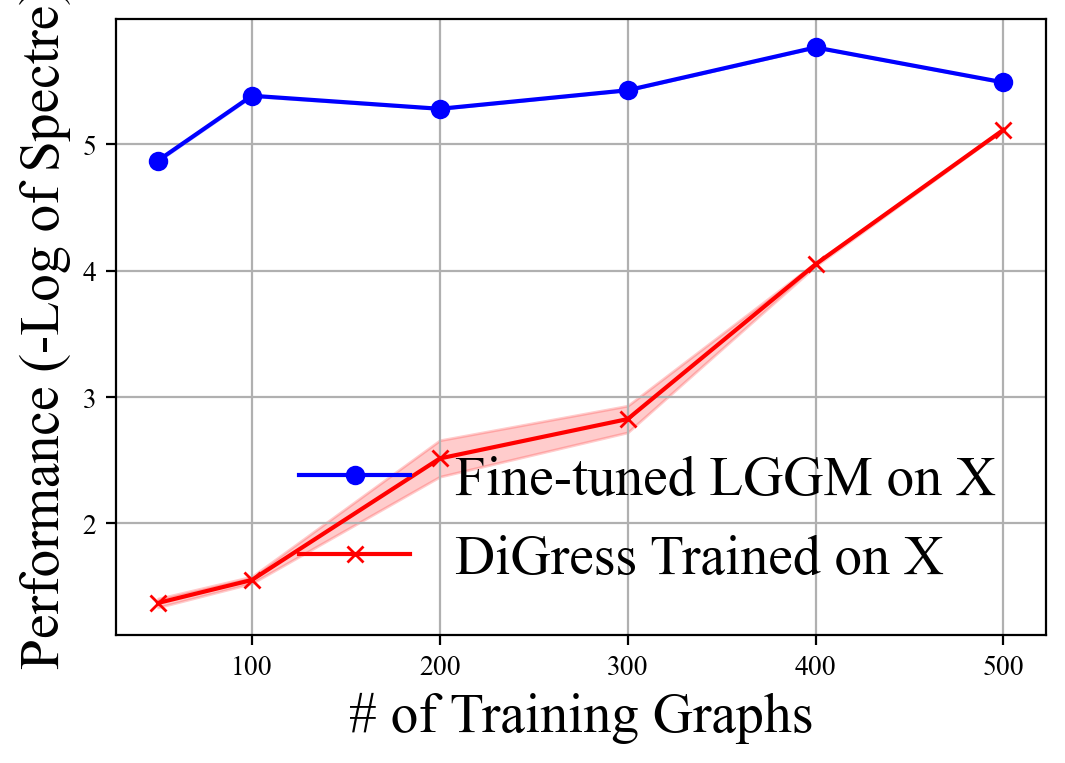}
         \caption{Retweet-Spec}
         \label{fig-ft-varying-number-rt-spec-marginal}
     \end{subfigure}
    \caption{Effect of Number of Training Graphs on Retweet Networks.}
    \label{fig-ft-varyingnumber-rt}
\end{figure*}

\begin{figure*}[htbp!]
     \centering
     \begin{subfigure}[b]{0.24\textwidth}
         \centering
         \includegraphics[width=\textwidth]{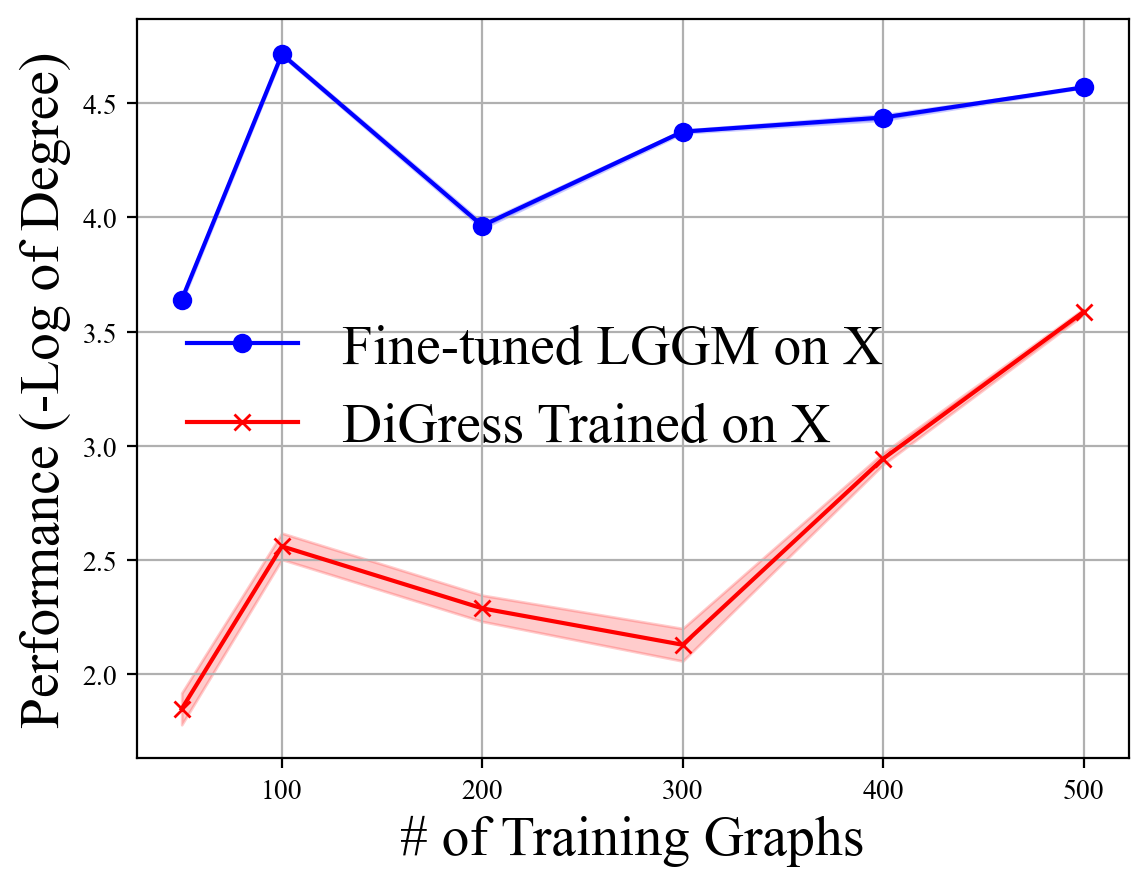}
         \caption{Email-DEG}
         \label{fig-ft-varying-number-email-deg-marginal}
     \end{subfigure}
     \hfill
     \begin{subfigure}[b]{0.24\textwidth}
         \centering
         \includegraphics[width=\textwidth]{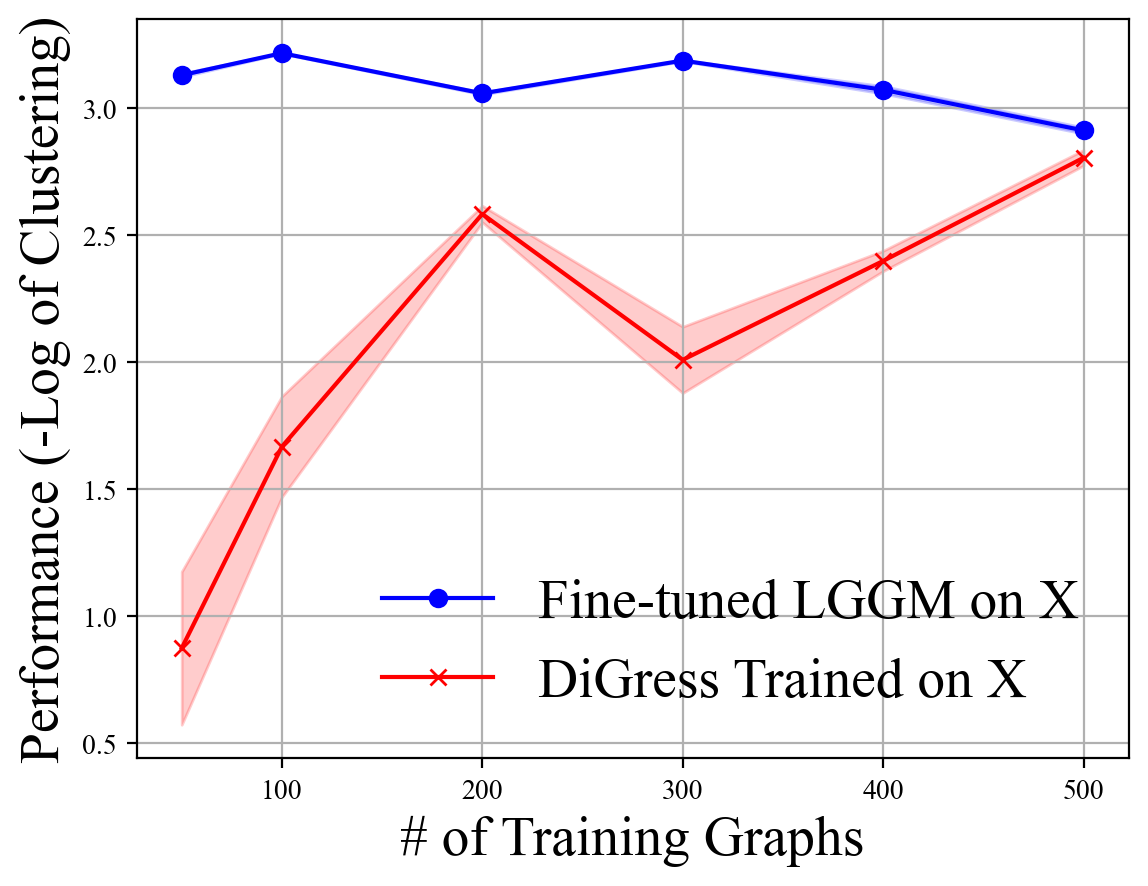}
         \caption{Email-CC}
         \label{fig-ft-varying-number-email-cc-marginal}
     \end{subfigure}
     \hfill
     \begin{subfigure}[b]{0.24\textwidth}
         \centering
         \includegraphics[width=\textwidth]{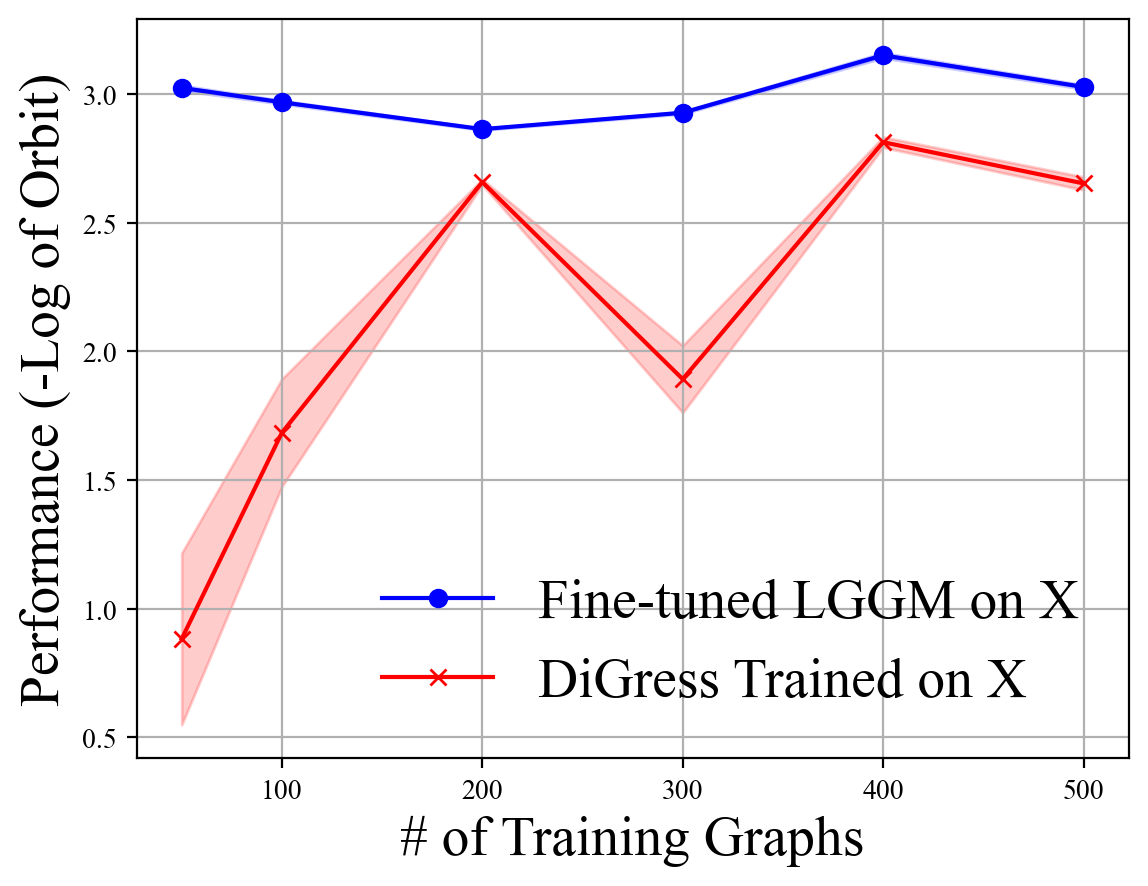}
         \caption{Email-Orb}
         \label{fig-ft-varying-number-email-orb-marginal}
     \end{subfigure}
     \hfill
     \begin{subfigure}[b]{0.24\textwidth}
         \centering
         \includegraphics[width=\textwidth]{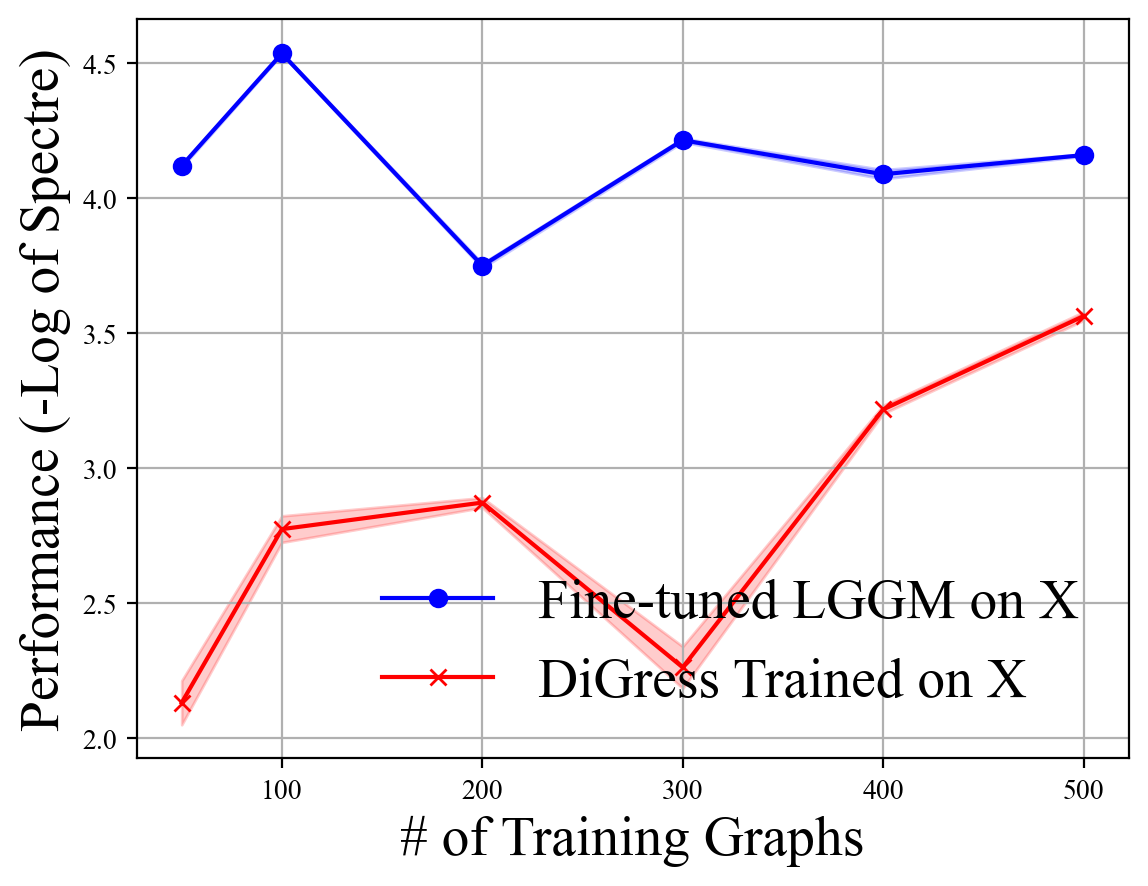}
         \caption{Email-Spec}
         \label{fig-ft-varying-number-email-spec-marginal}
     \end{subfigure}
    \caption{Effect of Number of Training Graphs on Email Networks.}
    \label{fig-ft-varyingnumber-email-marginal}
\end{figure*}

\begin{figure*}[htbp!]
     \centering
     \begin{subfigure}[b]{0.24\textwidth}
         \centering
         \includegraphics[width=\textwidth]{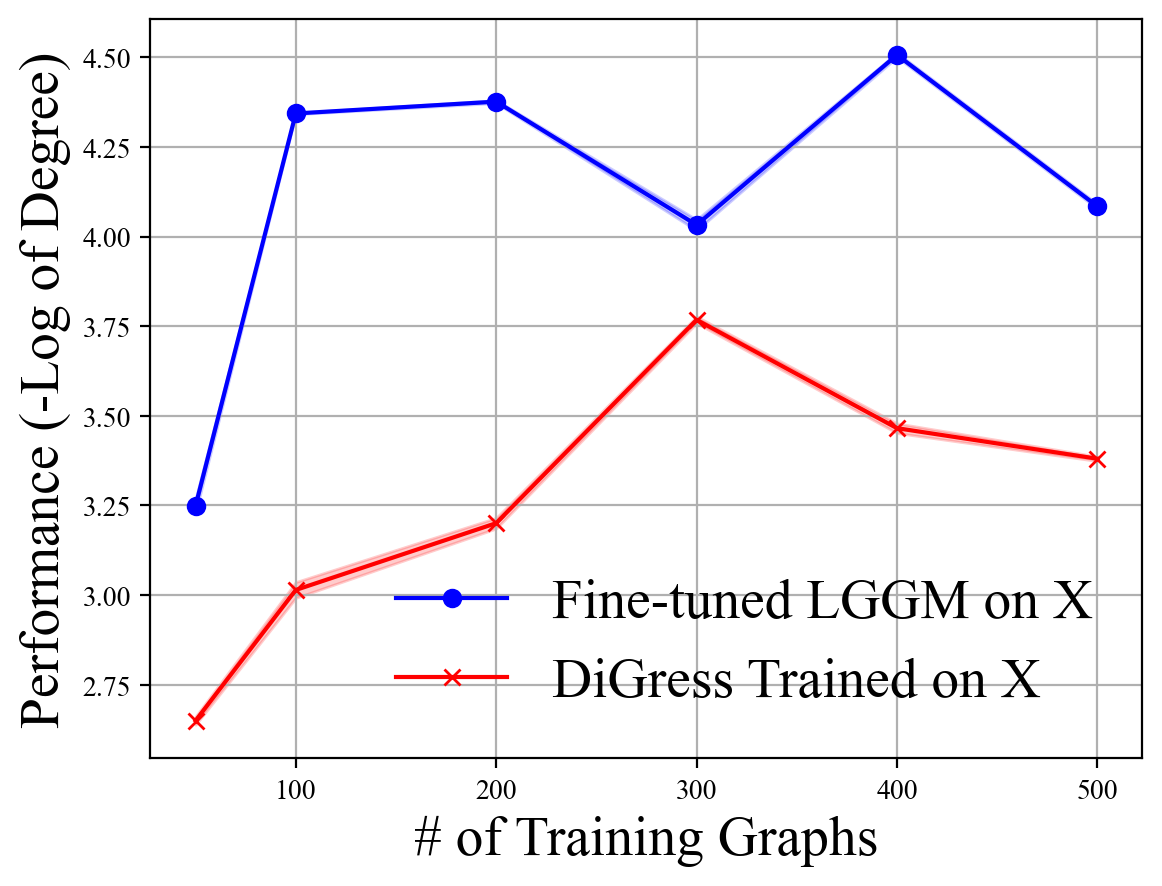}
         \caption{Web-DEG}
         \label{fig-ft-varying-number-web-deg-marginal}
     \end{subfigure}
     \hfill
     \begin{subfigure}[b]{0.24\textwidth}
         \centering
         \includegraphics[width=\textwidth]{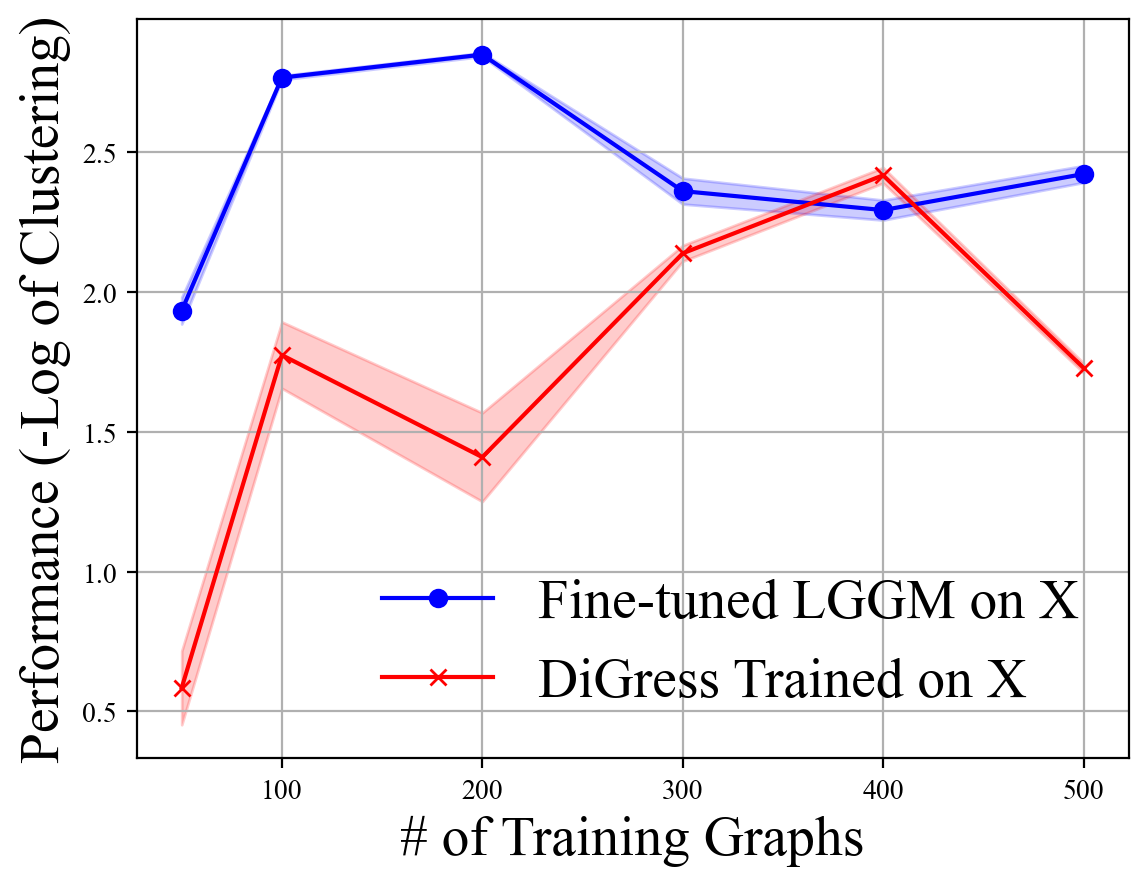}
         \caption{Web-CC}
         \label{fig-ft-varying-number-web-cc-marginal}
     \end{subfigure}
     \hfill
     \begin{subfigure}[b]{0.24\textwidth}
         \centering
         \includegraphics[width=\textwidth]{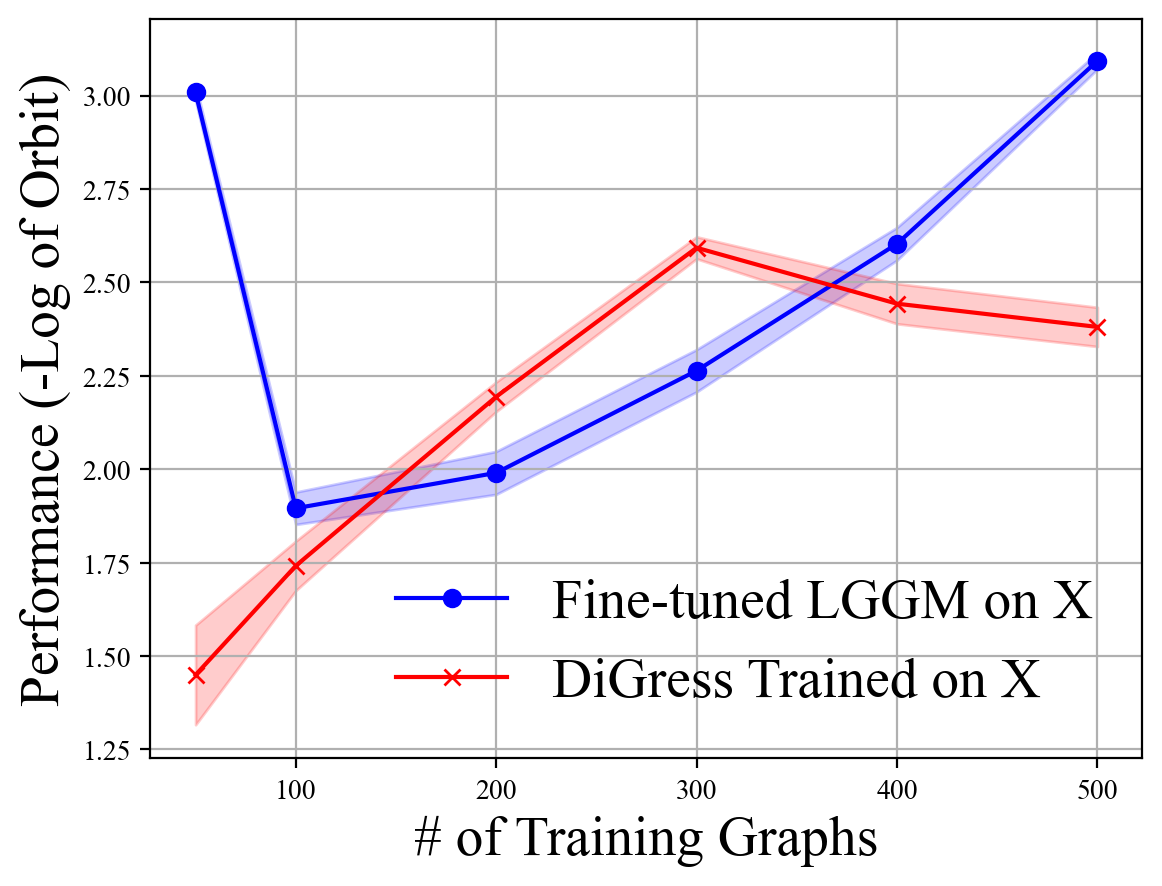}
         \caption{Web-Orb}
         \label{fig-ft-varying-number-web-orb-marginal}
     \end{subfigure}
     \hfill
     \begin{subfigure}[b]{0.24\textwidth}
         \centering
         \includegraphics[width=\textwidth]{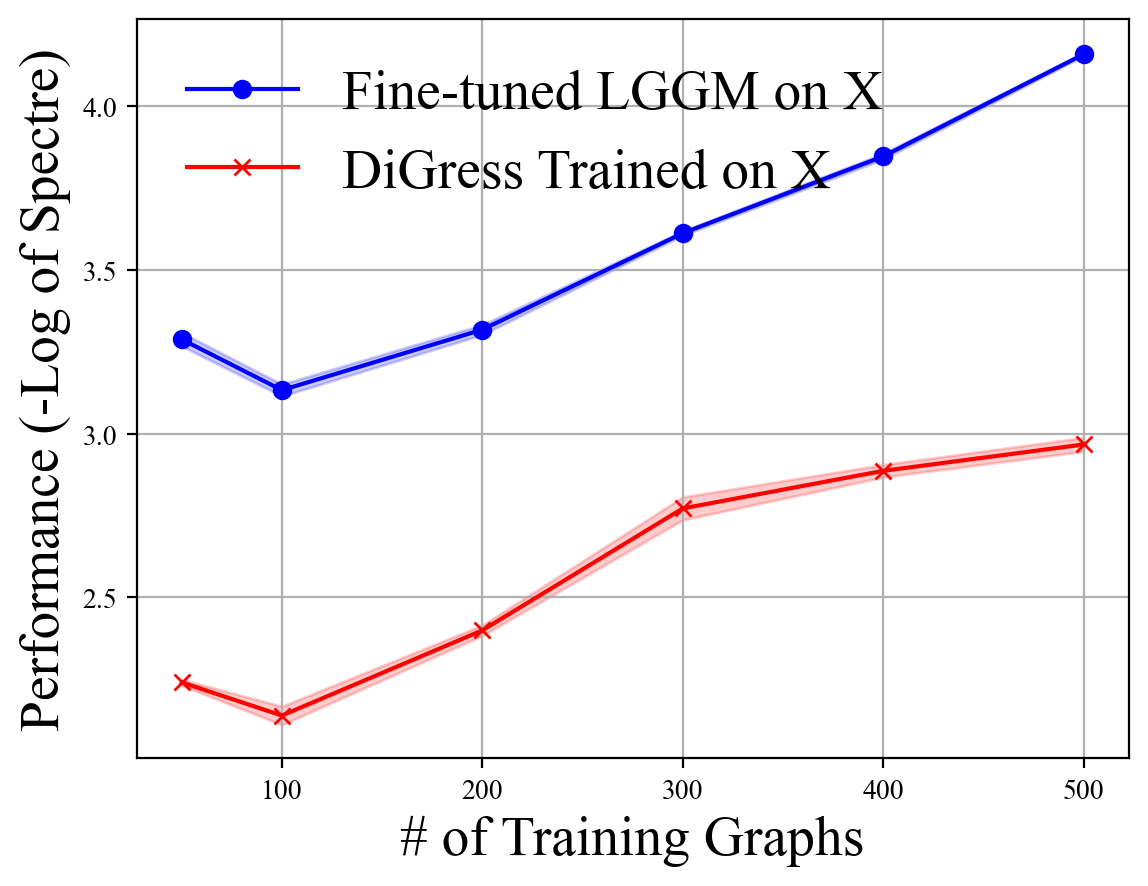}
         \caption{Web-Spec}
         \label{fig-ft-varying-number-web-spec-marginal}
     \end{subfigure}
    \caption{Effect of Number of Training Graphs on Web Graphs.}
    \label{fig-ft-varyingnumber-web-marginal}
\end{figure*}

\begin{figure*}[htbp!]
     \centering
     \begin{subfigure}[b]{0.24\textwidth}
         \centering
         \includegraphics[width=\textwidth]{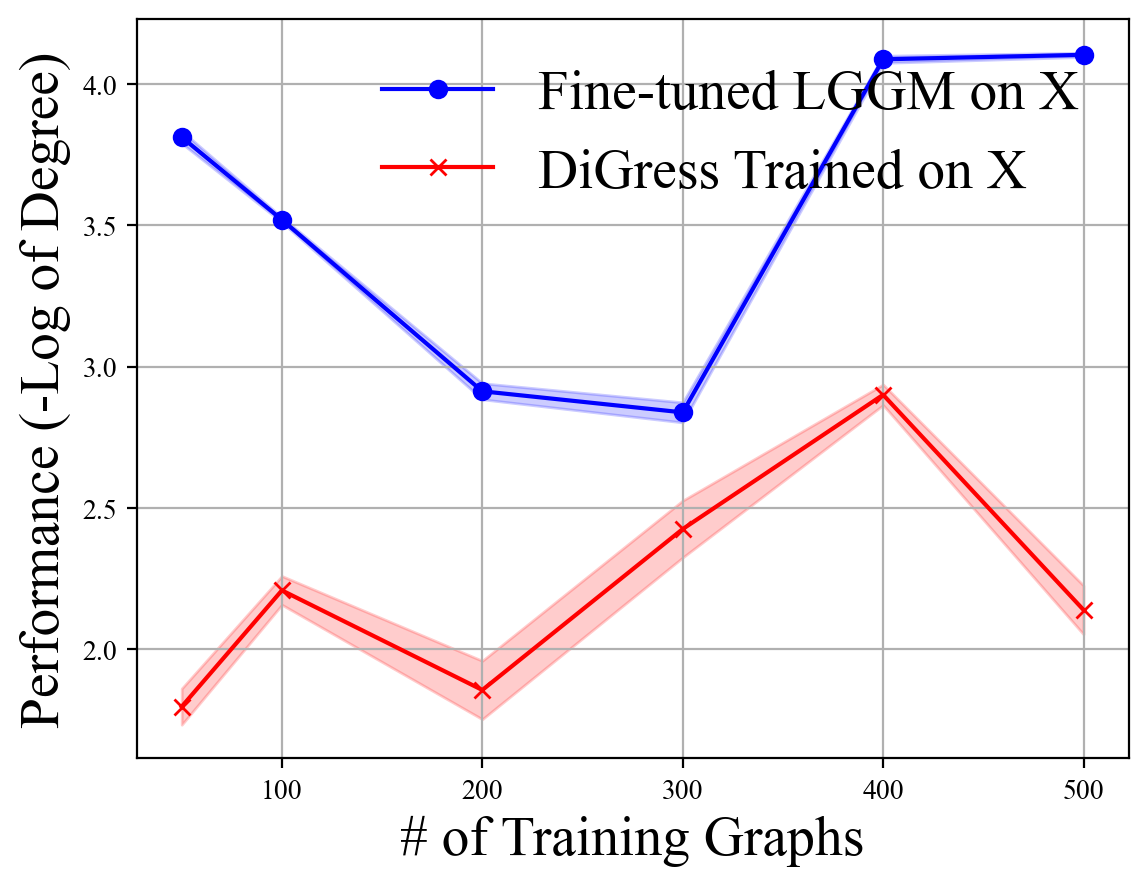}
         \caption{Facebook-DEG}
         \label{fig-ft-varying-number-fb-deg-marginal}
     \end{subfigure}
     \hfill
     \begin{subfigure}[b]{0.24\textwidth}
         \centering
         \includegraphics[width=\textwidth]{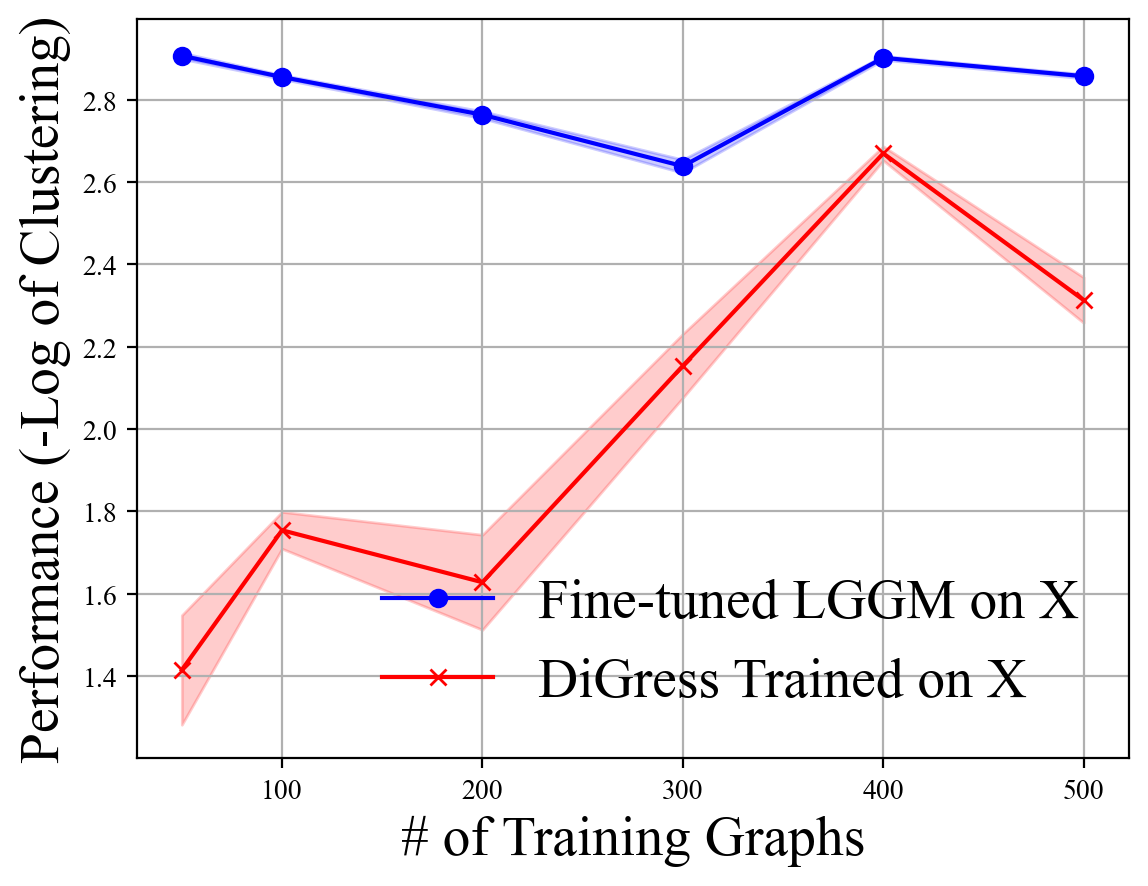}
         \caption{Facebook-CC}
         \label{fig-ft-varying-number-fb-cc-marginal}
     \end{subfigure}
     \hfill
     \begin{subfigure}[b]{0.24\textwidth}
         \centering
         \includegraphics[width=\textwidth]{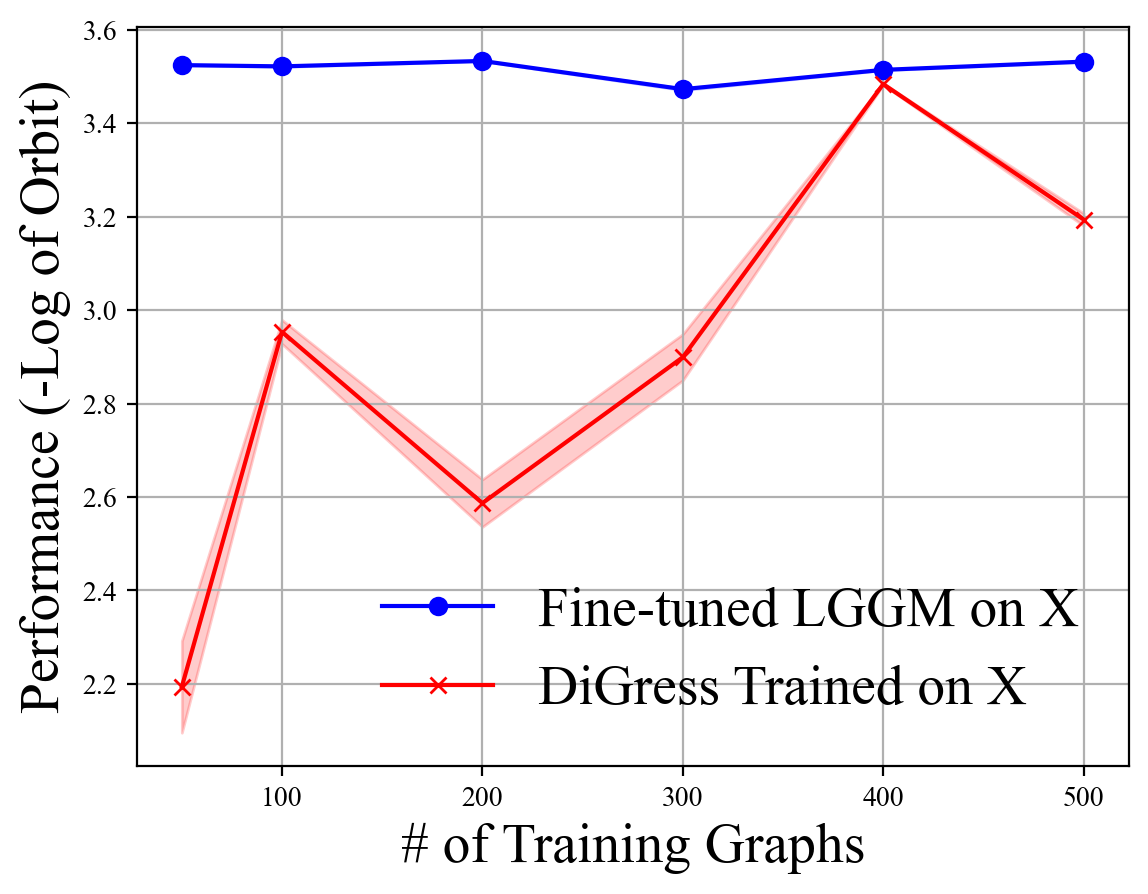}
         \caption{Facebook-Orb}
         \label{fig-ft-varying-number-fb-orb-marginal}
     \end{subfigure}
     \hfill
     \begin{subfigure}[b]{0.24\textwidth}
         \centering
         \includegraphics[width=\textwidth]{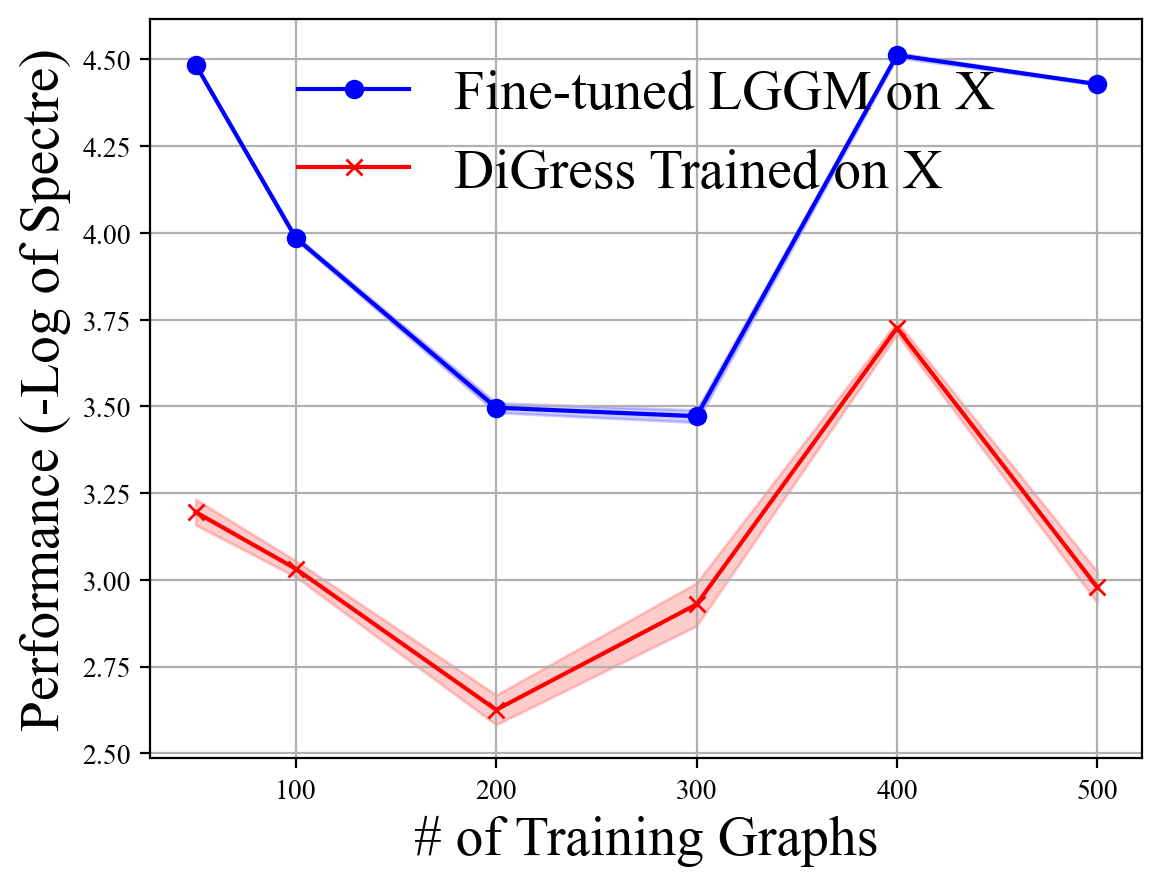}
         \caption{Facebook-Spec}
         \label{fig-ft-varying-number-fb-spec-marginal}
     \end{subfigure}
    \caption{Effect of Number of Training Graphs on Facebook Networks.}
    \label{fig-ft-varyingnumber-fb-marginal}
\end{figure*}

\newpage
\subsection{Sensitive Analysis on Number of Training Data under Uniform Transition Strategy}\label{app-expr-training-data-uniform}

\begin{figure*}[htbp!]
     \centering
     \begin{subfigure}[b]{0.24\textwidth}
         \centering
         \includegraphics[width=\textwidth]{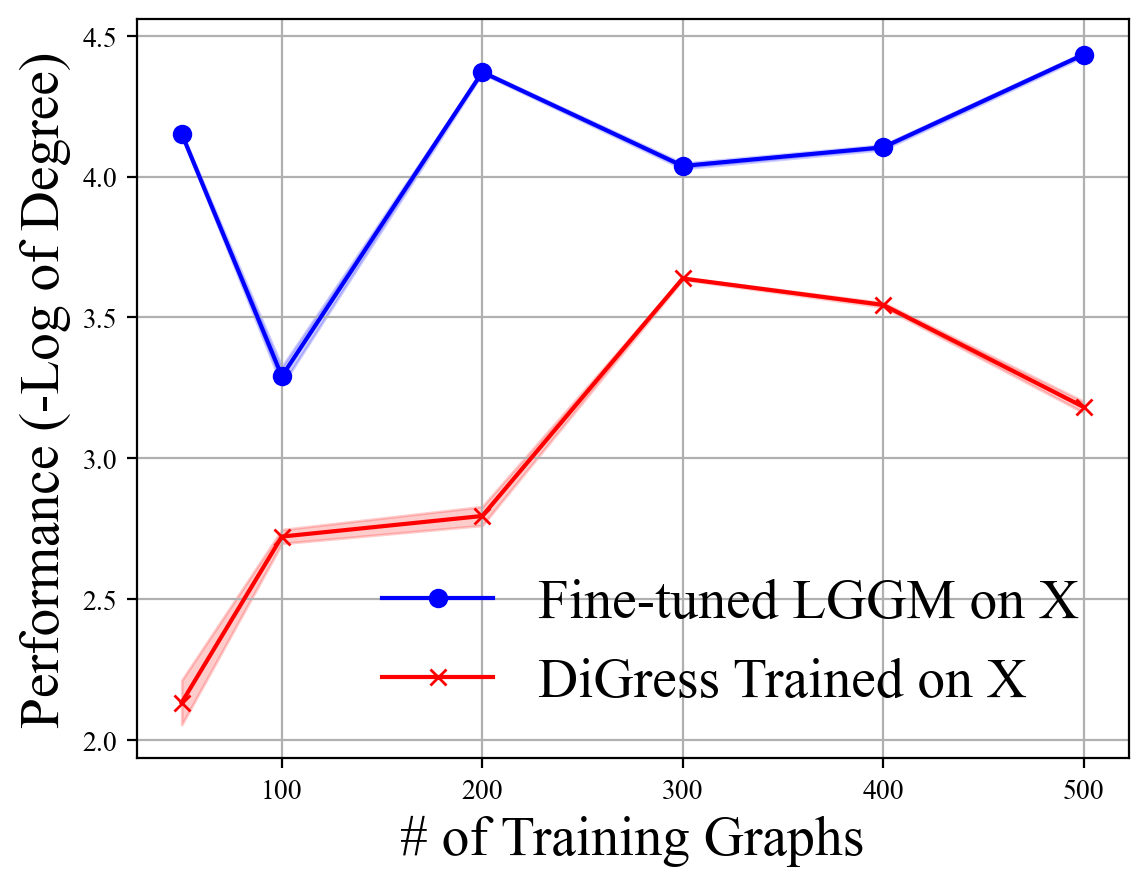}
         \caption{Citation-DEG}
         \label{fig-ft-varying-number-citation-deg-uniform}
     \end{subfigure}
     \hfill
     \begin{subfigure}[b]{0.24\textwidth}
         \centering
         \includegraphics[width=\textwidth]{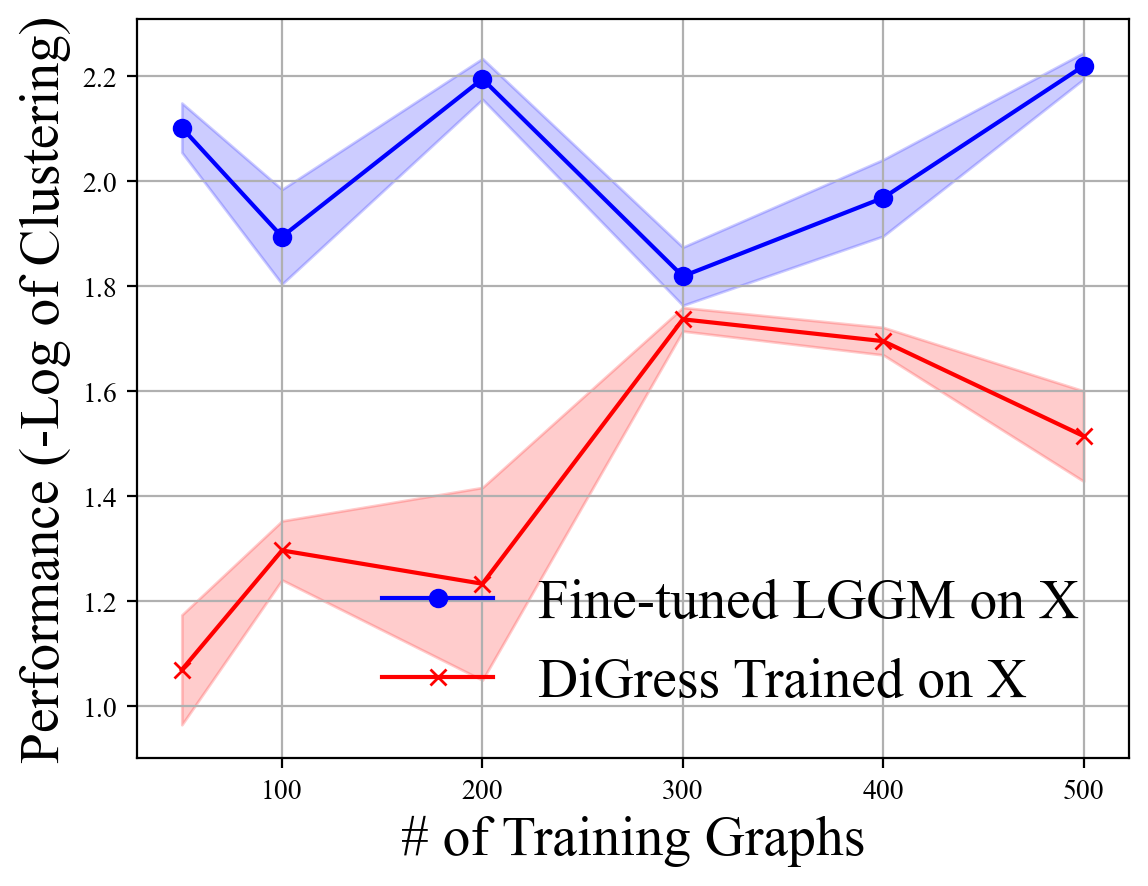}
         \caption{Citation-CC}
         \label{fig-ft-varying-number-citation-cc-uniform}
     \end{subfigure}
     \hfill
     \begin{subfigure}[b]{0.24\textwidth}
         \centering
         \includegraphics[width=\textwidth]{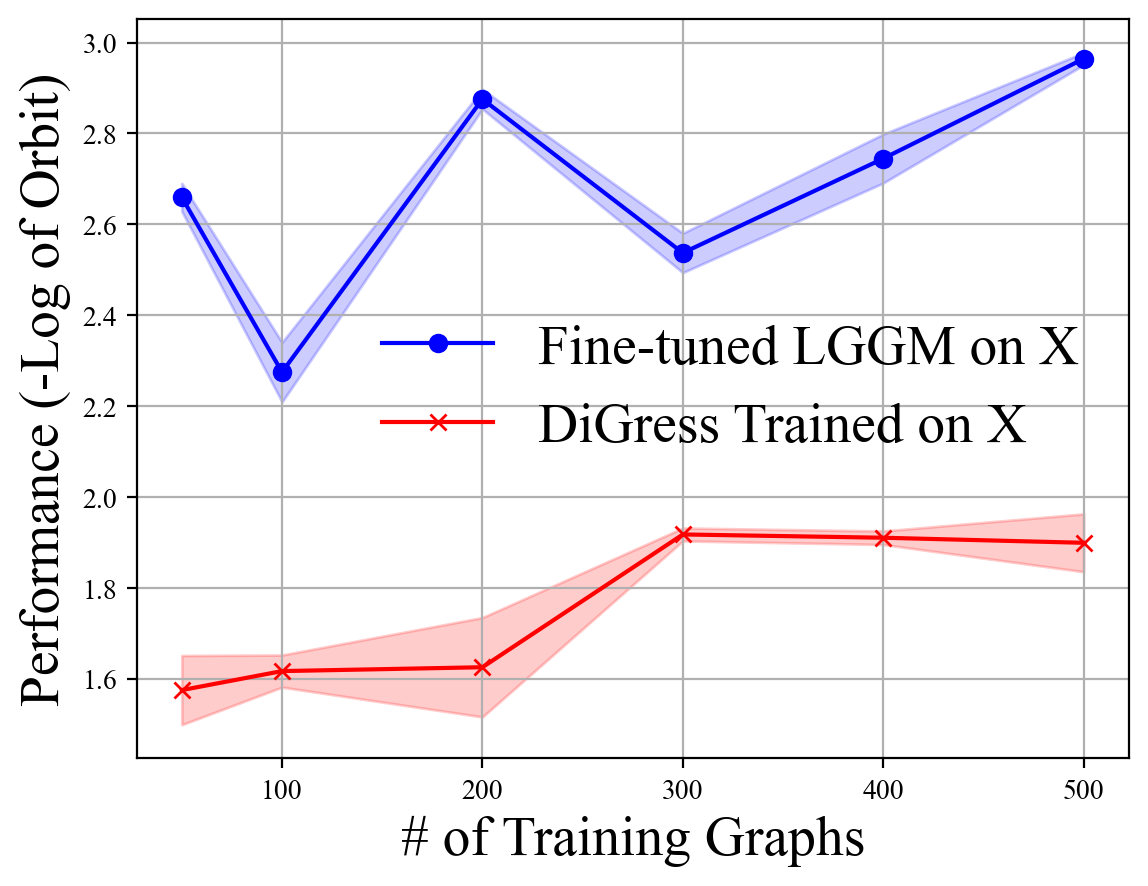}
         \caption{Citation-Orb}
         \label{fig-ft-varying-number-citation-orb-uniform}
     \end{subfigure}
     \hfill
     \begin{subfigure}[b]{0.24\textwidth}
         \centering
         \includegraphics[width=\textwidth]{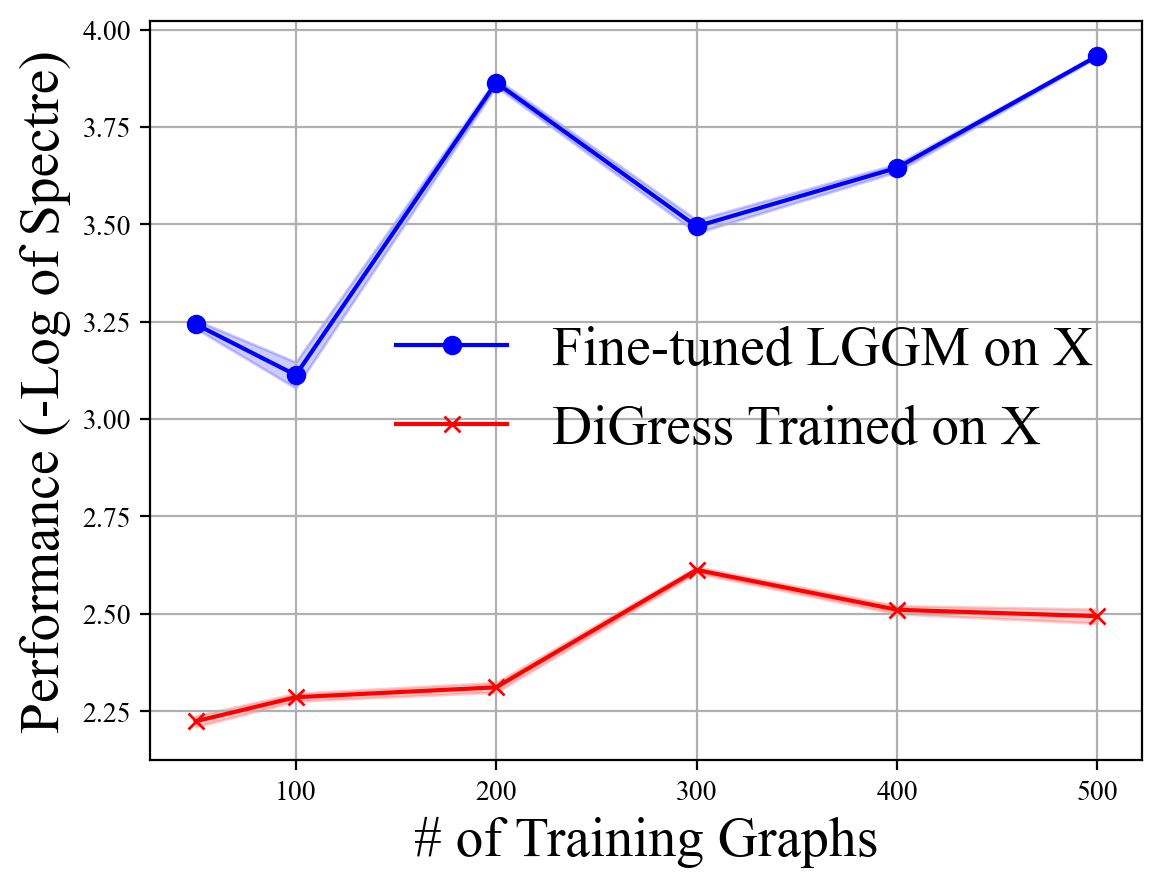}
         \caption{Citation-Spec}
         \label{fig-ft-varying-number-citation-spec-uniform}
     \end{subfigure}
    \caption{Effect of Number of Training Graphs on Citation Networks.}
    \label{fig-ft-varyingnumber-citation-uniform}
\end{figure*}

\begin{figure*}[htbp!]
     \centering
     \begin{subfigure}[b]{0.24\textwidth}
         \centering
         \includegraphics[width=\textwidth]{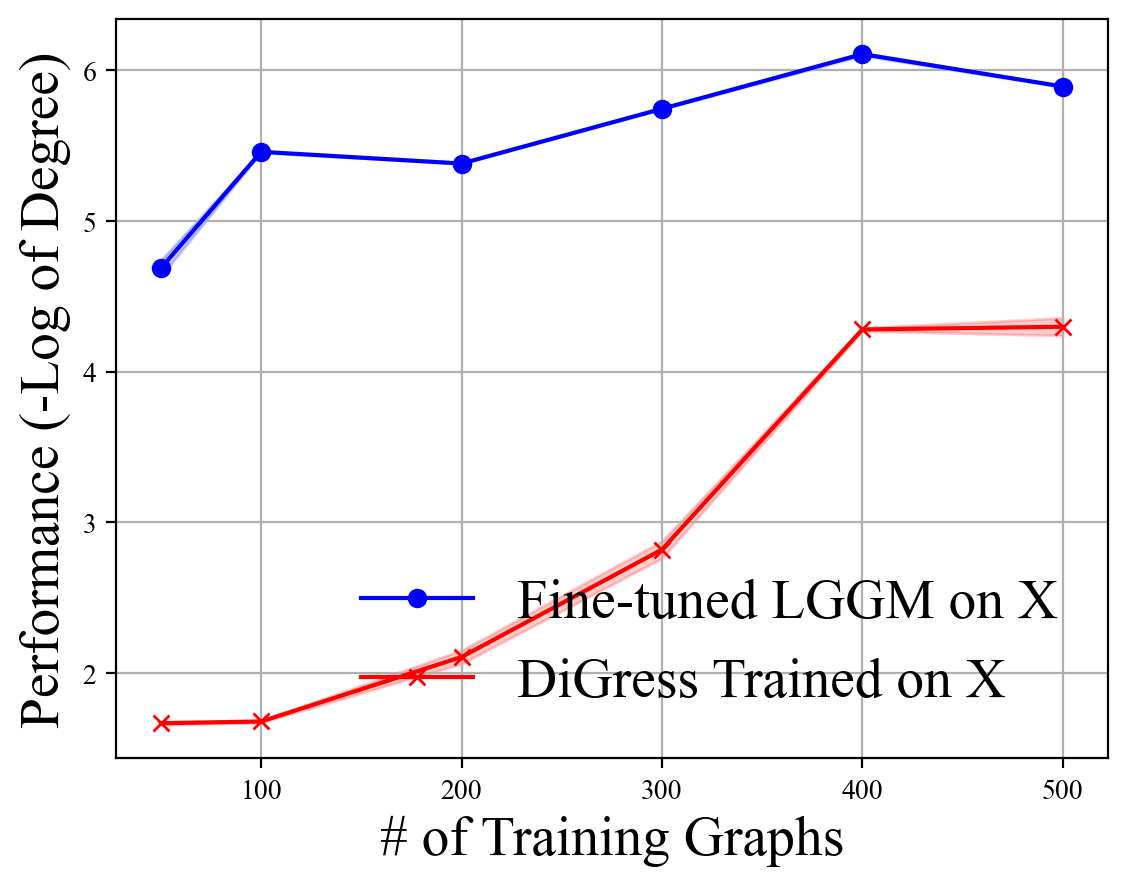}
         \caption{Retweet-DEG}
         \label{fig-ft-varying-number-rt-deg-uniform}
     \end{subfigure}
     \hfill
     \begin{subfigure}[b]{0.24\textwidth}
         \centering
         \includegraphics[width=\textwidth]{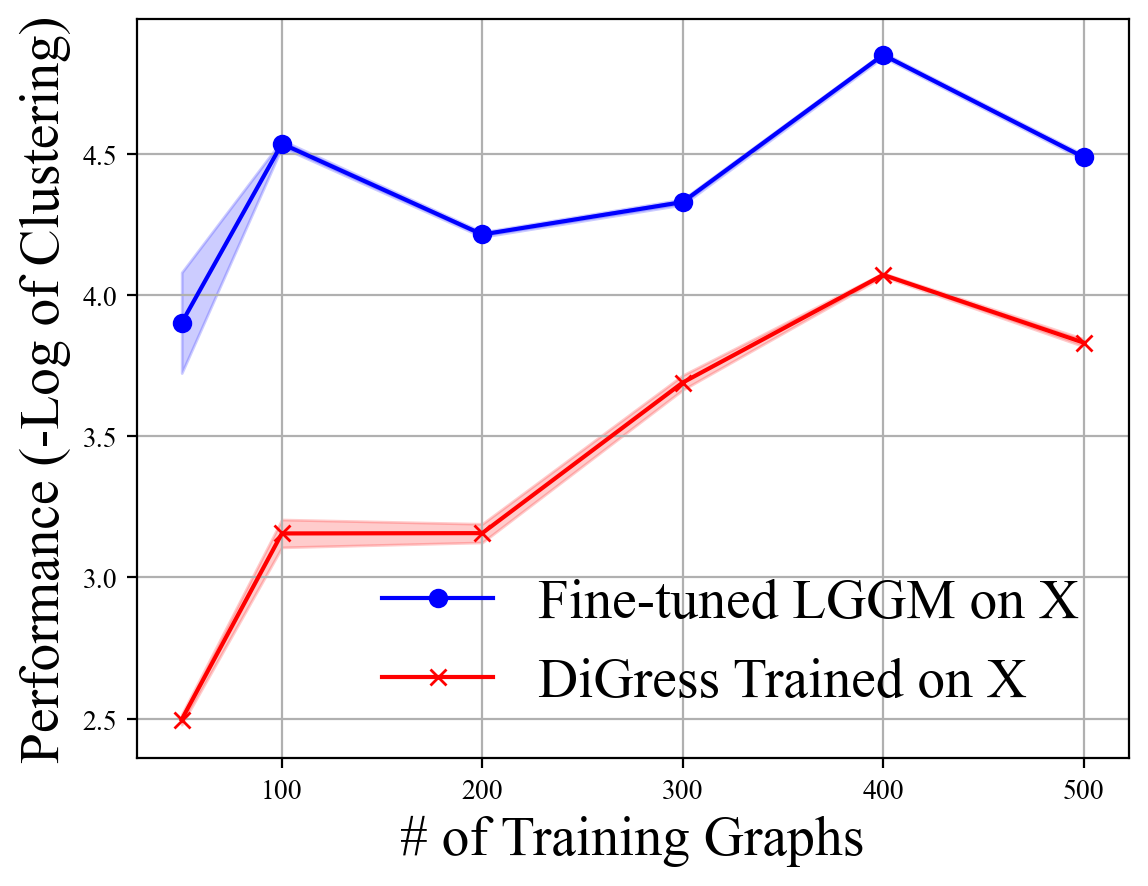}
         \caption{Retweet-CC}
         \label{fig-ft-varying-number-rt-cc-uniform}
     \end{subfigure}
     \hfill
     \begin{subfigure}[b]{0.24\textwidth}
         \centering
         \includegraphics[width=\textwidth]{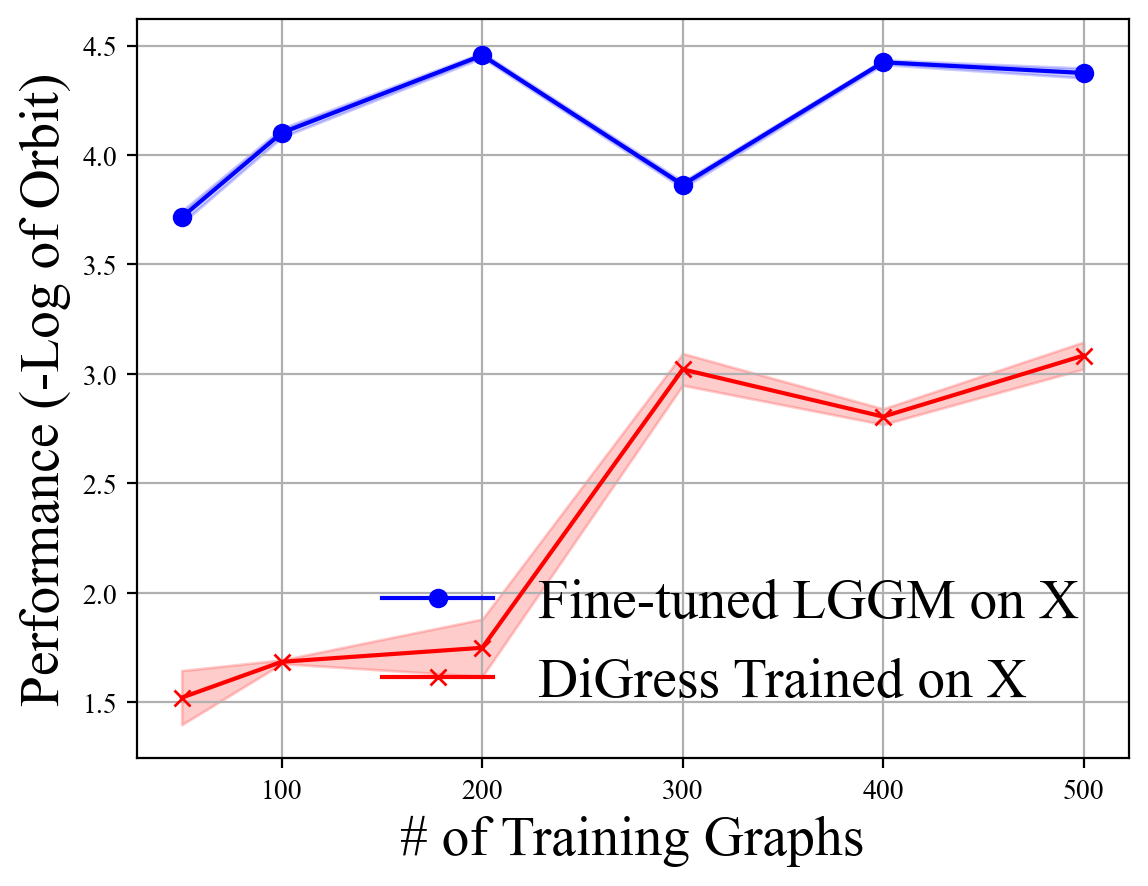}
         \caption{Retweet-Orb}
         \label{fig-ft-varying-number-rt-orb-uniform}
     \end{subfigure}
     \hfill
     \begin{subfigure}[b]{0.24\textwidth}
         \centering
         \includegraphics[width=\textwidth]{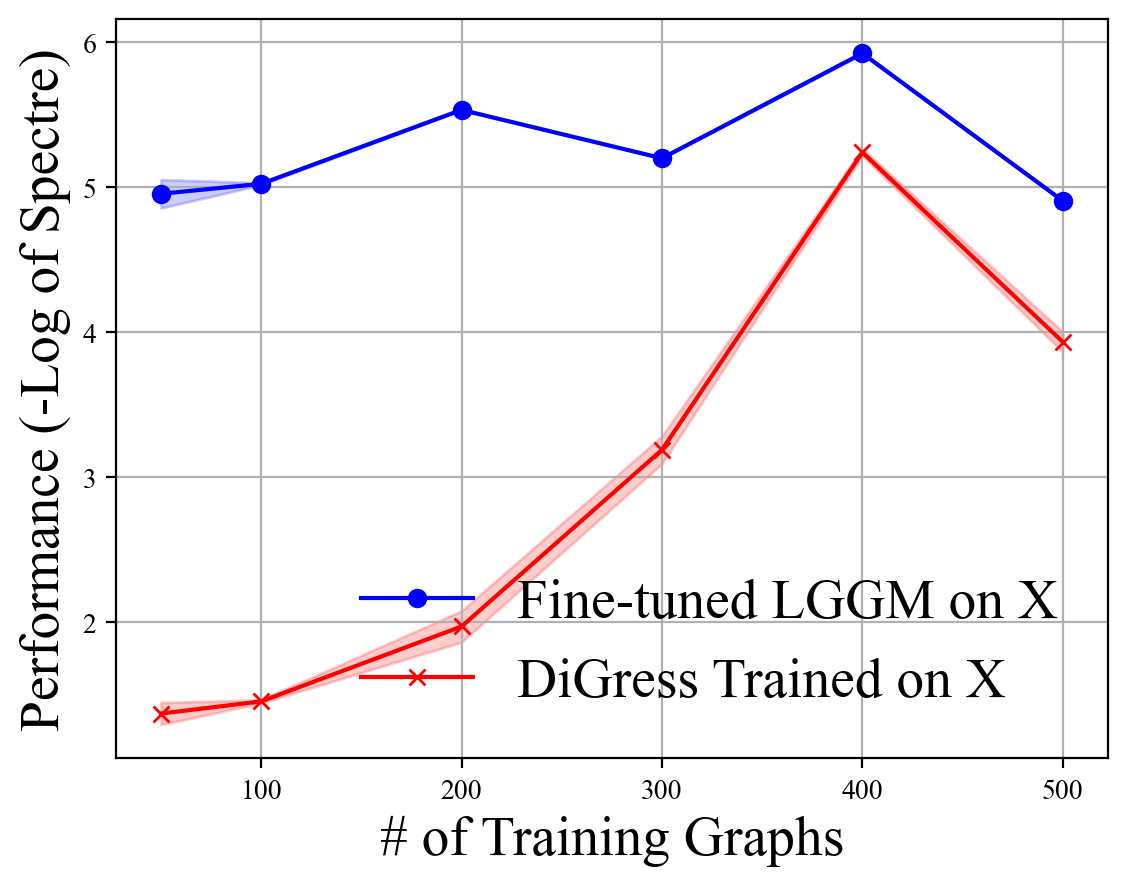}
         \caption{Retweet-Spec}
         \label{fig-ft-varying-number-rt-spec-uniform}
     \end{subfigure}
    \caption{Effect of Number of Training Graphs on Retweet Networks.}
    \label{fig-ft-varyingnumber-rt-uniform}
\end{figure*}

\begin{figure*}[htbp!]
     \centering
     \begin{subfigure}[b]{0.24\textwidth}
         \centering
         \includegraphics[width=\textwidth]{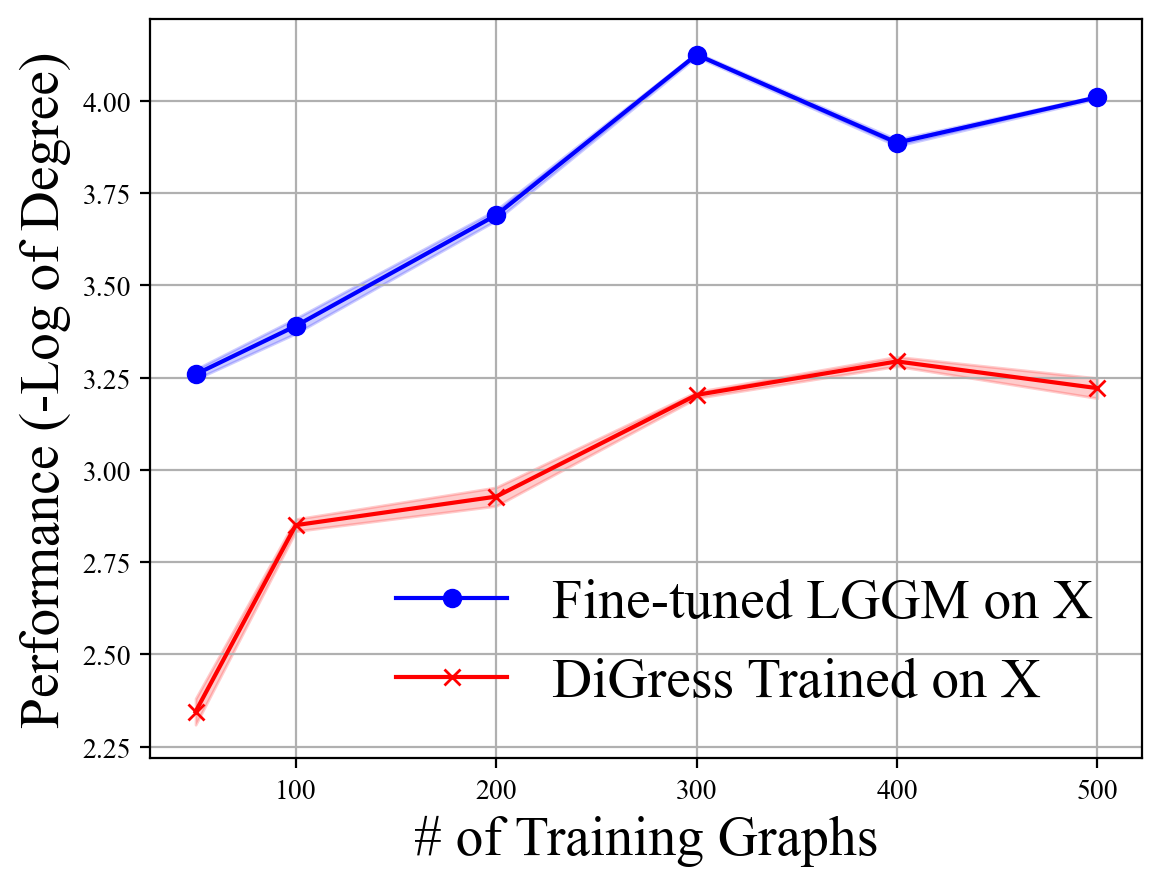}
         \caption{Email-DEG}
         \label{fig-ft-varying-number-email-deg-uniform}
     \end{subfigure}
     \hfill
     \begin{subfigure}[b]{0.24\textwidth}
         \centering
         \includegraphics[width=\textwidth]{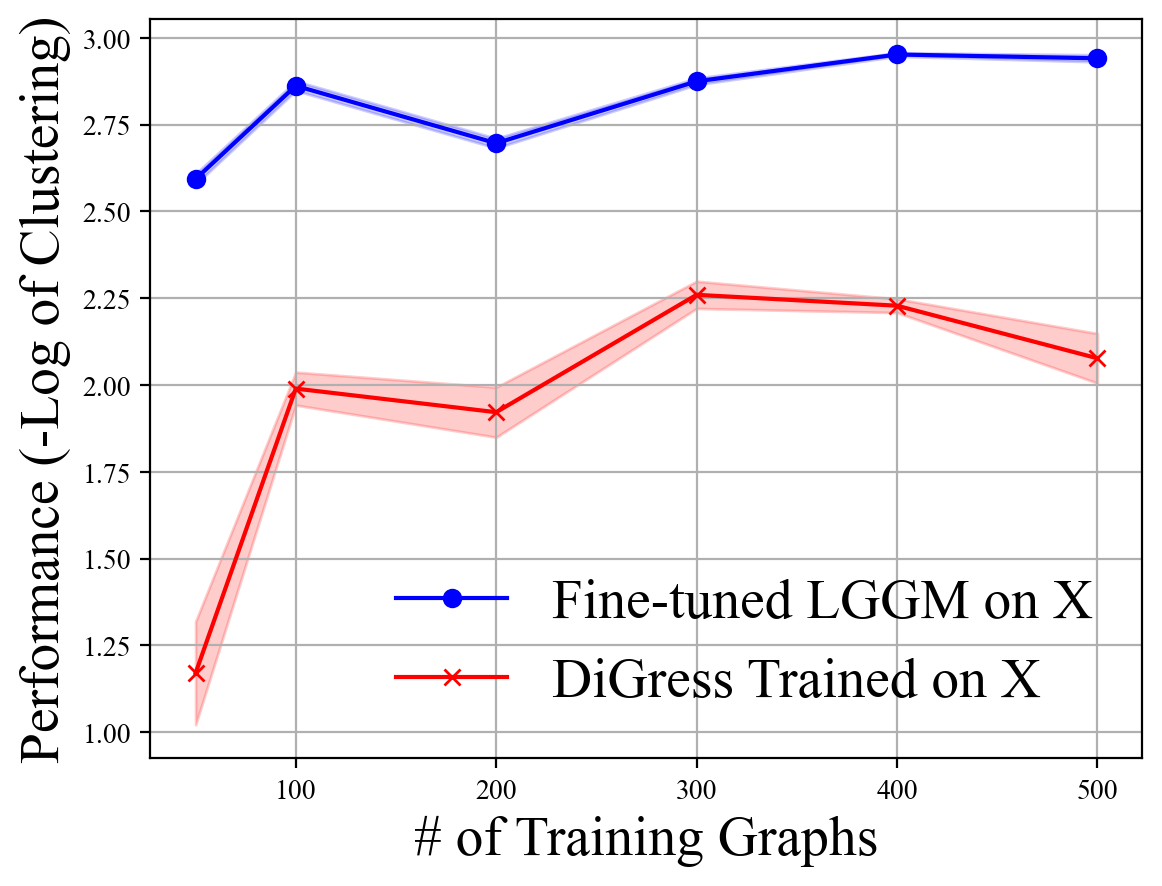}
         \caption{Email-CC}
         \label{fig-ft-varying-number-email-cc-uniform}
     \end{subfigure}
     \hfill
     \begin{subfigure}[b]{0.24\textwidth}
         \centering
         \includegraphics[width=\textwidth]{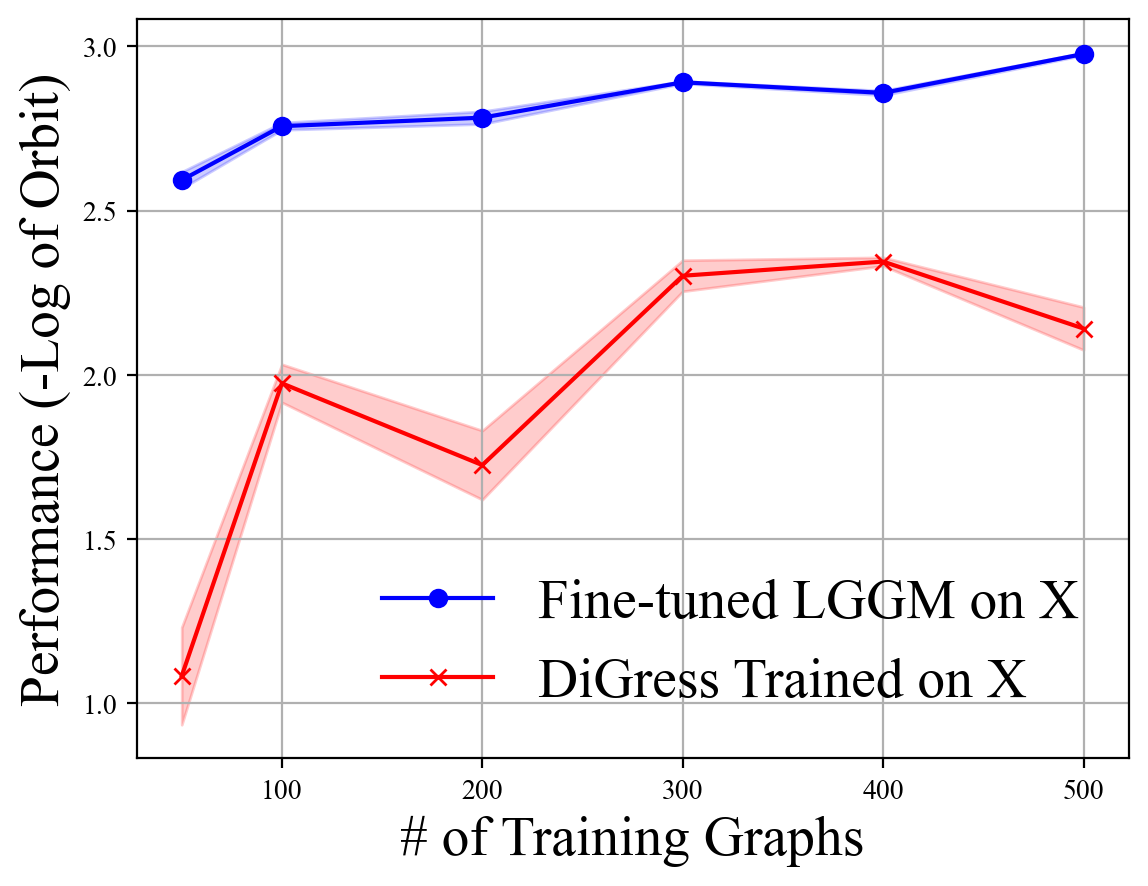}
         \caption{Email-Orb}
         \label{fig-ft-varying-number-email-orb-uniform}
     \end{subfigure}
     \hfill
     \begin{subfigure}[b]{0.24\textwidth}
         \centering
         \includegraphics[width=\textwidth]{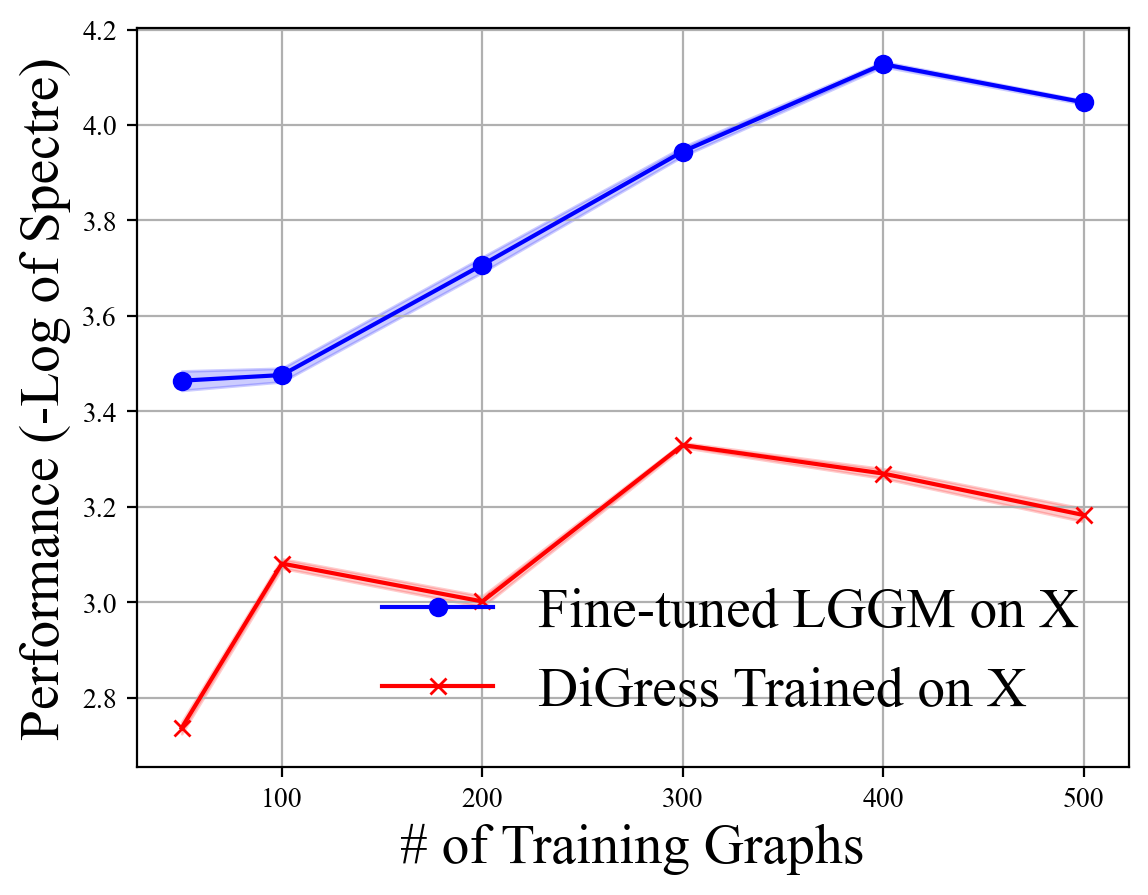}
         \caption{Email-Spec}
         \label{fig-ft-varying-number-email-spec-uniform}
     \end{subfigure}
    \caption{Effect of Number of Training Graphs on Email Networks.}
    \label{fig-ft-varyingnumber-email-uniform}
\end{figure*}

\begin{figure*}[htbp!]
     \centering
     \begin{subfigure}[b]{0.24\textwidth}
         \centering
         \includegraphics[width=\textwidth]{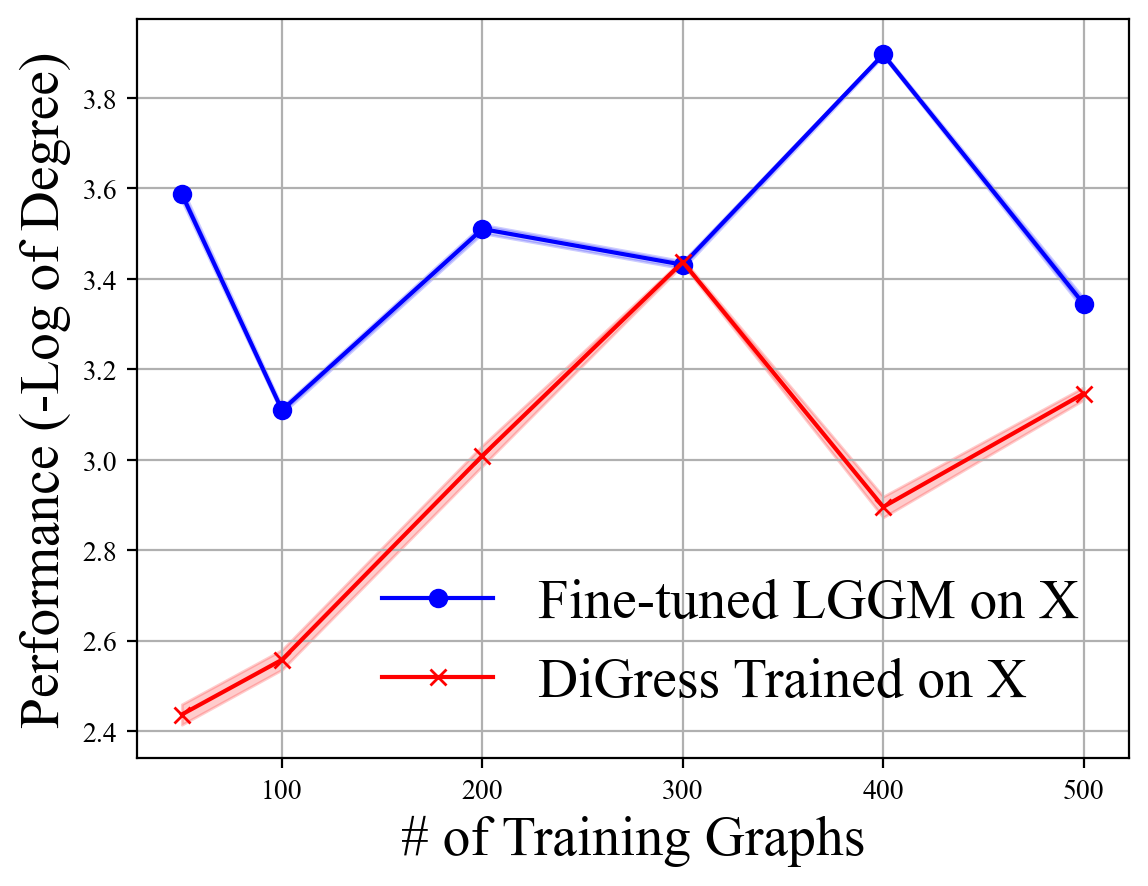}
         \caption{Web-DEG}
         \label{fig-ft-varying-number-web-deg-uniform}
     \end{subfigure}
     \hfill
     \begin{subfigure}[b]{0.24\textwidth}
         \centering
         \includegraphics[width=\textwidth]{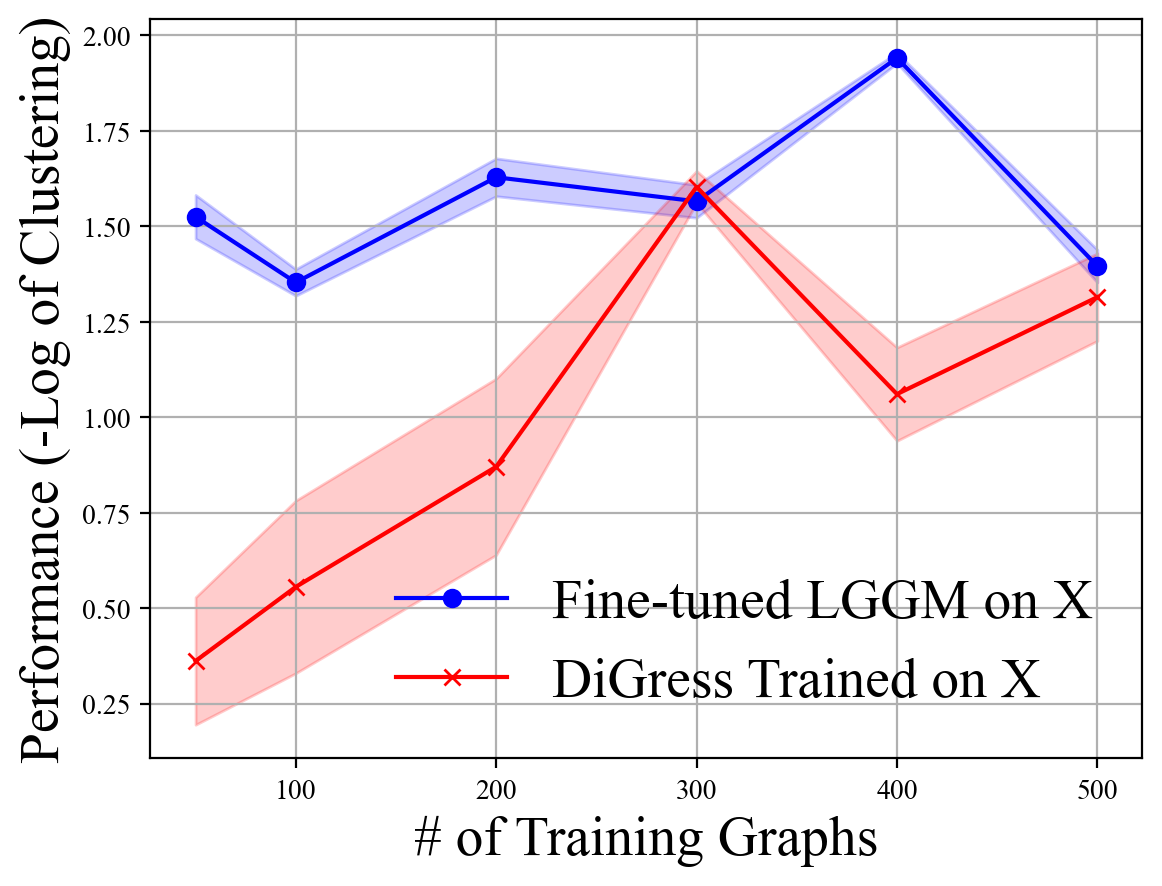}
         \caption{Web-CC}
         \label{fig-ft-varying-number-web-cc-uniform}
     \end{subfigure}
     \hfill
     \begin{subfigure}[b]{0.24\textwidth}
         \centering
         \includegraphics[width=\textwidth]{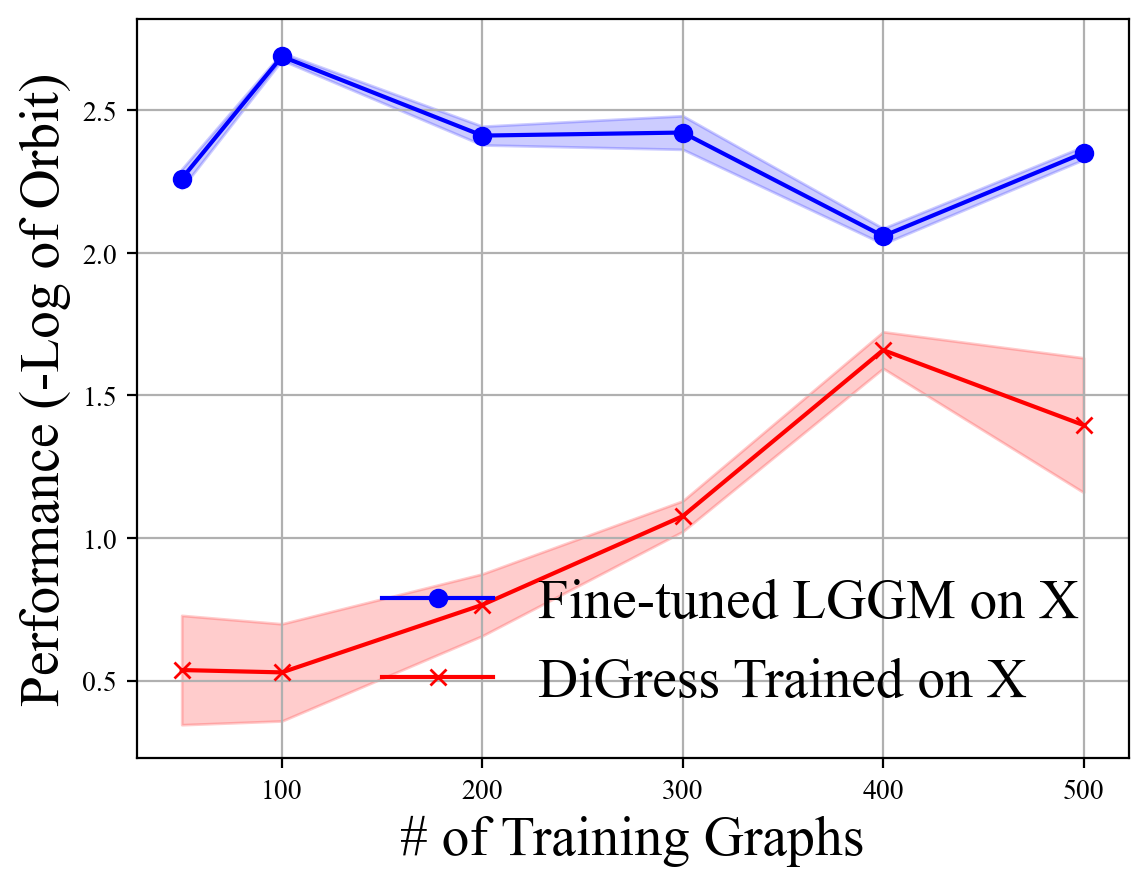}
         \caption{Web-Orb}
         \label{fig-ft-varying-number-web-orb-uniform}
     \end{subfigure}
     \hfill
     \begin{subfigure}[b]{0.24\textwidth}
         \centering
         \includegraphics[width=\textwidth]{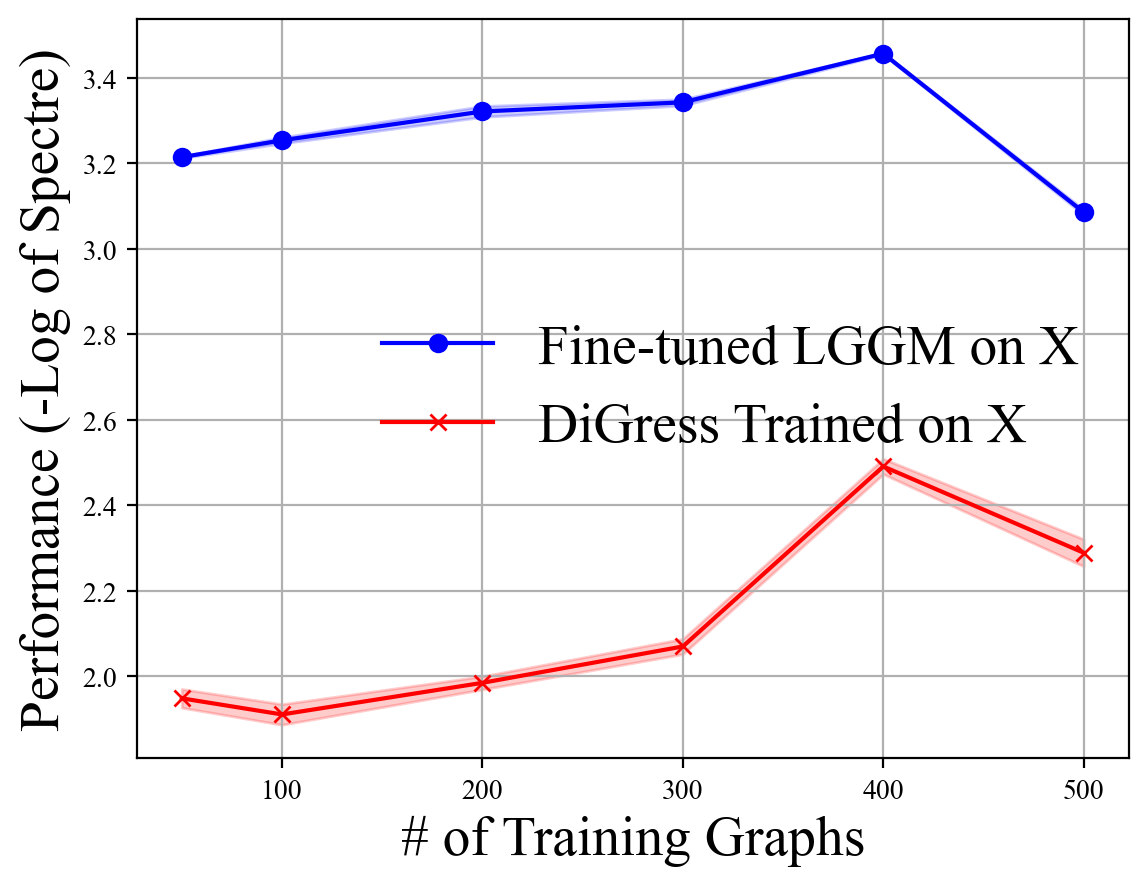}
         \caption{Web-Spec}
         \label{fig-ft-varying-number-web-spec-uniform}
     \end{subfigure}
    \caption{Effect of Number of Training Graphs on Web Graphs.}
    \label{fig-ft-varyingnumber-web-uniform}
\end{figure*}

\begin{figure*}[htbp!]
     \centering
     \begin{subfigure}[b]{0.24\textwidth}
         \centering
         \includegraphics[width=\textwidth]{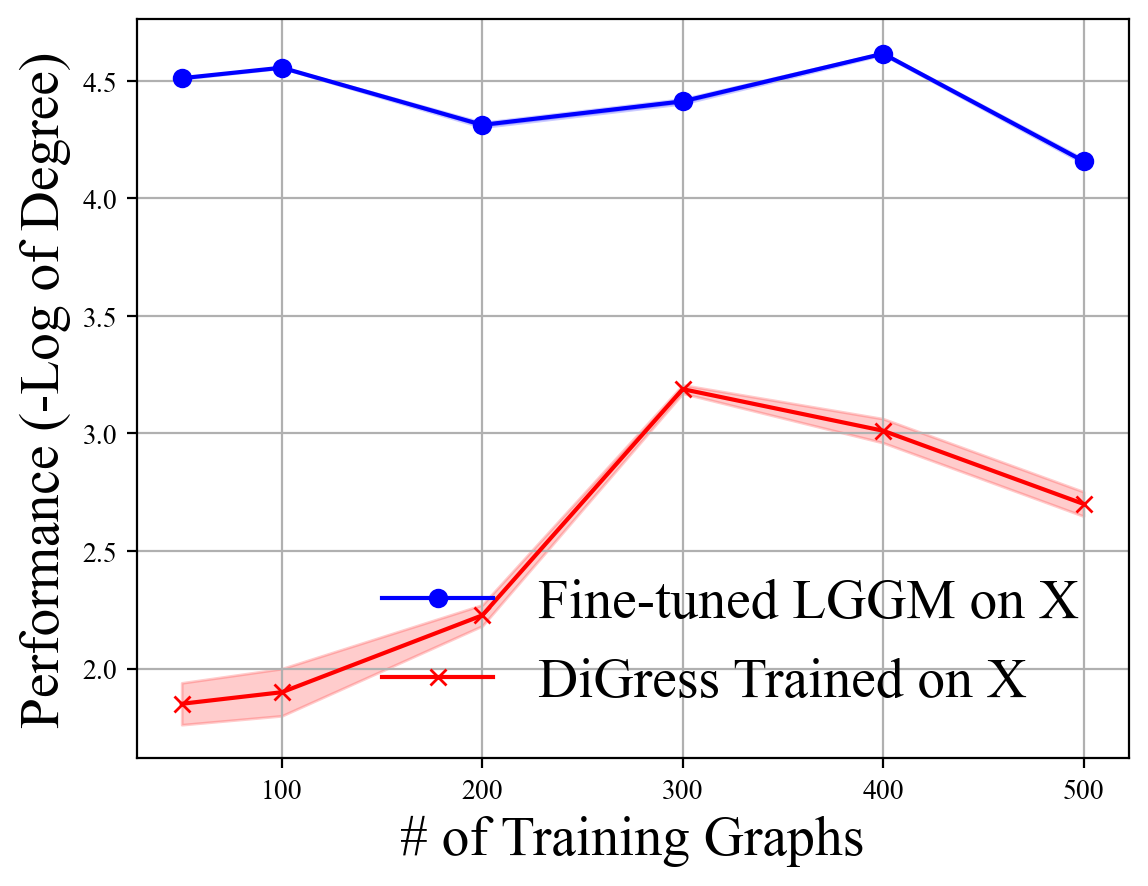}
         \caption{Facebook-DEG}
         \label{fig-ft-varying-number-fb-deg-uniform}
     \end{subfigure}
     \hfill
     \begin{subfigure}[b]{0.24\textwidth}
         \centering
         \includegraphics[width=\textwidth]{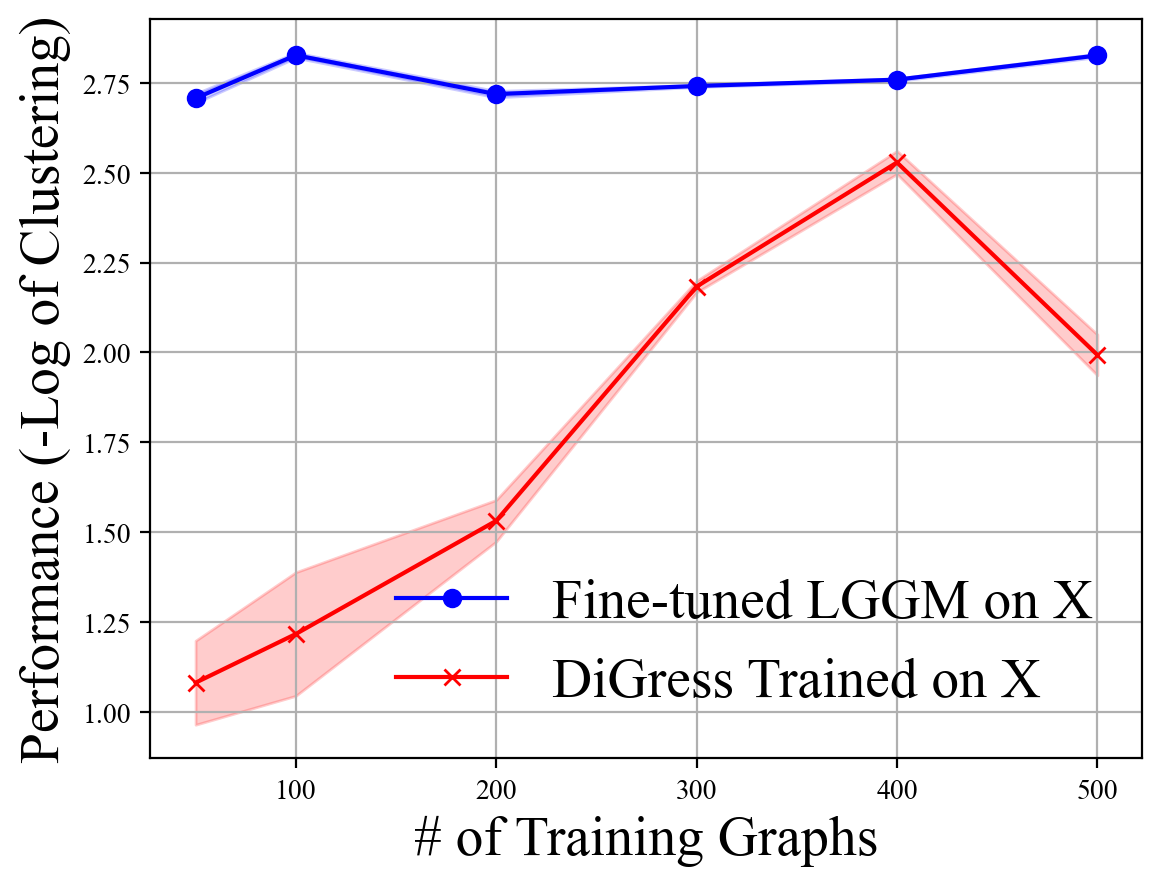}
         \caption{Facebook-CC}
         \label{fig-ft-varying-number-fb-cc-uniform}
     \end{subfigure}
     \hfill
     \begin{subfigure}[b]{0.24\textwidth}
         \centering
         \includegraphics[width=\textwidth]{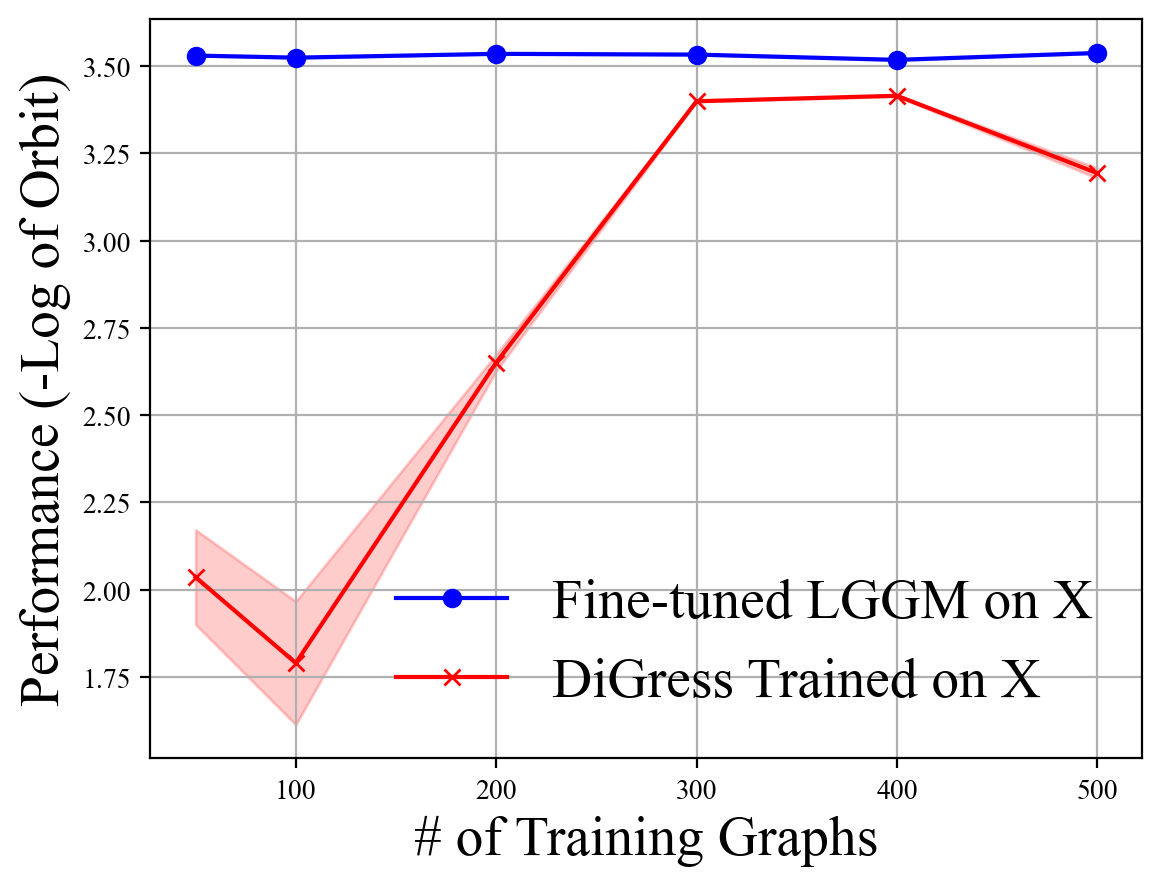}
         \caption{Facebook-Orb}
         \label{fig-ft-varying-number-fb-orb-uniform}
     \end{subfigure}
     \hfill
     \begin{subfigure}[b]{0.24\textwidth}
         \centering
         \includegraphics[width=\textwidth]{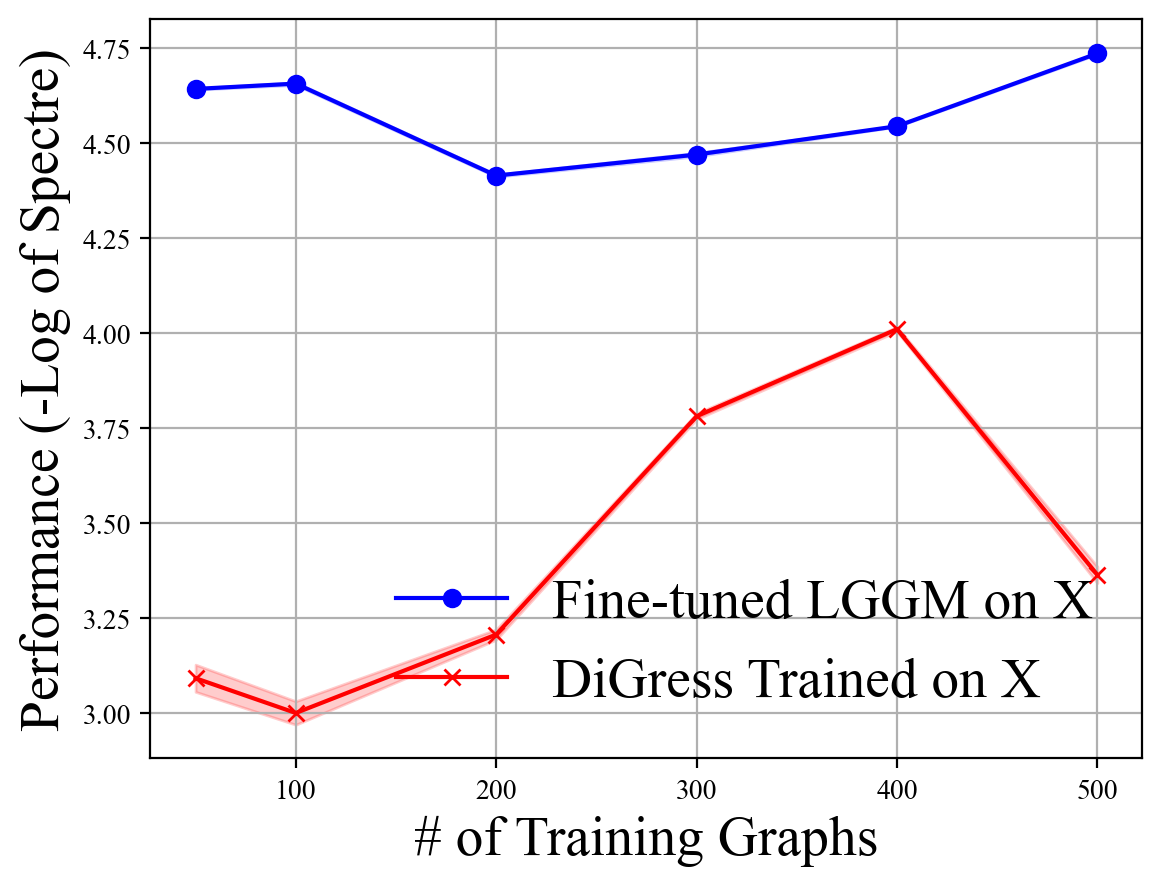}
         \caption{Facebook-Spec}
         \label{fig-ft-varying-number-fb-spec-uniform}
     \end{subfigure}
    \caption{Effect of Number of Training Graphs on Facebook Networks.}
    \label{fig-ft-varyingnumber-fb-uniform}
\end{figure*}

\subsection{Comparing the Domain as the Textual Condition before/after shuffling}\label{app-text2graph-shuffle}
\subsubsection{Domain Specific Transition}
\begin{table*}[htbp!]
\scriptsize
\setlength\tabcolsep{4.5pt}
\caption{Comparing the performance of graph generation between LGGM trained on graphs from all domains with and without domain/name as textual conditions under domain-specific transition.}
\label{app-table-text2graph-shuffle-marginal}
\centering
\begin{tabular}{llcccc|llcccc}
\toprule
\textbf{Domain} & \textbf{Method} & \textbf{DEG} & \textbf{CC} & \textbf{Spec} & \textbf{Orb} & \textbf{Domain} & \textbf{Method} & \textbf{DEG} & \textbf{CC} & \textbf{Spec} & \textbf{Orb} \\
\toprule
\multirow{2}{*}{\textsc{FB}} & LGGM-T2G$^\text{D}$ & \textbf{0.1533} & \textbf{0.1894} & \textbf{0.0817} & \textbf{0.0492} & \multirow{2}{*}{\textsc{BIO}} & LGGM-T2G$^\text{D}$ & \textbf{0.1313} & \textbf{0.5111} & \textbf{0.1340} & \textbf{0.3736}\\
& LGGM-T2G$^\text{D}$* & 0.2323 & 0.2618 & 0.1590 & 0.0923 &  & LGGM-T2G$^\text{D}$* & 0.1762 & 0.5887 & 0.1460 & 0.4929\\
\midrule
\multirow{2}{*}{\textsc{ASN}} & LGGM-T2G$^\text{D}$ & \textbf{0.0429} & \textbf{0.4742} & \textbf{0.0949} & \textbf{0.0401} & \multirow{2}{*}{\textsc{ECON}} & LGGM-T2G$^\text{D}$ & 0.2346 & \textbf{0.1572} & \textbf{0.1550} & \textbf{0.0579}\\
& LGGM-T2G$^\text{D}$* & 0.0891 & 0.5725 & 0.1446 & 0.0610 & & LGGM-T2G$^\text{D}$* & \textbf{0.2029} & 0.3393 & 0.2298 & 0.0579\\
\midrule
\multirow{2}{*}{\textsc{Email}} & LGGM-T2G$^\text{D}$ & \textbf{0.0874} & \textbf{0.3238} & \textbf{0.1472} & \textbf{0.2869} & \multirow{2}{*}{\textsc{RT}} & LGGM-T2G$^\text{D}$ & \textbf{0.0050} & \textbf{0.0940} & \textbf{0.0415} & \textbf{0.2870}\\
& LGGM-T2G$^\text{D}$* & 0.2169 & 0.7497 & 0.2825 & 0.8397 & & LGGM-T2G$^\text{D}$* & 0.0240 & 0.1023 & 0.1374 & 0.4123\\
\midrule
\multirow{2}{*}{\textsc{Web}} & LGGM-T2G$^\text{D}$ & \textbf{0.1253} & \textbf{0.9088} & \textbf{0.1156} & \textbf{0.3884} & \multirow{2}{*}{\textsc{Col}} & LGGM-T2G$^\text{D}$ & \textbf{0.1301} & \textbf{0.9384} & \textbf{0.1963} & \textbf{0.2032}\\
& LGGM-T2G$^\text{D}$* & 0.1464 & 0.9776 & 0.1460 & 0.4211 & & LGGM-T2G$^\text{D}$* & 0.1529 & 0.9684 & 0.2313 & 0.2089\\
\midrule
\multirow{2}{*}{\textsc{ROAD}} & LGGM-T2G$^\text{D}$ & \textbf{0.0112} & \textbf{0.1611} & \textbf{0.0298} & \textbf{0.0120} & \multirow{2}{*}{\textsc{Eco}} & LGGM-T2G$^\text{D}$ & \textbf{0.0575} & \textbf{0.2976} & \textbf{0.0585} & 0.2580\\
& LGGM-T2G$^\text{D}$* & 0.0365 & 0.2430 & 0.0605 & 0.0500 & & LGGM-T2G$^\text{D}$* & 0.1964 & 0.3330 & 0.1438 & \textbf{0.2574}\\
\midrule
\multirow{2}{*}{\textsc{Power}} & LGGM-T2G$^\text{D}$ & \textbf{0.0194} & \textbf{0.6031} & \textbf{0.0286} & \textbf{0.0193} & \multirow{2}{*}{\textsc{Citation}} & LGGM-T2G$^\text{D}$ & 0.1636 & \textbf{0.8868} & 0.2036 & 0.6142\\
& LGGM-T2G$^\text{D}$* & 0.0434 & 0.6721 & 0.0626 & 0.0231 & & LGGM-T2G$^\text{D}$* & \textbf{0.1615} & 0.9553 & \textbf{0.1903} & \textbf{0.6078}\\
\midrule
\multirow{2}{*}{\textsc{All}} & LGGM-T2G$^\text{D}$ & \textbf{0.0968} & \textbf{0.4621} & \textbf{0.1072} & \textbf{0.2158} & & & & & \\
& LGGM-T2G$^\text{D}$* & 0.1399 & 0.5636 & 0.1611 & 0.2937 & & & & & \\
\bottomrule
\end{tabular}

\end{table*}

\subsubsection{Uniform Transition}

\begin{table*}[htbp!]
\scriptsize
\setlength\tabcolsep{4.5pt}
\caption{Comparing the performance of graph generation between LGGM trained on graphs from all domains with and without domain/name as textual conditions under uniform transition strategy.}
\label{app-table-text2graph-shuffle-uniform}
\centering
\begin{tabular}{llcccc|llcccc}
\toprule
\textbf{Domain} & \textbf{Method} & \textbf{DEG} & \textbf{CC} & \textbf{Spec} & \textbf{Orb} & \textbf{Domain} & \textbf{Method} & \textbf{DEG} & \textbf{CC} & \textbf{Spec} & \textbf{Orb} \\
\toprule
\multirow{2}{*}{\textsc{FB}} & LGGM-T2G$^\text{D}$ & \textbf{0.1561} & \textbf{0.1639} & \textbf{0.0924} & \textbf{0.0417} & \multirow{2}{*}{\textsc{BIO}} & LGGM-T2G$^\text{D}$ & \textbf{0.0099} & \textbf{0.1286} & \textbf{0.0303} & \textbf{0.1366}\\
& LGGM-T2G$^\text{D}$* & 0.3018 & 0.4207 & 0.2069 & 0.2622 &  & LGGM-T2G$^\text{D}$* & 0.0754 & 0.2889 & 0.0881 & 0.2783\\
\midrule
\multirow{2}{*}{\textsc{ASN}} & LGGM-T2G$^\text{D}$ & \textbf{0.0318} & 0.2821 & \textbf{0.0606} & \textbf{0.0631} & \multirow{2}{*}{\textsc{ECON}} & LGGM-T2G$^\text{D}$ & \textbf{0.0665} & \textbf{0.0594} & \textbf{0.0650} & \textbf{0.0586}\\
& LGGM-T2G$^\text{D}$* & 0.0637 & \textbf{0.1561} & 0.1416 & 0.2351 & & LGGM-T2G$^\text{D}$* & 0.1035 & 0.0736 & 0.0971 & 0.0922\\
\midrule
\multirow{2}{*}{\textsc{Email}} & LGGM-T2G$^\text{D}$ & \textbf{0.0469} & \textbf{0.0982} & \textbf{0.0484} & \textbf{0.0505} & \multirow{2}{*}{\textsc{RT}} & LGGM-T2G$^\text{D}$ & \textbf{0.0468} & \textbf{0.0955} & \textbf{0.0729} & \textbf{0.0393}\\
& LGGM-T2G$^\text{D}$* & 0.1107 & 0.2322 & 0.1315 & 0.1692 & & LGGM-T2G$^\text{D}$* & 0.1399 & 0.3913 & 0.2441 & 0.2497\\
\midrule
\multirow{2}{*}{\textsc{Web}} & LGGM-T2G$^\text{D}$ & \textbf{0.0255} & \textbf{0.0737} & \textbf{0.0354} & \textbf{0.1856} & \multirow{2}{*}{\textsc{Col}} & LGGM-T2G$^\text{D}$ & \textbf{0.0395} & \textbf{0.3110} & \textbf{0.1146} & \textbf{0.1823}\\
& LGGM-T2G$^\text{D}$* & 0.0485 & 0.0830 & 0.1340 & 0.2669 & & LGGM-T2G$^\text{D}$* & 0.0323 & 0.4972 & 0.1159 & 0.5375\\
\midrule
\multirow{2}{*}{\textsc{ROAD}} & LGGM-T2G$^\text{D}$ & \textbf{0.0088} & 0.1225 & \textbf{0.0399} & \textbf{0.0155} & \multirow{2}{*}{\textsc{Eco}} & LGGM-T2G$^\text{D}$ & \textbf{0.2160} & \textbf{0.2917} & \textbf{0.1203} & \textbf{0.2569}\\
& LGGM-T2G$^\text{D}$* & 0.0453 & \textbf{0.1005} & 0.1257 & 0.3803 & & LGGM-T2G$^\text{D}$* & 0.3722 & 0.3210 & 0.2226 & 0.2771\\
\midrule
\multirow{2}{*}{\textsc{Power}} & LGGM-T2G$^\text{D}$ & \textbf{0.0162} & \textbf{0.1131} & \textbf{0.0479} & \textbf{0.1786} & \multirow{2}{*}{\textsc{Citation}} & LGGM-T2G$^\text{D}$ & \textbf{0.0101} & \textbf{0.1025} & \textbf{0.0315} & \textbf{0.0651}\\
& LGGM-T2G$^\text{D}$* & 0.0225 & 0.1533 & 0.1264 & 0.2957 & & LGGM-T2G$^\text{D}$* & 0.0375 & 0.2454 & 0.0699 & 0.1363\\
\midrule
\multirow{2}{*}{\textsc{All}} & LGGM-T2G$^\text{D}$ & \textbf{0.0562} & \textbf{0.1535} & \textbf{0.0633} & \textbf{0.1061} & & & & & \\
& LGGM-T2G$^\text{D}$* & 0.1128 & 0.2469 & 0.1420 & 0.2650 & & & & & \\
\bottomrule
\end{tabular}

\end{table*}


\end{document}